\definecolor{lightblue}{rgb}{0.8,0.9,1.0}
\newcommand{\hilight}[1]{\colorbox{lightblue}{#1}}
\newtheorem{theorem}{Theorem}
\newtheorem{proposition}{Proposition}
\newtheorem{definition}{Definition}
\newcommand{\sset}{\mathcal{S}}
\newcommand{\aset}{\mathcal{A}}
\newcommand{\pieval}{{\pi_e}}
\newcommand{\saset}{{\mathcal{X}}}
\def\eqref#1{equation~\ref{#1}}
\def\1{\bm{1}}
\def\eps{{\epsilon}}
\DeclareMathAlphabet{\mathsfit}{\encodingdefault}{\sfdefault}{m}{sl}
\SetMathAlphabet{\mathsfit}{bold}{\encodingdefault}{\sfdefault}{bx}{n}
\newcommand{\E}{\mathbb{E}}
\newcommand{\R}{\mathbb{R}}
\DeclareMathOperator*{\argmax}{arg\,max}
\DeclareMathOperator*{\argmin}{arg\,min}
\global\long\def\onevec{\mathbf{1}}%
\global\long\def\zerovec{\mathbf{0}}%
\newcommand{\real}{\mathbb{R}}
\icmltitlerunning{Stable Offline Value Function Learning with Bisimulation-based Representations}
\begin{document}

\twocolumn[
\icmltitle{Stable Offline Value Function Learning with Bisimulation-based Representations}

\icmlsetsymbol{equal}{*}

\begin{icmlauthorlist}
\icmlauthor{Brahma S. Pavse}{uw}
\icmlauthor{Yudong Chen}{uw}
\icmlauthor{Qiaomin Xie}{uw}
\icmlauthor{Josiah P. Hanna}{uw}
\end{icmlauthorlist}

\icmlaffiliation{uw}{University of Wisconsin -- Madison}

\icmlcorrespondingauthor{Brahma S. Pavse}{pavse@wisc.edu}

\icmlkeywords{Machine Learning, ICML}

\vskip 0.3in
]

\printAffiliationsAndNotice{} %

\begin{abstract}
In reinforcement learning, offline value function learning is the procedure of using an offline dataset to estimate the expected discounted return from each state when taking actions according to a fixed target policy. The stability of this procedure, i.e., whether it converges to its fixed-point, critically depends on the representations of the state-action pairs. Poorly learned representations can make value function learning unstable, or even divergent. Therefore, it is critical to stabilize value function learning by explicitly shaping the state-action representations. Recently, the class of bisimulation-based algorithms have shown promise in shaping representations for control. However, it is still unclear if this class of methods can \emph{stabilize} value function learning. In this work, we investigate this question and answer it affirmatively. We introduce a bisimulation-based algorithm called kernel representations for offline policy evaluation (\textsc{krope}). \textsc{krope} uses a kernel to shape state-action representations such that state-action pairs that have similar immediate rewards and lead to similar next state-action pairs under the target policy also have similar representations. We show that \textsc{krope}: 1) learns stable representations and 2) leads to lower value error than baselines. Our analysis provides new theoretical insight into the stability properties of bisimulation-based methods and suggests that practitioners can use these methods to improve the stability and accuracy of offline evaluation of reinforcement learning agents.
\end{abstract}

\section{Introduction}
\label{sec:intro}

Learning the value function of a policy is a critical component of many reinforcement learning  (\textsc{rl}) algorithms \citep{sutton_rlbook_2018}. While value function learning algorithms such as temporal-difference learning (\textsc{td}) have been successful, they can be unreliable. In particular, the deadly triad, i.e., the combination of off-policy updates, function approximation, and bootstrapping, can make \textsc{td}-based methods diverge \citep{sutton_rlbook_2018, vanroy_counter_1997, Baird1995ResidualAR, Hasselt2018DeepRL}. Function approximation is a critical component of value function learning since it determines the representations of state-action pairs, which in turn defines the space of expressible value functions. Depending on how this value function space is represented, value function learning algorithms may diverge \citep{bellemare_2020_stab}. That is, the value function learning algorithm may not converge to its fixed-point, or may even diverge away from it. In this work, we explicitly learn state-action representations to improve the stability of offline value function learning.

\begin{figure}[H]
  \centering
  {\includegraphics[scale=0.4]{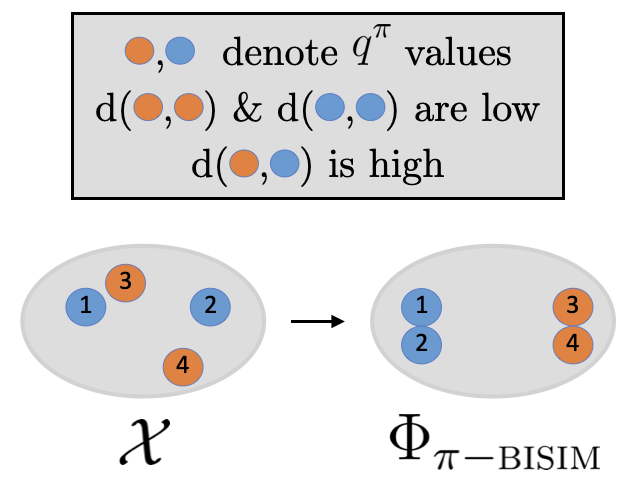}}
    \caption{\footnotesize The figure illustrates the native state-action representations $\mathcal{X}$ and $\pi$-bisimulation representations $\Phi_{\pi-\textsc{bisim}}$. $\pi$-bisimulation algorithms use a distance function $d$ that captures differences between state-action pairs based on immediate rewards and differences of next state-action pairs under $\pi$ to shape their representations. Ultimately, the goal of $\pi$-bisimulation methods is to learn representations such that state-actions pairs with similar action-values under $\pi$ also have similar representations. The function $d$ outputs low values within the {\color{blue} blue} (and {\color{orange} orange}) state-actions but high values between {\color{blue} blue} and {\color{orange} orange} state-actions. Therefore, the {\color{blue} blue} (and {\color{orange} orange}) state-actions have similar representations, but different representations between the distinct colors.}
        \label{fig:visual_intuition}
\end{figure}

In seeking such representations, we turn to $\pi$-bisimulation algorithms. These algorithms define a metric to capture \emph{behavioral similarity} between state-action pairs such that similarity is based on immediate rewards received and the similarity of next state-action pairs visited by $\pi$ \citep{castro_scalable_2019}. The algorithms then use this metric to learn representations such that state-action pairs that are similar under this metric have similar representations. Ultimately, the goal of $\pi$-bisimulation methods is to learn representations such that state-actions pairs with similar values under $\pi$ also have similar representations (see Figure~\ref{fig:visual_intuition}). While these algorithms have shown promise in improving the expected return of \textsc{rl} algorithms, it remains unclear whether they contribute to stability \citep{castro_kernel_2023, zhang_learning_2021, castro_mico_2022}. In this paper, we aim to understand whether the $\pi$-bisimulation-based representations stabilize value function learning.

In this work, we focus on offline value function learning. Given a fixed, offline dataset generated by unknown and possibly multiple behavior policies, the goal is to estimate the value function of a fixed, target policy. Towards establishing the stability properties of $\pi$-bisimulation representations, we introduce kernel representations for offline policy evaluation (\textsc{krope}). \textsc{krope} defines a kernel that captures similarity between state-action pairs based on immediate rewards received and similarity of next state-action pairs under the \emph{target} policy. It then shapes the state-action representations such that state-action pairs that are similar according to this kernel have similar representations. We use \textsc{krope} as the representative algorithm for the class of bisimulation-based representation learning algorithms to investigate the following question:

\begin{center}
    \textbf{\textit{Can bisimulation-based representation learning stabilize offline value function learning?}}
\end{center}
Through theoretical and empirical analysis, we answer this question affirmatively and make the following contributions:

\begin{enumerate}
    \item We introduce kernel representations for offline policy evaluation (\textsc{krope}) for stable and accurate offline value function learning (Section~\ref{sec:krope}).
    \item We prove that \textsc{krope}'s representations stabilize least-squares policy evaluation (\textsc{lspe}), a popular value function learning algorithm (Sections~\ref{sec:krope_fixed_rep}).
    \item We prove  that \textsc{krope} representations are Bellman complete, another indication of stability (Sections~\ref{sec:krope_bc}).
    \item We empirically validate that \textsc{krope} representations improve the stability and accuracy of offline value function learning algorithms on $10/13$ offline datasets and against $7$ baselines (Section~\ref{sec:emp}).
    \item We empirically analyze the sensitivity of the \textsc{krope} learning procedure under the deadly triad. These experiments shed light on when representation learning may be easier than value function learning  (Section~\ref{app:exp_krope_div}).
\end{enumerate}

\section{Background}
\label{sec:background}
In this section, we present our problem setup and discuss prior work.

\subsection{Problem Setup and Notation}
    We consider the infinite-horizon
    Markov decision process (\textsc{mdp}) framework \citep{puterman_mdp_2014}, $\mathcal{M} = \langle \sset, \aset, r, P, \gamma , d_0\rangle$,
    where $\sset$ is the state space, $\aset$ is the action space,
    $r:\sset \times\aset \to [-1, 1]$ is the deterministic reward function,
    $P:\sset\times\aset\to\Delta(\sset)$ is the transition dynamics function, $\gamma\in[0,1)$ is the discount factor, and $d_0\in \Delta(\sset)$ is the initial state distribution, where $\Delta(X)$ represents the set of all probability distributions over a set $X$. We refer to the joint state-action space as $\saset := \sset\times\aset$. The
    agent acting according to policy $\pi:\sset\to\Delta(\aset)$ in the \textsc{mdp} generates a trajectory: $S_0, A_0, R_0, S_1, A_1, R_1, ...$, where $S_0\sim d_0$, $A_t\sim\pi(\cdot|S_t)$, 
    $R_t := r(S_t, A_t)$, and $S_{t+1}\sim P(\cdot|S_t, A_t)$ for
    $t\geq0$.

    We define the action-value function of a policy $\pi$ for a given state-action pair as $q^\pi(s,a) := \mathbb{E}_{\pi}[\sum_{t=0}^\infty  \gamma^t r(S_t, A_t) | S_0 = s, A_0 = a]$, i.e., the expected discounted return when starting from state $s$ with initial action $a$ and then following policy $\pi$. The Bellman evaluation operator $\mathcal{T}^{\pi}:\mathbb{R}^{\mathcal{X}}\to \mathbb{R}^{\mathcal{X}}$ is defined as $(\mathcal{T}^{\pi}f)(s,a) := r(s,a) + \gamma \E_{S' \sim P(\cdot | s,a), A' \sim \pi}[f(S',A')], \forall f\in \mathbb{R}^{\mathcal{X}}$. Accordingly, the action-value function satisfies the Bellman equation, i.e., $q(s,a) = r(s,a) + \gamma\E_{P,\pieval}[q(S',A')]$.
    
    It will be convenient to consider the matrix notation equivalents of the above functions. Since a policy $\pi$ induces a Markov chain on $\mathcal{X}$, we can denote the transition matrix of this Markov chain by $P^\pi\in\mathbb{R}^{|\mathcal{X}|\times |\mathcal{X}|}$. Here, each entry $P^\pi(i,j)$ is the probability of transitioning from state-actions $i$ to $j$. Similarly, we have the action-value function $q^{\pi}\in\mathbb{R}^{|\mathcal{X}|}$ and reward vector $r\in\mathbb{R}^{|\mathcal{X}|}$, where the entry $q^\pi(i)$ and $r(i)$ are the expected discounted return from state-action $i$ under $\pi$ and reward received at state-action $i$ respectively.
    
    In this work, we study the representations of the state-action space. We use $\phi: \mathcal{S} \times \mathcal{A} \rightarrow \mathbb{R}^d$ to denote the state-action representations, which maps state-action pairs into a $d$-dimensional Euclidean space. We denote the matrix of all the state-action features as $\Phi \in \mathbb{R}^{|\mathcal{X}|\times d}$, where each row is the state-action feature $\phi(s,a)\in\mathbb{R}^d$ for state-action pair $(s,a)$. When dealing with the offline dataset $\mathcal{D}$, $\Phi$'s dimensions are $|\mathcal{D}|\times\ d$, where $|\mathcal{D}|$ is the number state-actions in the dataset $\mathcal{D}$. Note that $\Phi$ can be the native state-action features of the \textsc{mdp}, or the output of some representation learning algorithm, or the penultimate features of the action-value function when using a neural network. Throughout this paper, we will view $\phi$ as an encoder or state-action abstraction \citep{li_towards_2006}. Note that the state-action abstraction view enables us to view $\phi$ as a state-action aggregator from the space of state-actions $\mathcal{X}$ to the space of state-action groups $\mathcal{X}^\phi$.

\subsection{Offline Policy Evaluation and Value Function Learning}
    In offline policy evaluation (\textsc{ope}), the goal is to evaluate a fixed target policy, $\pieval$,  using a fixed dataset of $m$ transition tuples $\mathcal{D} := \{(s_i, a_i, s_i', r_i)\}_{i=1}^{m}$. In this work, we evaluate $\pieval$ by estimating the \emph{action-value function $q^{\pieval}$} using $\mathcal{D}$.
    Crucially, $\mathcal{D}$ may have been generated by a set of \textit{unknown behavior} policies that are different from $\pieval$, which means that simply averaging the discounted returns in $\mathcal{D}$ will produce an inconsistent estimate of $q^{\pieval}$.
    In our theoretical results, we make the standard coverage assumption that $\forall s \in \sset, \forall a \in \aset$ if $\pieval(a|s) > 0,$ then the state-action pair $(s,a)$ has non-zero probability of appearing in $\mathcal{D}$ \citep{sutton_rlbook_2018,doina_2000_is}.

    We measure the accuracy of the value function estimate with the \textit{mean squared value error} (\textsc{msve}).
    Let $\hat{q}^{\pieval}$ be the estimate returned by a value function learning method using $\mathcal{D}$.
    The \textsc{msve} of this estimate is defined as $\operatorname{\textsc{msve}}[\hat{q}^{\pieval}] \coloneqq \E_{(S, A)\sim\mathcal{D}}[(\hat{q}^{\pieval}(S,A) - q^{\pieval}(S, A))^2]$. In environments with continuous state-action spaces, where it is impossible to compute $q^{\pi_e}$ for all state-actions, we adopt a common evaluation procedure from the \textsc{ope} literature of measuring the \textsc{mse} across only the initial state-action distribution, i.e., $\operatorname{\textsc{mse}}[\hat{q}^{\pieval}] \coloneqq \E_{S_0 \sim d_0, A_0\sim\pieval}[(\hat{q}^{\pieval}(S_0,A_0) - q^{\pieval}(S_0, A_0))^2]$. For this procedure, we assume access to $d_0$~\citep{voloshin_opebench_2021, fu_opebench_2021}. While in practice  $q^{\pieval}$ is unknown, it is standard for the sake of empirical analysis to estimate $q^{\pieval}$ by executing unbiased Monte Carlo rollouts of $\pieval$ or computing $q^\pieval$ exactly using dynamic programming in tabular environments~\citep{voloshin_opebench_2021, fu_opebench_2021}.

\paragraph{Least-Squares Policy Evaluation} Least-squares policy evaluation (\textsc{lspe}) is a value function learning algorithm, which models the action-value function as a linear function: $\hat{q}_{\theta}^{\pieval}(s, a) := \phi(s, a)^{\top}\theta$, where  $\theta\in\mathbb{R}^d$ \citep{Bertsekas_lspe_2003}. \textsc{lspe} iteratively learns $\theta$ with the following updates per iteration step $t$:
\begin{equation}
    \theta_{t+1} \leftarrow (\E_{\mathcal{D}}[\Phi^{\top}\Phi])^{-1} \E_{\mathcal{D},\pieval}[\Phi^{\top}(r + \gamma P^{\pieval}\Phi\theta_{t})],
    \label{eq:lspe_rule}
\end{equation}
where the expectations are taken with respect to the randomness of the dataset $\mathcal{D}$ and $\pieval$.  
Note that $\E[\Phi^{\top}\Phi]$ is the feature covariance matrix. Assuming \textsc{lspe} converges, it will converge to the same fixed-point as \textsc{td}(0) \citep{csaba_2010_rl}, which we denote as $\theta_{\text{\textsc{lspe}}}$. In this work, we follow a two-stage approach to applying \textsc{lspe}: we first obtain the encoder $\phi$ either through representation learning or using the native features of the \textsc{mdp}, and then feed the obtained $\phi$ along with $\mathcal{D}$ and $\pieval$ as input to \textsc{lspe}, which outputs $\hat{q}_{\theta}^{\pieval}$ \citep{Bertsekas_lspe_2003, chang_learning_2022}. This two-stage approach of learning a linear function on top of fixed representations is called the linear evaluation protocol \citep{chang_learning_2022, farebrother2023protovalue, farebrother2024stop, grill2020byol, he2020lineareval}. This protocol enables us to cleanly analyze the nature of the learned representations within the context of well-understood value function learning algorithms such as \textsc{lspe}. In Appendix~\ref{app:background}, we include the pseudo-code for \textsc{lspe}.

\subsection{Stable Representations and $q^\pieval$-Consistency}

We define stability of \textsc{lspe} and related \textsc{td}-methods following \cite{bellemare_2020_stab}:

\begin{definition}[Stability]
\textsc{lspe} is said to be stable if for any initial $\theta_0\in\mathbb{R}^d$, $\lim_{t\to\infty}\theta_t = \theta_{\text{\textsc{lspe}}}$ when $\theta_t$ is updated according to Equation~(\ref{eq:lspe_rule}).
\end{definition}

When determining the stability of \textsc{lspe}, we have the following proposition from prior work:
\begin{proposition}[\citealt{asadi_2024_td, wang_instabilities_2021}]
\textsc{lspe} is stable if and only if the spectral radius of $(\E[\Phi^{\top}\Phi])^{-1}(\gamma\E[\Phi^{\top}P^{\pieval}\Phi])$, i.e., its maximum absolute eigenvalue, is less than $1$.
\label{prop:stability}
\end{proposition}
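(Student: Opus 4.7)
The plan is to recognize the update rule \eqref{eq:lspe_rule} as an affine linear iteration and then appeal to the classical convergence criterion that powers of a matrix tend to zero iff its spectral radius is less than $1$. Writing
\[ A := (\E[\Phi^\top\Phi])^{-1}\bigl(\gamma\E[\Phi^\top P^{\pieval}\Phi]\bigr), \qquad b := (\E[\Phi^\top\Phi])^{-1}\E[\Phi^\top r], \]
the LSPE update becomes simply $\theta_{t+1} = A\theta_t + b$. Since $\E[\Phi^\top\Phi]$ is assumed invertible (otherwise LSPE is ill-posed), $A$ and $b$ are well-defined, and the spectral-radius condition in the proposition reads $\rho(A)<1$.

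Next, I would set up the error recursion. The fixed point $\theta_{\text{\textsc{lspe}}}$ is characterized by $\theta_{\text{\textsc{lspe}}} = A\theta_{\text{\textsc{lspe}}} + b$. Subtracting this identity from \eqref{eq:lspe_rule} gives
\[ \theta_{t+1} - \theta_{\text{\textsc{lspe}}} = A(\theta_t - \theta_{\text{\textsc{lspe}}}), \]
which iterates to $\theta_t - \theta_{\text{\textsc{lspe}}} = A^t(\theta_0 - \theta_{\text{\textsc{lspe}}})$. Stability---convergence to $\theta_{\text{\textsc{lspe}}}$ from every $\theta_0$---therefore reduces to the claim that $A^t v \to 0$ for every $v\in\mathbb{R}^d$.

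The final step is the standard linear-algebra equivalence $A^t\to 0$ iff $\rho(A)<1$. For the forward direction I would put $A$ in Jordan canonical form $A=Q J Q^{-1}$; powers of a Jordan block $J_\lambda$ have entries that are polynomials in $t$ multiplied by $\lambda^t$, which vanish iff $|\lambda|<1$. For the reverse direction I would pick an eigenvalue $\lambda$ with $|\lambda|=\rho(A)\geq 1$ and a corresponding (possibly complex) eigenvector $v$; then $A^t v=\lambda^t v$ does not decay, so taking the initial error equal to $\operatorname{Re}(v)$ or $\operatorname{Im}(v)$ yields a non-convergent trajectory.

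The hard part is a boundary case rather than a conceptual obstacle: when $\rho(A)\geq 1$ and $1$ is itself an eigenvalue of $A$ with $b\notin\operatorname{range}(I-A)$, the fixed point $\theta_{\text{\textsc{lspe}}}$ may not exist at all. I would handle this by observing that in such a case the affine iteration $\theta_{t+1}=A\theta_t+b$ cannot admit a common limit across all initializations, so non-stability holds trivially and the "only if" direction is still vindicated. In the complementary case where the fixed point exists, the eigenvector construction above directly produces a diverging or oscillating error trajectory, completing the equivalence.
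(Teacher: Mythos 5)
Your proof is correct. Note that the paper itself does not prove this proposition --- it is imported from \cite{asadi_2024_td, wang_instabilities_2021} --- and your argument (rewriting the update as the affine iteration $\theta_{t+1}=A\theta_t+b$, subtracting the fixed-point identity to get $\theta_t-\theta_{\text{\textsc{lspe}}}=A^t(\theta_0-\theta_{\text{\textsc{lspe}}})$, and invoking the equivalence $A^t\to 0 \iff \rho(A)<1$, with the degenerate case $b\notin\operatorname{range}(I-A)$ handled separately) is exactly the standard derivation those references rely on, so there is nothing to add.
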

Therefore, the stability of \textsc{lspe} largely depends on the representations $\Phi$ and the distribution shift between the data distribution of $\mathcal{D}$ and $\pieval$. In this work, we study the stability of \textsc{lspe} for a fixed distribution of $\mathcal{D}$ and learn $\Phi$. If a given $\Phi$ stabilizes \textsc{lspe}, we say $\Phi$ is a \emph{stable representation}.

In addition to stability, we also want the state-action features to be such that state-action features that are close in the representation space also have similar $q^\pieval$ values \citep{Lyle2022LearningDA,lelan2021continuity}. In this work, we call this property $q^\pieval$\emph{-consistency} since the learned representations are consistent with the $q^\pieval$ values.

\subsection{Related Works}
In this section, we discuss the most relevant prior literature on \textsc{ope} and representation learning.
\vspace{-10pt}
\paragraph{Representations for Offline \textsc{rl} and \textsc{ope}.}
There are several works that have shown shaping representations can be effective for offline \textsc{rl} \citep{yang_representation_2021, islam2023agentcontrollerrepresentationsprincipledoffline, nachum_2024_fourier, zang2023behaviorpriorrepresentationlearning, arora2020provablerepresentationlearningimitation, uehara2021representation, Chen2019InformationTheoreticCI, pavse_scaling_2023}. \citet{bellemare_2020_stab} presented a theoretical understanding of how various representations can stabilize  \textsc{td} learning. However, they did not discuss bisimulation-based representations. \citet{kumar_dr3_2021, ma_rd_2024, he2024adaptive} promote the stability of \textsc{td}-based methods by increasing the rank of the representations to prevent representation collapse. However, as we show in Section~\ref{sec:emp}, these types of representations can still lead to inaccurate \textsc{ope}. On the other hand, \textsc{krope} mitigates representation collapse and leads to accurate \textsc{ope}. \citet{chang_learning_2022} introduced \textsc{bcrl} to learn Bellman complete representations for stable \textsc{ope}.  While in theory, \textsc{bc} representations are desirable, we found that \textsc{bcrl} is sensitive to hyperparameter tuning. In contrast, we show that \textsc{krope} is more robust to hyperparameter tuning. \citet{Pavse_State-Action_2023} showed that bisimulation-based representations mitigate the divergence of \textsc{fqe}; however, they did not provide an explanation for divergence mitigation. Our work provides theoretical insight into the stability properties of bisimulation-based algorithms.

\paragraph{Bisimulation-based Representation Learning.}
Recently, there has been lot of interest in $\pi$-bisimulation algorithms for better generalization \citep{ferns_bisim_2004, ferns_bisim_2011, ferns_bisim_2014, castro_scalable_2019, zang2023understanding}. These algorithms measure similarity between two state-action pairs based on immediate rewards received and the similarity of next state-action pairs visited by $\pi$. These algorithms first define a distance metric that captures this $\pi$-bisimilarity, and then use this metric to learn representations such that $\pi$-bisimular states have similar representations \citep{castro_mico_2022, castro_scalable_2019, zhang_learning_2021, castro_kernel_2023, chen_2022_bisim, kemertas2022approximatepolicyiterationbisimulation}. \citet{castro_scalable_2019} introduced a $\pi$-bisimulation learning algorithm but assumed that the transition dynamics are deterministic. \citet{zhang_learning_2021} introduced a bisimulation algorithm that allowed for Gaussian dynamics and demonstrated its effectiveness in discarding distracting features for control.  \citet{castro_mico_2022, castro_kernel_2023} introduced bisimulation-based algorithms for control that allow for stochastic transition dynamics. To the best of our knowledge, there is still a gap in the literature as to whether the general class of these learned $\pi$-bisimulation-based representations stabilize offline value function learning. To address this gap, we introduce \textsc{krope}, an algorithm that leverages \citet{castro_kernel_2023}'s kernel perspective on similarity metrics. We first explicitly define a kernel that captures the relationship between state-action features in latent space in terms of the $\pi$-bisimulation relation.  This definition allows us to immediately establish stability since we can now analyze the spectral properties of offline value function learning algorithms. The proofs for \textsc{krope}'s basic theoretical properties (Section~\ref{sec:krope_op}) follow those by \citet{castro_kernel_2023}. Our stability-related  theoretical results (Sections \ref{sec:krope_fixed_rep} and \ref{sec:krope_bc}) and empirical analysis (Section~\ref{sec:emp}) are novel to this work.

\section{Kernel Representations for Offline Policy Evaluation}
\label{sec:krope}

We now present our bisimulation-based representation learning algorithm, kernel representations for \textsc{ope} (\textsc{krope}). We present the desired \textsc{krope} kernel, prove stability properties of \textsc{krope} representations, and present a practical learning algorithm to learn them. We defer the proofs to Appendix~\ref{app:proofs}.

\subsection{\textsc{krope} Kernel}
\label{sec:krope_op}
Prior works typically define a \emph{distance} metric to capture the notion of $\pi$-bisimilarity between two states: the distance between states is in terms of the immediate rewards received and differences of next states under $\pi$ (see Figure~\ref{fig:visual_intuition}) \citep{castro_scalable_2019}. In this work, we follow \citet{castro_kernel_2023} and take a perspective of kernels instead of distances. We first define a kernel that captures $\pi$-bisimilarity $k^{\pieval}:\mathcal{X}\times\mathcal{X}\to\mathbb{R}$, but for pairs of state-actions and under $\pieval$:
\begin{multline}\label{eq:KROPE_kernel}
    k^{\pieval}(s_1, a_1; s_2, a_2) =  k_1(s_1, a_1; s_2, a_2) \\+ \gamma k_2(k^{\pieval}) (P^\pieval(\cdot|s_1, a_1), P^\pieval(\cdot|s_2, a_2)].
\end{multline}
where $k_1(s_1, a_1; s_2, a_2) := 1 - \frac{|r(s_1, a_1) - r(s_2, a_2)|}{\left |r_{\text{max}} - r_{\text{min}}\right|}$ and  $k_2(k^\pieval)(P^\pieval(\cdot|s_1, a_1), P^\pieval(\cdot|s_2, a_2)) := \E_{s_1', a_1' \sim P^\pieval(\cdot|s_1, a_1), s_2', a_2' \sim P^\pieval(\cdot|s_2, a_2)}[k^\pieval(s_1',a_1'; s_2', a_2')]$.  Here, $k_1$ measures short-term similarity based on rewards received and $k_2$ measures long-term similarity between probability distributions by  measuring similarity between samples of the distributions according to $k^\pieval$ \citep{castro_kernel_2023}. We refer to $k^\pieval$ as the \textsc{krope} kernel.

\paragraph{Remarks on $k^\pieval$.} We now discuss the trade-offs of $k^\pieval$. While $k^\pieval$ is different from prior work, it benefits and suffers from the same advantages and disadvantages discussed in \citet{castro_kernel_2023}.

By definition $k_2$ is a function of independently coupling $(s_1', a_1')$ and $(s_2', a_2')$ \cite{castro_mico_2022, castro_kernel_2023}. The advantage of this independence is that it enables a practical algorithm and is flexible as it make no assumptions on the nature of the transition dynamics of $P^\pieval$. While independent coupling does raise a complication with stochastic dynamics and/or stochastic policies since self-similarity may be lower than similarity between different state-actions, i.e., $k^\pieval(s_1, a_1; s_1, a_1) < k^\pieval(s_1, a_1; s_2, a_2)$ for some $(s_1, a_1), (s_2, a_2) \in\mathcal{X}$, $k^\pieval$ still induces a valid $\pi$-bisimulation-based metric. To see its validity, consider the induced distance function by $k^\pieval$, i.e., $\forall x,y \in \mathcal{X}, d_{\text{\textsc{krope}}}(x,y):= k^{\pieval}(x,x) + k^{\pieval}(y,y) - 2k^{\pieval}(x,y)$. The metric $d_{\text{\textsc{krope}}}$ satisfies continuity (see Lemma~\ref{lemma:krope_af_bound} in Appendix~\ref{app:krope_op_valid_proofs}), which is an important property for metrics to satisfy \citep{lelan2021continuity}. Lemma~\ref{lemma:krope_af_bound} states that the absolute action-value difference between any two state-action pairs under $\pieval$ is upper-bounded by $d_{\text{\textsc{krope}}}$ plus an additive constant, where the additive constant arises due to the independent coupling. Thus far, prior work has removed the additive constant by assuming deterministic or Gaussian dynamics \cite{castro_scalable_2019,zhang_learning_2021}. However, these assumptions are practically more restrictive. While one might expect the existence of the additive constant to hurt performance, prior work and our work (see Section~\ref{sec:emp}) show that independent coupling actually yields strong performance \citep{Pavse_State-Action_2023, castro_kernel_2023}. Furthermore, in Section~\ref{sec:krope_fixed_rep} we show that this property does not affect the stability properties of the learned representations. An interesting future direction will be to develop practically feasible approaches to eliminate the additive constant.

Since Lemma~\ref{lemma:krope_af_bound} and the associated contraction, metric space completeness, and fixed-point uniqueness properties are similar to \citet{castro_kernel_2023}'s kernel, we defer these results to Appendix~\ref{app:krope_op_valid_proofs}.

\subsection{Stability of \textsc{krope} Representations}
\label{sec:krope_fixed_rep}
In the previous section, we defined the \textsc{krope} kernel. Ultimately, however, we are interested in  \emph{representations} that satisfy the relationship in Equation~(\ref{eq:KROPE_kernel}). We modify  Equation~(\ref{eq:KROPE_kernel}) accordingly by giving $k^\pieval$ some functional form in terms of state-action representations. We do so with the dot product: $\langle u, v\rangle = u^\top v, \forall u,v\in\mathbb{R}^d$, i.e., $k^\pieval(s_1, a_1; s_2, a_2) = \phi(s_1, a_1)^\top\phi(s_2, a_2)$. With this setup, we write Equation~(\ref{eq:KROPE_kernel}) in matrix notation and define the \textsc{krope} representations as follows:

\begin{definition}[\textsc{krope} Representations] Consider state-action representations $\Phi \in\mathbb{R}^{|\mathcal{X}|\times d}$ that are embedded in $\mathbb{R}^d$ with $k^\pieval(s_1, a_1; s_2, a_2) = \phi(s_1, a_1)^\top\phi(s_2, a_2)$. We say $\Phi$ is a \textsc{krope} representation if it satisfies the following: 
\begin{equation}
\E_{\mathcal{D}}[\Phi\Phi^{\top}] = \E_{\mathcal{D}}[K_1] + \gamma \E_{\mathcal{D}, \pieval}[P^{\pieval}\Phi (P^{\pieval}\Phi)^{\top}]
\label{eq:krope_rep}
\end{equation}
where each entry of $K_1\in\mathbb{R}^{|\mathcal{X}|\times|\mathcal{X}|}$ represents the short-term similarity, $k_1$, between every pair of state-actions, i.e., $K_1(s_1, a_1; s_2, a_2) := 1 - \frac{|r(s_1, a_1) - r(s_2, a_2)|}{\left |r_{\text{max}} - r_{\text{min}}\right|}$.
\label{def:krope_rep}
\end{definition}

Given this definition, we present our novel result proving the stability of \textsc{krope} representations:

\begin{tcolorbox}[colback=gray!10!white, colframe=gray!70!white]
\begin{restatable}{theorem}{kropespectral}
    If $\Phi$ is a \textsc{krope} representation as defined in Definition~\ref{def:krope_rep}, then the spectral radius of $(\E[\Phi^{\top}\Phi]))^{-1}\E[\gamma\Phi^{\top}P^{\pieval}\Phi]$ is less than $1$. That is, $\Phi$ stabilizes \textsc{lspe}.
    \label{thm:krope_spectral}
\end{restatable}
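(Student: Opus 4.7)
The plan is to unpack the KROPE identity of Equation~(\ref{eq:krope_rep}) into a structural statement about how $P^{\pieval}$ acts on $\mathrm{Col}(\Phi)$, and then read the LSPE iteration matrix off that structure. First I would rewrite the KROPE identity as the matrix equation $\Phi\Phi^\top = K_1 + \gamma\, \widetilde{\Phi}\widetilde{\Phi}^\top$ with $\widetilde{\Phi} := P^{\pieval}\Phi$; since $K_1\succeq 0$, this gives $\Phi\Phi^\top \succeq \gamma\, \widetilde{\Phi}\widetilde{\Phi}^\top$. Taking quadratic forms, any $y$ with $\Phi^\top y = 0$ must also satisfy $\widetilde{\Phi}^\top y = 0$, so $\ker(\Phi^\top)\subseteq\ker(\widetilde{\Phi}^\top)$, i.e.\ $\mathrm{Col}(\widetilde{\Phi})\subseteq\mathrm{Col}(\Phi)$. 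Hence there is a matrix $W\in\mathbb{R}^{d\times d}$ with $P^{\pieval}\Phi = \Phi W$, and this $W$ is unique because the invertibility of $G:=\E[\Phi^\top\Phi]$ (implicit in Proposition~\ref{prop:stability}) forces $\Phi$ to have full column rank.

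Next I would bound $\|W\|_2$. Substituting $\widetilde{\Phi}=\Phi W$ back into the KROPE identity yields $\Phi(I-\gamma W W^\top)\Phi^\top = K_1 \succeq 0$, and because $\Phi$ has full column rank, this is equivalent to $I-\gamma W W^\top\succeq 0$, i.e.\ $\|W\|_2 \leq \gamma^{-1/2}$. Finally, since $P^{\pieval}\Phi=\Phi W$ gives $B := \E[\Phi^\top P^{\pieval}\Phi] = \E[\Phi^\top\Phi]\,W = GW$, the LSPE iteration matrix becomes
\[
\gamma G^{-1} B \;=\; \gamma W, \qquad\text{so}\qquad \rho(\gamma G^{-1} B) \;=\; \gamma\,\rho(W) \;\leq\; \gamma\,\|W\|_2 \;\leq\; \gamma\cdot \gamma^{-1/2} \;=\; \gamma^{1/2} \;<\; 1,
\]
where the final strict inequality uses $\gamma\in[0,1)$. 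Invoking Proposition~\ref{prop:stability} then yields the stability of \textsc{lspe}, as claimed.

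The main conceptual obstacle is the first paragraph: recognizing that the $|\mathcal{X}|\times|\mathcal{X}|$ kernel-level identity of Definition~\ref{def:krope_rep} is equivalent to a Bellman-completeness-like closure property --- namely, that the one-step expected features under $\pieval$ already lie in $\mathrm{Col}(\Phi)$ and, moreover, are a norm-bounded linear transformation of the original features. After that extraction of $W$, the $d\times d$ spectral-radius bound falls out without any delicate matrix-inequality manipulations, and the $\gamma^{1/2}$ contraction factor has a transparent interpretation: the KROPE relation trades one power of $\gamma$ for one power of $WW^\top$, giving the square-root contraction.
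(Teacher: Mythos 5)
Your proof is correct and arrives at the same $\sqrt{\gamma}$ bound as the paper, but by a genuinely different route. The paper never extracts the matrix $W$: it sandwiches the kernel-level identity directly, left- and right-multiplying Equation~(\ref{eq:krope_rep}) by $\Phi^{\top}$ and $\Phi$ to obtain $BB^{\top} = \E[\Phi^{\top}K_1\Phi] + \gamma CC^{\top}$ with $B=\E[\Phi^{\top}\Phi]$ and $C=\E[\Phi^{\top}P^{\pieval}\Phi]$, then conjugates by $B^{-1}$ to read off $I - \tfrac{1}{\gamma}LL^{\top}\succeq 0$ for the \textsc{lspe} iteration matrix $L=\gamma B^{-1}C$, giving $\|L\|_2\le\sqrt{\gamma}$ and hence $\rho(L)\le\sqrt{\gamma}$. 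Your detour through the closure property $\mathrm{Col}(P^{\pieval}\Phi)\subseteq\mathrm{Col}(\Phi)$ and the factorization $P^{\pieval}\Phi=\Phi W$ costs you the extra full-column-rank discussion but buys an explicit structural statement --- linear Bellman completeness of $\mathrm{Span}(\Phi)$ --- which the paper only establishes separately, and by entirely different (abstraction/\textsc{mmd}) means, in Theorem~\ref{thm:krope_bcrl}; it also makes the $\sqrt{\gamma}$ factor transparent as $\gamma\cdot\|W\|_2$. Two small remarks: both arguments lean on the positive semidefiniteness of $K_1$ (Lemma~\ref{lemma:k1psd} in the paper), which you assert rather than prove; and your step $\rho(\gamma W)\le\gamma\|W\|_2$ is actually cleaner than the paper's corresponding step, which conflates eigenvalues of $L$ with its singular values and implicitly relies on $\rho(L)\le\|L\|_2$. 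Like the paper, you silently treat the expectations in Definition~\ref{def:krope_rep} as if the identity held deterministically (so that, e.g., $\E[\Phi^{\top}P^{\pieval}\Phi]=\E[\Phi^{\top}\Phi]W$); this is no worse than the paper's own substitution of $BB^{\top}$ for $\E[\Phi^{\top}\Phi\Phi^{\top}\Phi]$.
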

\end{tcolorbox}

By adopting the kernel perspective, Theorem~\ref{thm:krope_spectral}, proved in Appendix~\ref{app:krope_stability_proofs}, tells us that $\pi$-bisimulation-based \textsc{krope} representations stabilize \textsc{ope} with \textsc{lspe}. Intuitively, they are stable since when $(\E[\Phi^{\top}\Phi]))^{-1}\E[\gamma\Phi^{\top}P^{\pieval}\Phi]$'s spectral radius is less than $1$, each update to $\theta_t$ in Equation (\ref{eq:lspe_rule}) is non-expansive. That is, each update brings $\theta_t$ closer to $\theta_{\textsc{lspe}}$.

\subsection{Connection to Bellman Completeness}
\label{sec:krope_bc}
In this section, we draw a novel connection between \textsc{krope} representations and Bellman completeness. \citet{Chen2019InformationTheoreticCI} proved a similar result but focused on bisimulation representations instead of \textsc{krope} representations, which are $\pi$-bisimulation-like representations. We say a function class $\mathcal{F}$ is Bellman complete if it is complete under the Bellman operator: $\mathcal{T}^{\pieval}f \subseteq \mathcal{F},  \forall f\in\mathcal{F}$. For instance, suppose $\mathcal{F}$ is the class of linear functions spanned by $\Phi$, $\mathcal{F}:= \{f \in\mathbb{R}^{\mathcal{X}}: f := \Phi w\}, w\in\mathbb{R}^d$. Then if $\mathcal{T}^{\pieval}f, \forall f\in\mathcal{F}$ is also a linear function within the span of $\Phi$, we say $\Phi$ is a Bellman complete representation. Bellman completeness is an alternative condition for stability and is typically assumed to ensure to data-efficient policy evaluation \citep{wang_instabilities_2021, csaba_completeness_2005, chang_learning_2022}.

Recall, that the induced distance function due to $k^\pieval$ is 
\[\forall x,y\in\mathcal{X}: \; d_{\text{\textsc{krope}}}(x,y):= k^{\pieval}(x,x) + k^{\pieval}(y,y) - 2k^{\pieval}(x,y).\]
We now present our second main result. It states that \textsc{krope} representations are Bellman complete under a reward function injectivity assumption (see proof in Appendix~\ref{app:krope_stability_proofs}):
\begin{tcolorbox}[colback=gray!10!white, colframe=gray!70!white]
\begin{restatable}{theorem}{kropebcrl}
 Let $\phi:\mathcal{X}\to\mathcal{X}^\phi$ be the state-action abstraction induced by grouping state-actions $x,y\in\mathcal{X}$ such that if $d_{\text{\textsc{krope}}}(x,y) = 0$, then $\phi(x) = \phi(y), \forall x,y\in\mathcal{X}$. Then $\phi$ is Bellman complete if the abstract reward function $r^\phi: \mathcal{X}^\phi\rightarrowtail (-1,1)$ is injective (i.e., distinct abstract rewards).
\label{thm:krope_bcrl}
\end{restatable}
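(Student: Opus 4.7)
The plan is to establish Bellman completeness of $\phi$ by direct verification. Since the one-hot aggregation matrix $\Phi$ has $\mathrm{span}(\Phi)$ equal to the set of functions constant on the $\phi$-equivalence classes, I need to show that for any such $f$, the image $\mathcal{T}^{\pieval} f$ is also constant on those classes. Because $(\mathcal{T}^{\pieval} f)(x) = r(x) + \gamma\, \E_{X' \sim P^{\pieval}(\cdot|x)}[f(X')]$ and $f$ factors through $\phi$, it suffices to prove that whenever $\phi(x) = \phi(y)$ one has both (i) $r(x) = r(y)$ and (ii) $\phi_\ast P^{\pieval}(\cdot|x) = \phi_\ast P^{\pieval}(\cdot|y)$ as probability measures on $\mathcal{X}^\phi$; granted these, $(\mathcal{T}^{\pieval} f)(x) = r^\phi(\phi(x)) + \gamma\sum_g f(g)\,(\phi_\ast P^{\pieval}(\cdot|x))(g)$ depends on $x$ only through $\phi(x)$.

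Both (i) and (ii) will follow from the hypothesis $d_{\text{\textsc{krope}}}(x,y) = 0$. Applying the KROPE fixed-point relation (Equation~\ref{eq:KROPE_kernel}) to each of $k^{\pieval}(x,x)$, $k^{\pieval}(y,y)$, and $k^{\pieval}(x,y)$ and combining them yields the decomposition
\[
d_{\text{\textsc{krope}}}(x,y) \;=\; 2\bigl(1 - k_1(x,y)\bigr) \;+\; \gamma\, \mathbb{M}^2\bigl(P^{\pieval}(\cdot|x),\, P^{\pieval}(\cdot|y);\, k^{\pieval}\bigr),
\]
where $\mathbb{M}^2(\mu,\nu;k) := \E_{\mu\times\mu}[k] + \E_{\nu\times\nu}[k] - 2\E_{\mu\times\nu}[k]$ is the squared maximum mean discrepancy. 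Both summands on the right are nonnegative ($k_1 \leq 1$ by definition, and $\mathbb{M}^2 \geq 0$ by positive semidefiniteness of $k^{\pieval}$, which is a Gram kernel of the KROPE feature map), so each must vanish separately. The first gives $k_1(x,y) = 1$, i.e., (i) $r(x) = r(y)$; this also makes $r^\phi$ well-defined on $\mathcal{X}^\phi$. For the second, writing $k^{\pieval}(u,v) = \psi(u)^\top \psi(v)$ for the feature map implicit in Definition~\ref{def:krope_rep}, the MMD condition is equivalent to $\E_{x'}[\psi(x')] = \E_{y'}[\psi(y')]$. Since $\|\psi(u) - \psi(v)\|^2 = d_{\text{\textsc{krope}}}(u,v)$, the map $\psi$ is constant on each $\phi$-class, so writing $\psi_g$ for its common value on group $g$, this rearranges to $\sum_{g \in \mathcal{X}^\phi}\bigl[(\phi_\ast P^{\pieval}(\cdot|x))(g) - (\phi_\ast P^{\pieval}(\cdot|y))(g)\bigr]\,\psi_g = 0$, and (ii) would follow if $\{\psi_g\}_{g \in \mathcal{X}^\phi}$ were linearly independent.

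The main obstacle is precisely this linear independence, equivalently that the lifted Gram matrix $K^\phi$ with entries $K^\phi_{g_1 g_2} := k^{\pieval}(g_1,g_2)$ is strictly positive definite; this is where injectivity of $r^\phi$ enters. I will lift the KROPE fixed-point equation to groups, obtaining $K^\phi = K_1^\phi + \gamma\, M$ where $M(g_1, g_2) := \E_{X' \sim P^{\pieval}(\cdot|x_{g_1}),\, Y' \sim P^{\pieval}(\cdot|x_{g_2})}[k^{\pieval}(X', Y')]$ (for any choice of representatives $x_g \in g$) is PSD: for any $\alpha \in \mathbb{R}^{|\mathcal{X}^\phi|}$, setting $\rho := \sum_g \alpha_g P^{\pieval}(\cdot|x_g)$ gives $\alpha^\top M \alpha = \bigl\|\int \psi\, d\rho\bigr\|^2 \geq 0$. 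Therefore $K^\phi \succeq K_1^\phi$ in the PSD order, and it suffices to show $K_1^\phi$ is strictly positive definite. By injectivity of $r^\phi$, the $|\mathcal{X}^\phi|$ abstract rewards are pairwise distinct scalars in $[r_{\min}, r_{\max}]$, and $K_1^\phi$ is the Gram matrix of the triangular kernel $(r, r') \mapsto 1 - |r - r'|/(r_{\max} - r_{\min})$ at these distinct points---a classically strictly positive definite kernel on any finite set of distinct reals (provable, e.g., via the feature representation $t \mapsto (\mathbf{1}_{r \leq t},\, \mathbf{1}_{r > t})$, which is linearly independent across distinct rewards). Combining (i) and (ii) then yields $\mathcal{T}^{\pieval} f \in \mathrm{span}(\Phi)$, completing the proof.
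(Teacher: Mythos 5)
Your proof is correct, and it shares its technical core with the paper's: both arguments split $d_{\text{\textsc{krope}}}(x,y)=0$ into a vanishing reward term and a vanishing $\text{MMD}^2$ term (your normalization $2(1-k_1(x,y))$ is in fact the exact constant; the paper's Proposition~\ref{prop:kropedist} states it as $|r(x)-r(y)|$ up to a harmless factor), both lift the kernel to the quotient space $\mathcal{X}^\phi$ using the fact that $\psi$ is constant on classes, and both derive strict positive definiteness of the abstract Gram matrix from injectivity of $r^\phi$ via the decomposition of the triangular kernel into min-kernels at distinct points (the paper's Lemma~\ref{lemma:k1psd} and Proposition~\ref{prop:k1pd}; your integral/feature-map representation of the same fact is equivalent, though you should note that linear independence of the pairs $(\mathbf{1}_{r\le t},\mathbf{1}_{r>t})$ uses that the abstract rewards lie in the open interval, which the hypothesis guarantees). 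Where you genuinely diverge is the endgame: the paper routes through the intermediate claim that $\phi$ is a $\pieval$-bisimulation and then invokes its Proposition~\ref{prop:piebisimbc} (a page-long argument adapted from Chen and Jiang, constructing a midpoint function $f'$ from $x_+$ and $x_-$ and bounding $\|f'-\mathcal{T}^{\pieval}f\|_\infty$), whereas you verify Bellman completeness directly by observing that $(\mathcal{T}^{\pieval}f)(x)=r(x)+\gamma\sum_g f(g)\,(\phi_\ast P^{\pieval}(\cdot|x))(g)$ factors through $\phi(x)$ once matching rewards and matching pushforward transitions are established. Your direct verification is shorter and arguably cleaner for this exact-aggregation setting; the paper's detour through the bisimulation proposition buys a reusable statement that degrades gracefully to approximate bisimulation ($\eps_r,\eps_p>0$), which the exact computation does not immediately give.
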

\end{tcolorbox}

\begin{tcolorbox}[colback=blue!10!white, colframe=blue!70!white, title=\textbf{Takeaway $\# 1$: Stability of Bisimulation-based Representations}]
Under the theoretical assumptions made, \textsc{krope} representations avoid divergence of offline value function learning. They 1) induce non-expansive value function updates and 2) are Bellman complete.
\end{tcolorbox}

\subsection{\textsc{krope} Learning Algorithm}
\label{sec:krope_algo}

In this section, we present an algorithm that learns the \textsc{krope} representations from data. We include the pseudo-code of \textsc{krope} in Appendix~\ref{app:background}. The \textsc{krope} learning algorithm uses an encoder $\phi_{\omega}:\sset\times\aset\to \mathbb{R}^d$, which is parameterized by weights $\omega$ of a function approximator. It then parameterizes the kernel with the dot product, i.e, $\tilde{k}_{\omega}(s_1, a_1; s_2, a_2):= \phi_{\omega}(s_1,a_1)^{\top}\phi_{\omega}(s_2,a_2)$  (see  Equation~(\ref{eq:krope_rep})). Finally, the algorithm then minimizes the following loss function, which is similar to how the value function is learned in deep \textsc{rl} \citep{mnih_dqn_2015}:
\begin{multline}\label{eq:krope_loss}
    \mathcal{L}_{\text{\textsc{krope}}} (\omega) \\:= \E_\mathcal{D}\Biggl[\biggl(\mathcal{K}_{\bar{\omega}}(s_1, a_1; s_2; a_2) - \vphantom{\bigg|} \tilde{k}_{\omega}(s_1,a_1; s_2,a_2) \biggr)^2\Biggr],
\end{multline}
where $\mathcal{K}_{\bar{\omega}}(s_1, a_1; s_2; a_2) = 1 - \frac{\left|r(s_1,a_1) - r(s_2,a_2)\right|}{\left |r_{\text{max}} - r_{\text{min}}\right|} + \gamma \E_{\pieval}[\tilde{k}_{\bar{\omega}}(s_1',a_1'; s_2',a_2')]$, the state-action pairs are sampled from $\mathcal{D}$, and $\bar{\omega}$ are weights of the target network that are periodically copied from $\omega$ \citep{mnih_dqn_2015}.  In this work, we use \textsc{krope} as an auxiliary task, which introduces only an auxiliary task weight as the additional hyperparameter. It is critical to note that since the learning algorithm is a semi-gradient method, it may still diverge. Nevertheless, in Section~\ref{sec:emp} we show \textsc{krope} still improves the stability and accuracy of \textsc{ope} compared to baselines.

\section{Empirical Results}
\label{sec:emp}

In this section, we present our empirical study designed to answer the following question: \textbf{do \textsc{krope} representations lead to stable \textsc{msve} and low \textsc{msve}?}

\subsection{Empirical Setup}

In this section, we describe the main details of our empirical setup. For further details such as datasets, policies, hyperparameters, and evaluation protocol please refer to Appendix~\ref{app:empirical}.

\paragraph{Baselines} Our primary representation learning baseline is fitted q-evaluation (\textsc{fqe}) \citep{le_batch_2019}. \textsc{fqe} is the most fundamental deep \textsc{rl} \textsc{ope} algorithm that learns representations of state-actions to predict the long-term performance of a policy. While \textsc{fqe} is typically used as an \textsc{ope} algorithm, it can also be viewed as a value-predictive representation learning algorithm \citep{lehnert_2020_sr}. More specifically, consider its loss function: $\E_{(s,a,s')\sim\mathcal{D}}\left[\left(r(s,a) + \gamma \E_{a'\sim\pieval} [q_{\bar{\xi}}(s',a')] - q_{\xi}(s,a)\right)^2\right]$, where $q_\xi(s,a) := \phi_{\xi'}(s,a)^\top w$ and $\xi = \{\xi', w\}$. We view $\xi$ as the neural network weights of an action-value neural network and $w$ as the linear weights of the network applied on the output of the penultimate layer $\phi_{\xi'}(s,a)$ of the neural network. Then minimizing this loss function shapes the representations $\phi_{\xi'}(s,a)$ to predict the expected future discounted return. As noted in Section~\ref{sec:background}, we follow the linear evaluation protocol where $\phi_{\xi'}$ is shaped by different auxiliary tasks and is then used with \textsc{lspe} for \textsc{ope} since it helps us understand the properties of the representations within the context of a well-understood value function learning algorithm \citep{grill2020byol, chang_learning_2022, farebrother2024stop, wang_instabilities_2021}. We provide the pseudocode of this setup in Appendix~\ref{app:background}. We also note that in Appendix~\ref{app:empirical}, we present results of performing \textsc{ope} using \textsc{fqe} instead \textsc{lspe}, and find that \textsc{krope} still reliably produces stable \textsc{ope} estimates.

We consider the following three classes (and total $6$) auxiliary representation learning algorithms that are typically paired with \textsc{fqe} for stability: I) bisimulation-metric based, II) model-based, and III) co-adaptation penalty-based. In class I, we consider 1) \textsc{krope} (ours), 2) deep bisimulation for control (\textsc{dbc}) \cite{zhang_learning_2021}, 3) representations for \textsc{ope} (\textsc{rope}) \cite{Pavse_State-Action_2023, castro_mico_2022}; in class II, we consider bellman complete representation learning (\textsc{bcrl-exp-na}) \citep{chang_learning_2022}; and in class III, we consider 1) absolute \textsc{dr3} regularizer \citep{kumar_dr3_2021, ma_rd_2024} and 2) \textsc{beer} regularizer \citep{he2024adaptive}. In all cases, the penultimate layer features of \textsc{fqe}'s action-value encoder $\phi_{\xi'}$ are fed into \textsc{lspe} for \textsc{ope}. Note that since \textsc{bcrl} was not designed as an auxiliary task \citep{chang_learning_2022}, we evaluate it as a non-auxiliary (\textsc{na}) task algorithm. We provide additional details on the baselines in Appendix~\ref{app:empirical}.

\paragraph{Domains} We conduct our evaluation on a variety of domains:  1) Garnet \textsc{mdp}s, which are a class of tabular stochastic \textsc{mdp}s that are randomly generated given a fixed number of states and actions \citep{Archibald1995OnTG}; 2) $4$ \textsc{dm} Control environments: CartPoleSwingUp, CheetahRun, FingerEasy, WalkerStand \citep{tassa_dm_2018}; and 3) $9$ \textsc{d4rl} datasets \citep{fu_d4rl_2020, fu_opebench_2021}. The first domain enables us to analyze the algorithms' performance across a wide range of stochastic tabular \textsc{mdp}s. The second and third set of domains test the algorithms in continuous higher-dimensional state-action environments.

\subsection{Analyzing Stability and $q^\pieval$-Consistency Properties}

In this set of experiments on the Garnet \textsc{mdp}s domain, we analyze the stability and $q^\pieval$ consistency properties of the learned representations. We present the results in Figure~\ref{fig:randommdp_basic}. Our Garnet \textsc{mdp}s were generated with $8$ states and $5$ actions, with a total of $|\mathcal{X}| = 40$ state-actions, and each native $(s,a)$ representation  is a $1$-hot vector. In these experiments, the native representation is fed into a linear encoder with a bias component and no activation function. All algorithms are trained for $500$ epochs and we report the results by evaluating the final learned representations for different latent dimensions $d$.

\begin{figure}[hbtp]
    \centering
        \subfigure[Spectral Radius]{\label{fig:randomdmp_spectral}\includegraphics[scale=0.13]{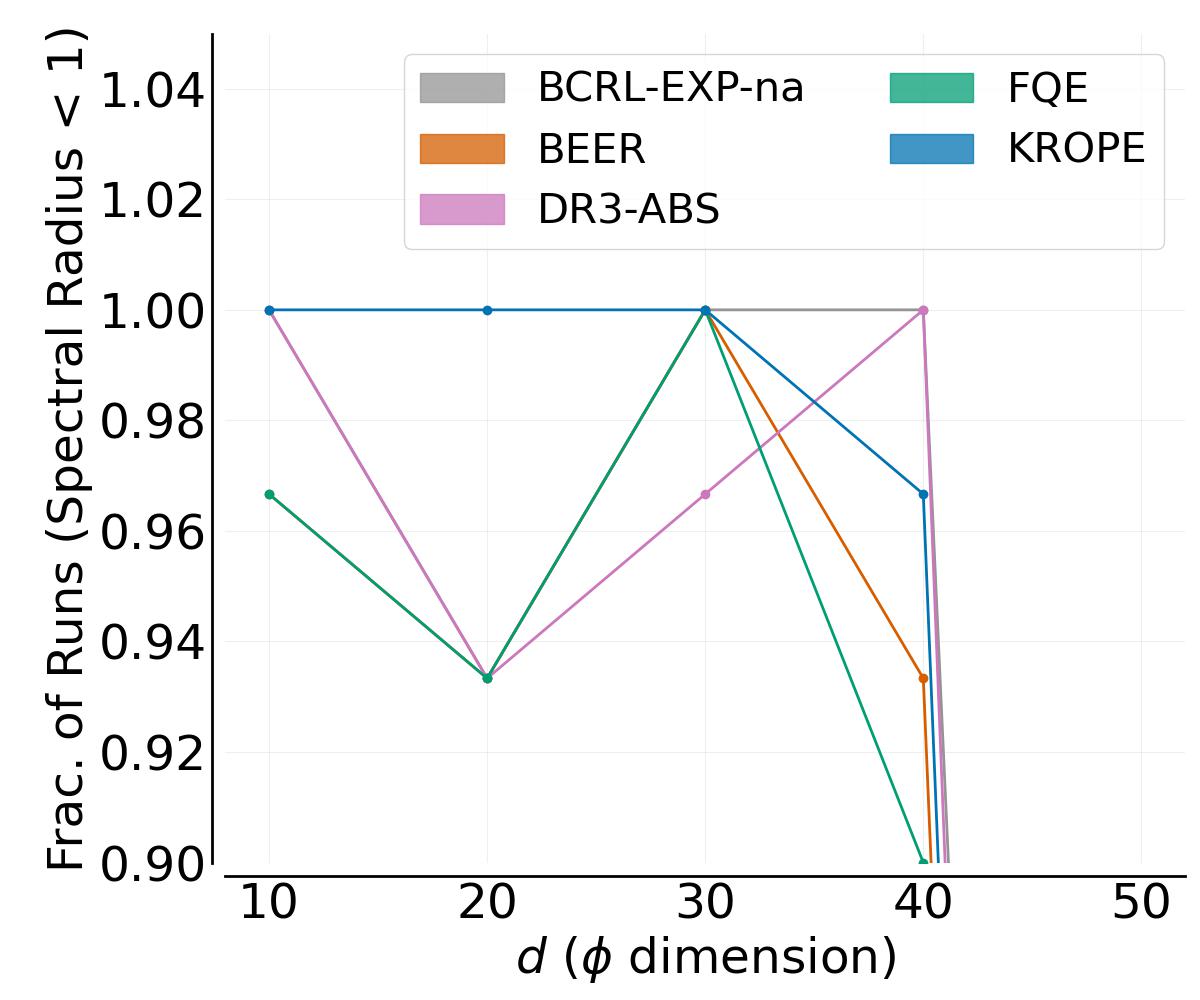}}
        \subfigure[Orthogonality]{\label{fig:randommdp_orthoqval}\includegraphics[scale=0.13]{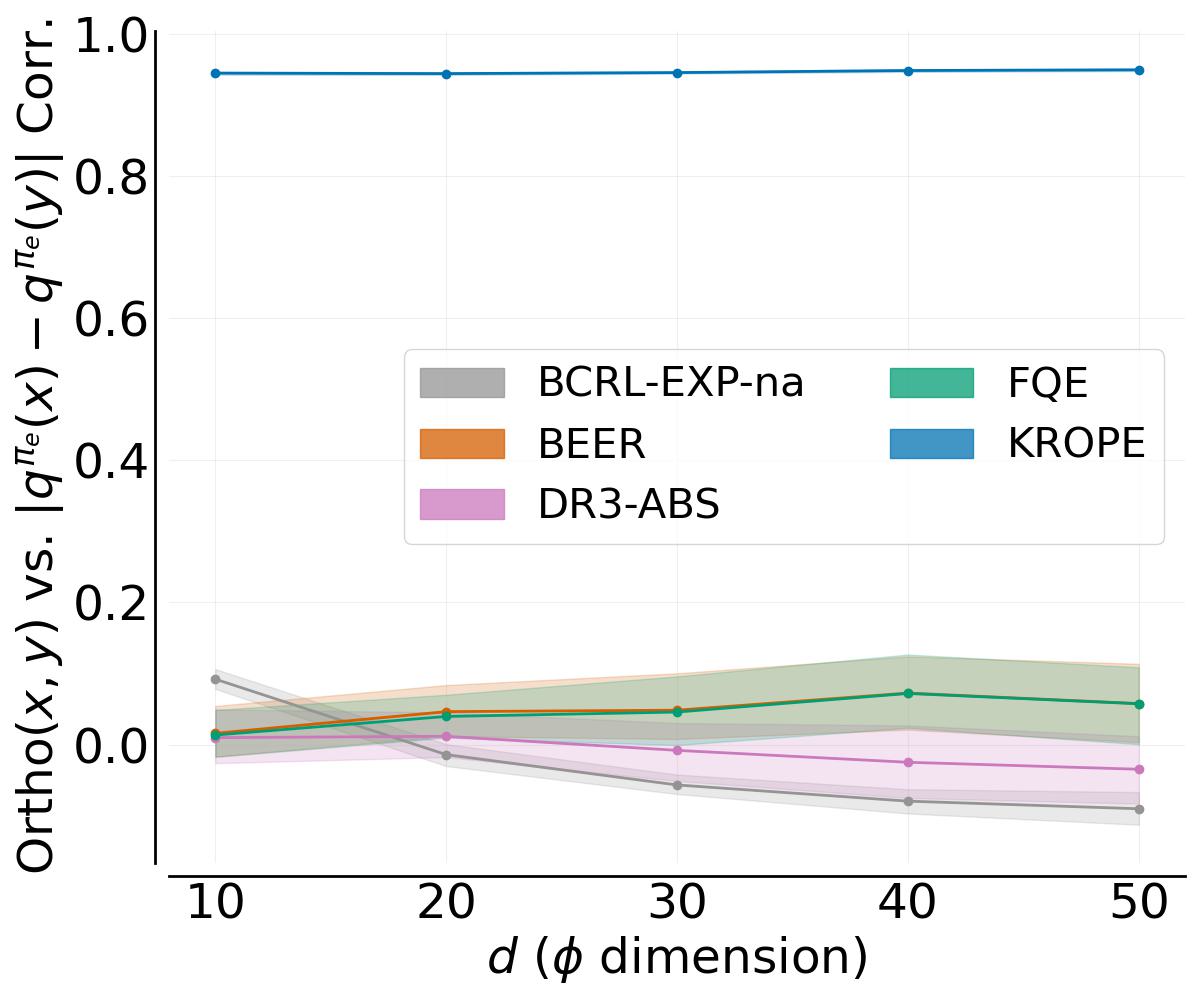}}
    \caption{\footnotesize Evaluation of basic representation properties on Garnet \textsc{mdp}s with $40$ state-actions vs. output dimension $d$. Figure~\ref{fig:randomdmp_spectral}: Fractions of runs out of $30$ trials that resulted in spectral radius of $(\E[\Phi^{\top}\Phi])^{-1}(\gamma\E[\Phi^{\top}P^{\pieval}\Phi])$ to be less than $1$; higher is better. Figure~\ref{fig:randommdp_orthoqval}: Pearson correlation between orthogonality between pairs of latent features vs. their corresponding absolute $q^{\pieval}$ action-value difference; higher is better. All results are averaged over $30$ trials and the shaded region is the $95\%$ confidence interval.}
    \label{fig:randommdp_basic}
\end{figure}
\vspace{-10pt}
\paragraph{Stability.} Based on Theorem~\ref{thm:krope_spectral}, a representation is stable if it induces a spectral radius of $(\E[\Phi^{\top}\Phi])^{-1}(\gamma\E[\Phi^{\top}P^{\pieval}\Phi])$ that is less than $1$. In Figure~\ref{fig:randomdmp_spectral}, we present the fraction of runs that result in such representations. We find that up till $d = 30$, $100\%$ of \textsc{krope} and \textsc{beer} runs have spectral radius less than $1$. We also find that \textsc{bcrl-exp-na} produces stable representations up till $d=40$. At $d=50$, all algorithms produce unstable representations. These results suggest that \textsc{krope}, \textsc{beer}, and \textsc{bcrl-exp-na} are reliable in producing stable representations when projecting state-actions into low dimensions. When $d\geq |\mathcal{X}|$, the covariance matrix $\E[\Phi^{\top}\Phi]$ is more likely to be a singular matrix, which implies higher chance of instability.
\vspace{-10pt}
\paragraph{$q^\pieval$-Consistency.} We say that the representations have maintained $q^\pieval$-consistency well when state-actions that have similar $q^\pieval$ values are close to each other in the representation space \citep{lelan2021continuity}. We assess this property by first measuring the orthogonality: $1-\frac{|\langle\phi(s_1, a_1),\phi(s_2, a_2)\rangle|}{\|\phi(s_1, a_1)\|\|\phi(s_2, a_2)\|}$ \citep{WANG2024104100} between every state-action pair, $(s_1, a_1; s_2, a_2)$, and the absolute action-value difference: $|q^\pieval(s_1, a_1) - q^\pieval(s_2, a_2)|$. We then compute the Pearson correlation between these values for every pair and plot the correlation for each $d$ in Figure~\ref{fig:randommdp_orthoqval}. A correlation coefficient close to $1$ indicates that the representations maintain $q^\pieval$-consistency well. We find that \textsc{krope} representations satisfy this property almost perfectly since it specifically tries to learn representations such that state-action pairs with similar values under $\pieval$ are similar. We observe that the other non-bisimulation-based algorithms typically have zero or even negative correlation. A negative correlation indicates that state-actions with different action-values may have similar representations in latent space, which can complicate value function learning.

\subsection{Offline Policy Evaluation}
\label{sec:emp_nature_learned}

In this set of experiments, we evaluate the algorithms for \textsc{ope} on $13$ datasets: $4$ \textsc{dm} control and $9$ \textsc{d4rl} datasets.
We also evaluate \textsc{bcrl-na}, which is \textsc{bcrl} without the exploration maximization regularizer. To stabilize training for all algorithms, we use wide neural networks with layernorm  \citep{gallici2024simplifyingdeeptemporaldifference,Ota2021TrainingLN}. Note that while wide networks and layernorm stabilize training, they may not lead to stable \textsc{lspe} under the linear evaluation protocol. We report the final (normalized) squared value errors when the learned representations are used with \textsc{lspe} for \textsc{ope}. The results are presented in Table~\ref{table:ope_main} (see Appendix~\ref{app:d4rl_ope} for full learning curves).

\begin{table*}[hbtp]
\begin{tabular}{@{}llllllllll@{}} \toprule
  & \multicolumn{8}{c}{Algorithm} \\ 
\cmidrule(r){2-9}
Dataset (\textsc{dmc}) & \textsc{fqe} & \textsc{bcrl+exp} & \textsc{bcrl}  & \textsc{beer} & \textsc{dr3} & \textsc{dbc} & \textsc{rope} & \textsc{krope} (ours) \\ \midrule
CartPoleSwingUp  & Div. & $2.0 \pm 1.6$ & $2.2 \pm 0.8$ & Div. & $0.9 \pm 0.0$ & Div. & $0.2 \pm 0.1$ & \hilight{\bm{$0.0 \pm 0.0$}}\\
CheetahRun  & \hilight{\bm{$0.0 \pm 0.0$}} & $0.3 \pm 0.2$ & $0.8 \pm 0.3$ & \hilight{\bm{$0.0 \pm 0.0$}} & $0.4 \pm 0.0$ & Div. & Div. & \hilight{\bm{$0.0 \pm 0.0$}}\\
FingerEasy  & Div. & $0.6 \pm 0.1$ & $0.8 \pm 0.2$ & Div. & $0.9 \pm 0.0$ & Div. & \hilight{\bm{$0.1 \pm 0.0$}} & $0.6 \pm 0.0$\\
WalkerStand  & \hilight{\bm{$0.0 \pm 0.0$}} & $0.2 \pm 0.2$ & $0.2 \pm 0.1$ & $1.9 \pm 3.6$ & $0.1 \pm 0.0$ & Div. & $0.2 \pm 0.0$ & \hilight{\bm{$0.0 \pm 0.0$}}\\
\midrule
Dataset (\textsc{d4rl}) & \textsc{fqe} & \textsc{bcrl+exp} & \textsc{bcrl}  & \textsc{beer} & \textsc{dr3} & \textsc{dbc} & \textsc{rope} & \textsc{krope} (ours) \\
\midrule 
cheetah random  & \hilight{\bm{$0.9 \pm 0.0$}} & Div. & Div. & \hilight{\bm{$0.9 \pm 0.0$}} & \hilight{\bm{$0.9 \pm 0.0$}} & \hilight{\bm{$0.9 \pm 0.0$}} & $1.0 \pm 0.0$ & $1.0 \pm 0.0$\\
cheetah medium  & Div. & Div. & $0.2 \pm 0.2$ & Div. & Div. & Div. & \hilight{\bm{$0.0 \pm 0.0$}} & \hilight{\bm{$0.0 \pm 0.0$}} \\
cheetah med-expert  & Div. & $0.2 \pm 0.1$ & $0.3 \pm 0.1$ & Div. & Div. & Div. & $0.1 \pm 0.0$ & \hilight{\bm{$0.0 \pm 0.0$}}\\
hopper random  & Div. & Div. & Div. & Div.  & $0.8 \pm 0.0$ & Div. & Div. & \hilight{\bm{$0.1 \pm 0.0$}}\\
hopper medium  & Div. & Div. & Div. & Div. & Div. & Div. & Div. & Div.\\
hopper med-expert  & Div. & Div. & Div. & Div. & $0.6\pm 0.0$ & Div. & \hilight{\bm{$0.0 \pm 0.0$}} & \hilight{\bm{$0.0 \pm 0.0$}}\\
walker random  & Div. & Div. & Div. & Div. & $1.0 \pm 0.0$ & Div. & Div. & \hilight{\bm{$0.5 \pm 0.1$}}\\
walker medium  & Div. & Div. & Div. & Div. & Div. & Div. & Div. & Div.\\
walker med-expert  & Div. & $1.3 \pm 0.4$ & $2.6 \pm 2.1$ & Div. & $6.6 \pm 11.6$ & Div. & \hilight{\bm{$0.1 \pm 0.0$}} & Div.\\
\bottomrule
\end{tabular}
\caption{\footnotesize \footnotesize Final normalized squared value error achieved by \textsc{lspe} with different representation learning algorithms. Results are averaged over $20$ trials and the variation is the $95\%$ confidence interval. Lower and less erratic is better. Values are rounded to single place decimal. When an algorithm's error is $\geq 10$, we label it as diverged (Div.). \hilight{\textbf{Bolded and highlighted}} error indicates lowest error among baselines.} 
\label{table:ope_main}
\end{table*}

 In general, we find that \textsc{krope} representations lead to low and stable \textsc{msve} on $10/13$ datasets. On the other hand, we find that the other auxiliary tasks inconsistently produce stable \textsc{ope} estimates across all environments. Compared to the other bisimulation-based algorithms (\textsc{dbc} and \textsc{rope}), \textsc{krope} consistently achieves lower error. In all cases, \textsc{dbc} seems to be unreliable for \textsc{ope}. \textsc{rope}, which was the previous state-of-the-art bisimulation-based algorithm for \textsc{ope}, also achieves low error. While \textsc{rope} is competitive with \textsc{krope}, \textsc{krope} generally outperforms \textsc{rope} and has one less hyperparameter. The performance of the \textsc{abs-dr3} and \textsc{beer} regularizer suggests that explicitly trying to increase the rank of the features of the penultimate layer may hurt stability, and even if the \textsc{ope} error is stable, it can hurt accuracy. We also make a similar observation for \textsc{bcrl}. However, in this case, we attribute poor performance to difficulty in optimizing the \textsc{bcrl} objective. In fact, in Figure~\ref{fig:dm_cartpole_hp}, we will see that \textsc{bcrl} is sensitive to hyperparameter tuning. We also observe results consistent with a known result that \textsc{bcrl-exp-na} achieves lower error than \textsc{bcrl-na} indicating the known result that exploration maximization of the covariance matrix helps produce stable representations \citep{chang_learning_2022}. While \textsc{fqe} achieves low error on WalkerStand and CheetahRun, it is very unstable on the other datasets, which motivates the need to shape the representations for stable and accurate \textsc{ope}. Note that in practice, \textsc{krope} may still diverge, as it did it in $3$ cases, because the learning objective in Equation~(\ref{eq:krope_loss}) is still a semi-gradient learning algorithm (see discussion in Section~\ref{sec:limit} and Section~\ref{app:exp_krope_div}).

\begin{figure}[hbtp]
    \centering
        \subfigure[CartPoleSwingUp]{\label{fig:dm_cartpole_hp}\includegraphics[scale=0.13]{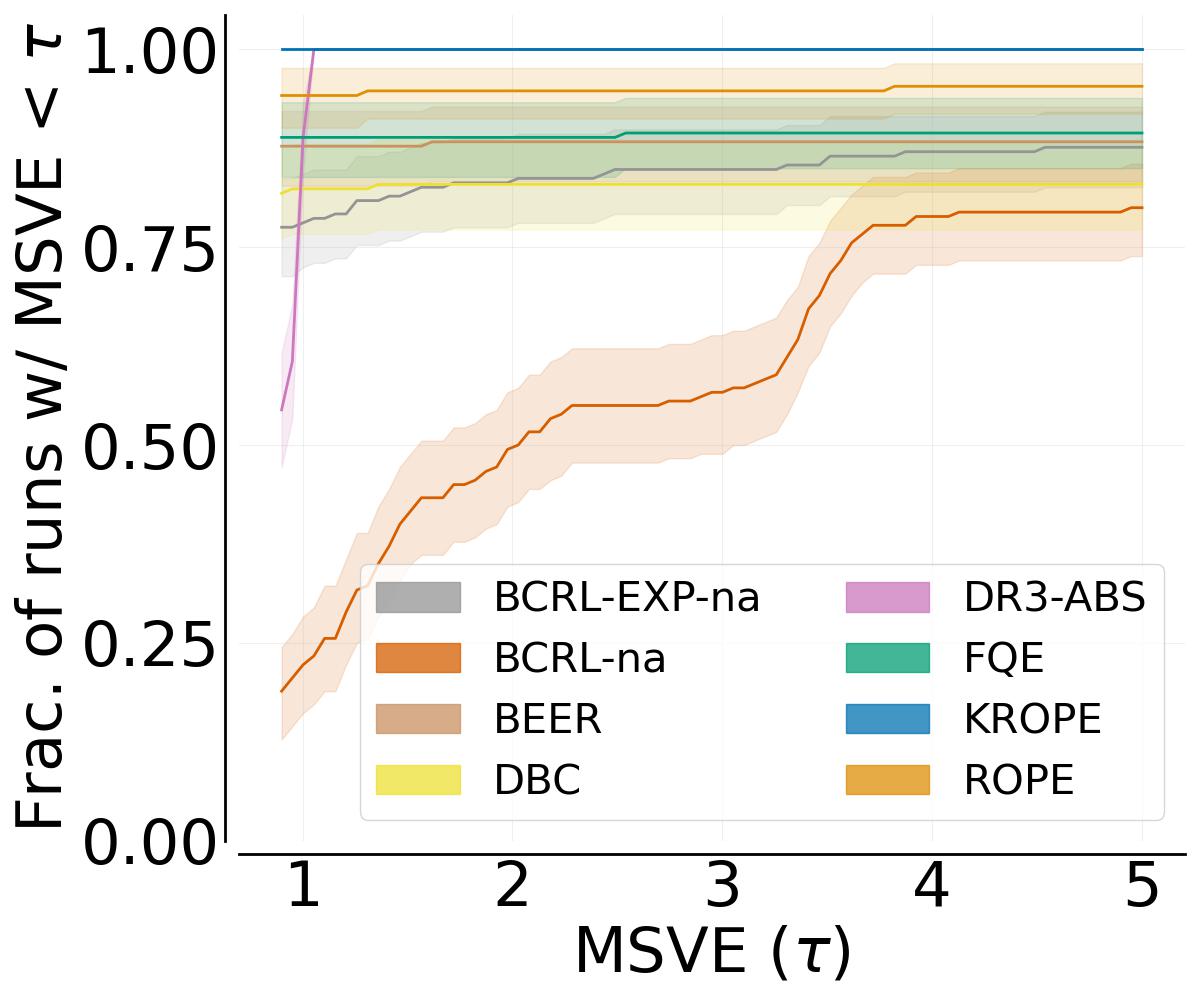}}
        \subfigure[WalkerStand]{\label{fig:dm_walker_hp}\includegraphics[scale=0.13]{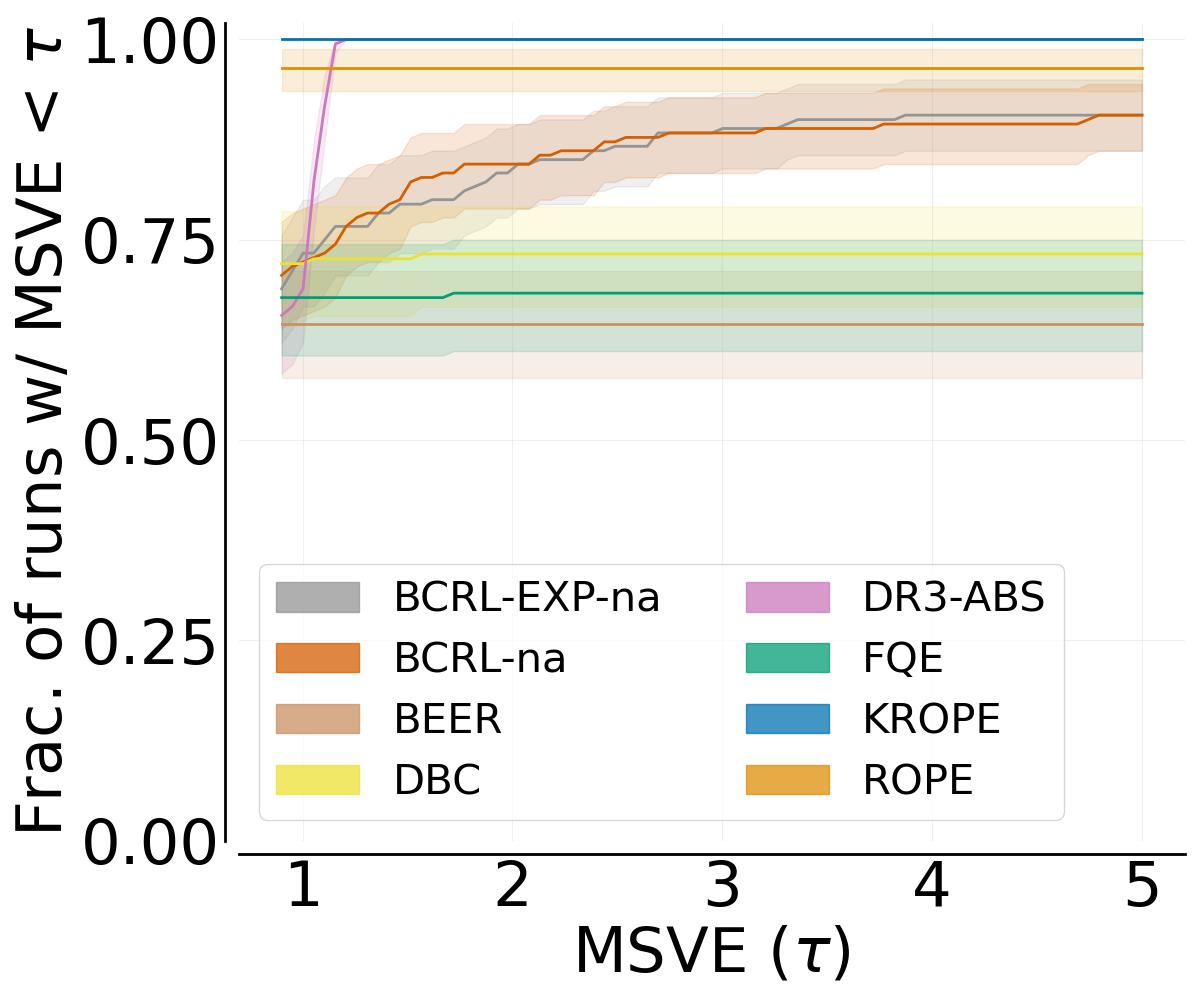}}
    \caption{\footnotesize Hyperparameter sensitivity plots on CartPoleSwingUp and WalkerStand. Results are over $20$ trials for each hyperparameter configuration and shaded region is the $95\%$ confidence interval. Larger area under the curve is better.}
    \label{fig:dm_control_hparam}
\end{figure}

\paragraph{Hyperparameter Sensitivity.} In \textsc{ope}, hyperparameter tuning can be challenging since it may be infeasible to get access to ground truth performance of $\pieval$ \citep{fu_opebench_2021}. Therefore, we prefer algorithms that are robust to hyperparameter tuning, i.e, they reliably produce accurate \textsc{ope} estimates for a wide range of hyperparameters. In Figure~\ref{fig:dm_control_hparam}, we present the performance profile for each algorithm across \emph{all} hyperparameter combinations and \emph{all} trials \citep{agarwal_precipice_2021}. We tune the hyperparameters discussed in Appendix~\ref{app:emp_setup}. We find that $100\%$ \textsc{krope} runs across all instances produce \textsc{msve} $\leq 1$, which is not the case with other algorithms. This result indicates the reliability of using \textsc{krope} for \textsc{ope} over other algorithms including other bisimulation-based algorithms.

\begin{tcolorbox}[colback=blue!10!white, colframe=blue!70!white, title=\textbf{Takeaway $\# 2$: Practical Offline Policy Evaluation with \textsc{krope}}]
\textsc{krope} can \emph{increase} the stability and accuracy of evaluation of offline \textsc{rl} agents.
\end{tcolorbox}

\subsection{Stability of the \textsc{krope} Learning Procedure}
\label{app:exp_krope_div}

In this section, we compare the susceptibility of  \textsc{krope}'s and \textsc{fqe}'s semi-gradient learning to divergence. We refer the reader to Appendix~\ref{app:emp_setup} for more details.

We conduct our experiments on the Markov reward process (\textsc{mrp}) in Figure~\ref{fig:vanroy_mrp} (see \citet{Feng2019AKL}). The \textsc{mrp} consists of $4$ non-terminal states, $1$ terminal state (the box), and only $1$ action. The value function estimate is linear in the weights $w = [w_1, w_2, w_3]$, so, starting clockwise from the left-most circle, the native features of the states are $[1, 0, 0]$, $[0, 1, 0]$ , $[0, 0, 2]$, and $[0, 0, 1]$. In this setup, we say a transition is \emph{bad} if the bootstrapping target is a moving target for the current state. For example, the transition from $w_3$ to $2w_3$ is a bad transition since updates made to $w_3$ may move $2w_3$ further away. When this transition is sampled at a frequency that is different from the on-policy distribution, algorithms such as \textsc{td}, \textsc{lspe}, and \textsc{fqe} tend to diverge \citep{asadi_2024_td}.  Similarly, for \textsc{krope}, which uses its weights to compute the latent representation of states, we would expect that \emph{pairs} of transitions that lead to moving dot product targets are bad transition pairs.

\begin{figure*}[hbtp]
    \centering
         \subfigure[Divergence Counterexample]{\label{fig:vanroy_mrp}\includegraphics[scale=0.27]{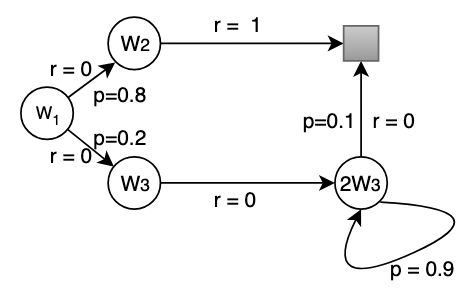}}
        \subfigure[\textsc{krope} \textsc{mse} Loss]{\label{fig:vanroy_krope}\includegraphics[scale=0.13]{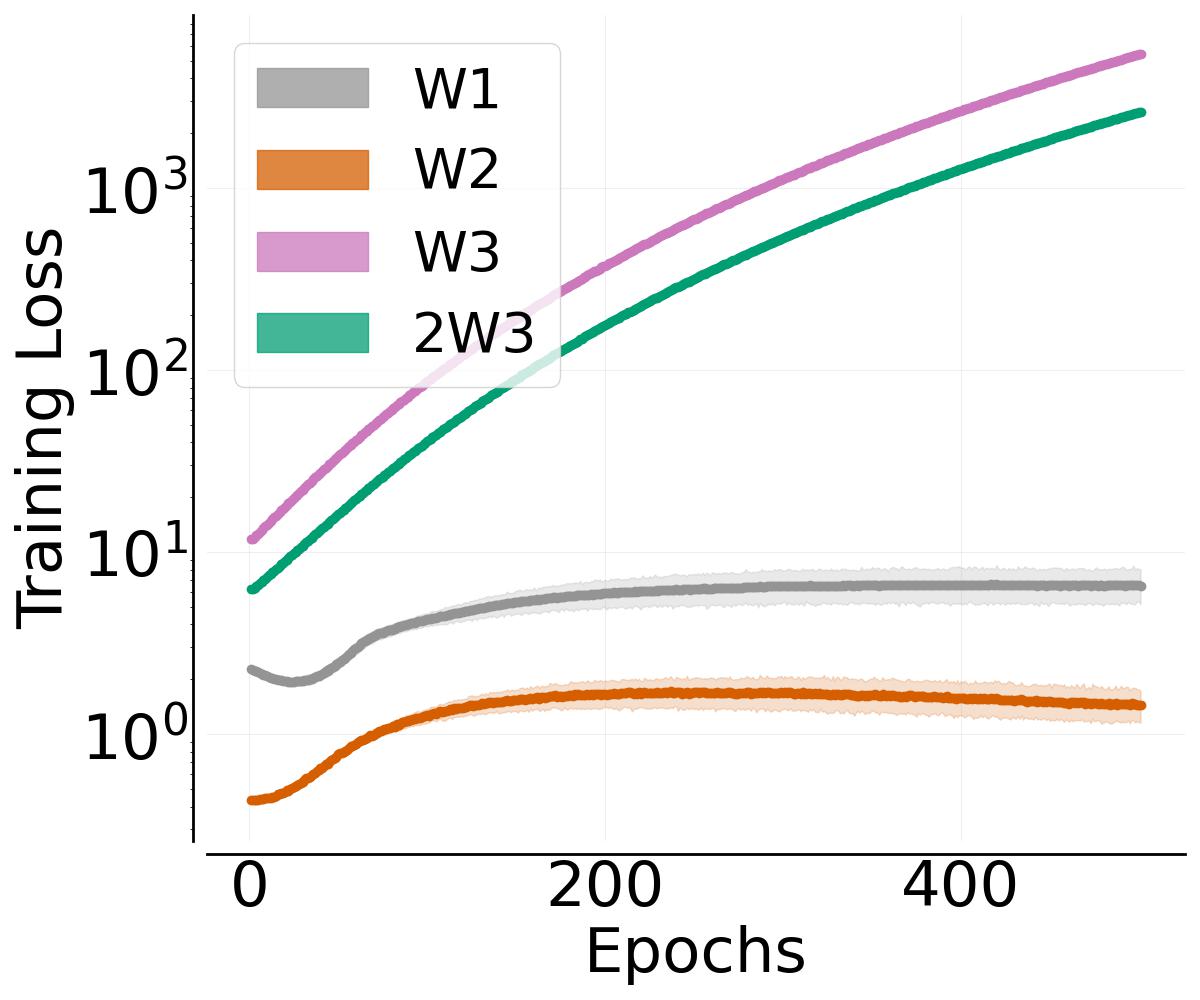}}
        \subfigure[\textsc{fqe+krope} \textsc{mse}  Loss]{\label{fig:vanroy_fqekrope}\includegraphics[scale=0.13]{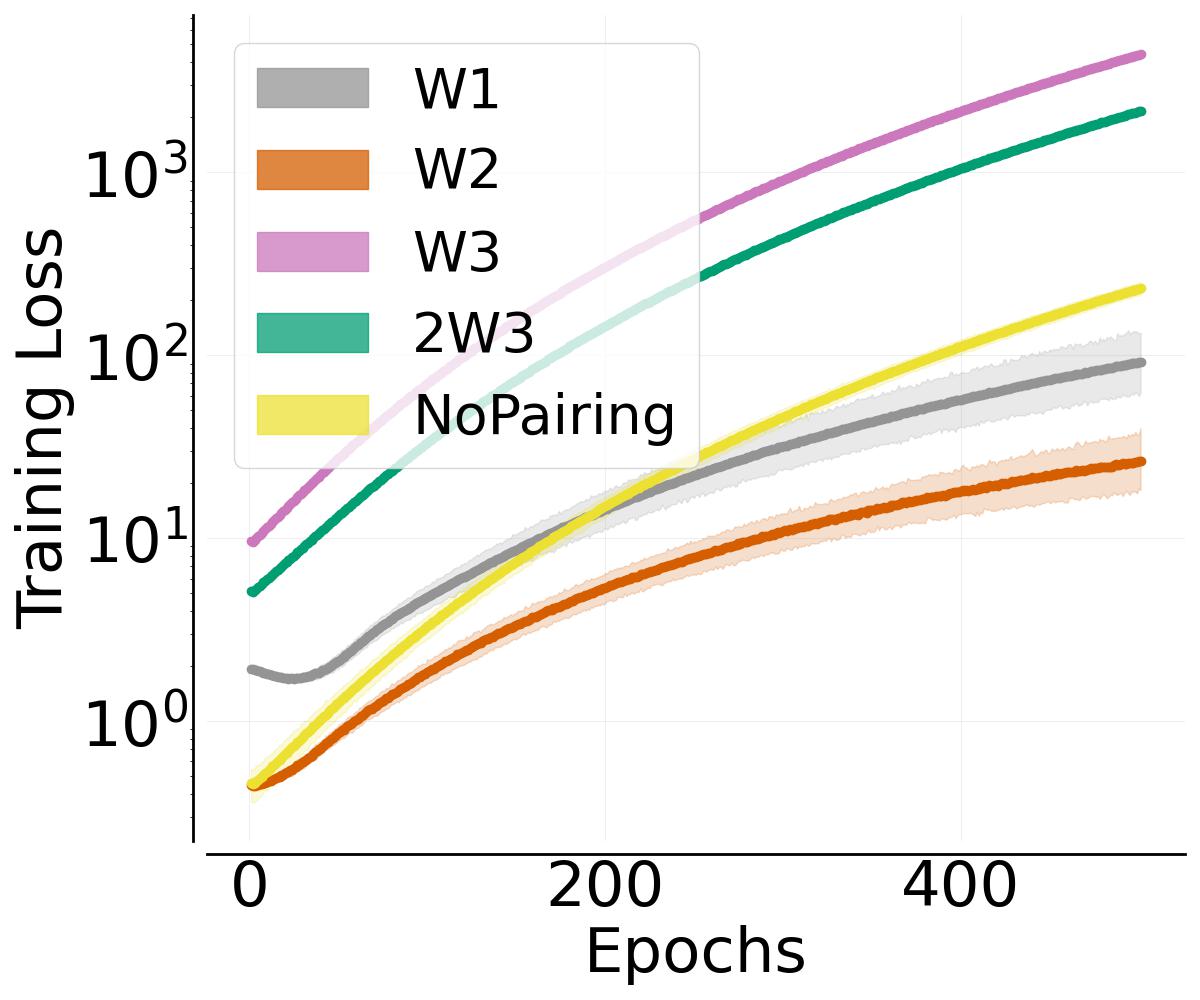}}
    \caption{\footnotesize Figure~\ref{fig:vanroy_mrp}: \textsc{mrp} counterexample designed to illustrate divergence; $r$ denotes the rewards and $p$ denotes the probability of transition \citep{Feng2019AKL}. Figures~\ref{fig:vanroy_krope} and \ref{fig:vanroy_fqekrope}: \textsc{krope} training loss and \textsc{fqe+krope} training loss vs. epochs respectively when different datasets are paired with $\mathcal{D}_1$; results are averaged over $20$ trials, shaded region is the $95\%$ confidence interval, and lower is better.}
    \label{fig:van_roy}
\end{figure*}

To understand the stability of the learning procedures, we design the following experiment when using \textsc{krope} as an auxiliary loss function to \textsc{fqe}. We first create an on-policy dataset $\mathcal{D}^{\text{on}}$. We then define off-policy datasets of the form $\mathcal{D}^{s}$, which consists of transitions starting from the specified state $s$, where $s = \{w_1, w_2, w_3, 2w_3\}$. Using these datasets, we then construct $\mathcal{D}_1 = \mathcal{D}^{\text{on}} \cup \mathcal{D}^{w_3}$ and four $\mathcal{D}_2$ variations: 1) 
$\mathcal{D}_2^{w_1} = \mathcal{D}^{\text{on}} \cup \mathcal{D}^{w_1}$,
2) $\mathcal{D}_2^{w_2} = \mathcal{D}^{\text{on}} \cup \mathcal{D}^{w_2}$, 3) $\mathcal{D}_2^{w_3} = \mathcal{D}^{\text{on}} \cup \mathcal{D}^{w_3}$, and 4) $\mathcal{D}_2^{2w_3} = \mathcal{D}^{\text{on}} \cup \mathcal{D}^{2w_3}$. Therefore, $\mathcal{D}_1, \mathcal{D}_2^{w_3}, \mathcal{D}_2^{2w_3}$ consist of mostly bad transitions while $\mathcal{D}_2^{w_1}, \mathcal{D}_2^{w_2}$ consist of good transitions. Since \textsc{krope} samples \emph{pairs} of transitions, we investigate the consequences of pairing $\mathcal{D}_1$ with a $\mathcal{D}_2$ variation. We pair each of the four $\mathcal{D}_2$ datasets with $\mathcal{D}_1$  and train \textsc{fqe}+\textsc{krope}, where \textsc{fqe} is fed transition samples from $\mathcal{D}_1$ and \textsc{krope} is fed samples from $\mathcal{D}_1$ and the specific $\mathcal{D}_2$ variation.

In Figure~\ref{fig:vanroy_krope}, we show the training loss of only \textsc{krope} to analyze when \textsc{krope} may diverge. We find that even though $\mathcal{D}_1$ consists of mostly bad transitions, if \textsc{krope} samples good transitions from $\mathcal{D}_2^{w_1}$ or $\mathcal{D}_2^{w_2}$, its divergence is mitigated. However, if \textsc{krope} samples bad transitions from $\mathcal{D}_2^{w_3}$ or $\mathcal{D}_2^{2w_3}$, it diverges since the pairing of samples from $(\mathcal{D}_1, \mathcal{D}_2^{w_3})$ and $(\mathcal{D}_1, \mathcal{D}_2^{2w_3})$ leads to \textsc{krope} chasing a moving bootstrapped dot product target. Therefore, sampling \emph{pairs} of bad transitions can make \textsc{krope} more likely to diverge, while sampling a single bad transition (with a good transition) can make it less likely to diverge.

In Figure~\ref{fig:vanroy_fqekrope}, we plot the combined training loss of \textsc{fqe+krope}. As expected, when \textsc{fqe} uses  only bad transitions from $\mathcal{D}_1$, it diverges (No Pairing). In fact, in all cases, the divergence is due to \textsc{fqe} even when the corresponding \textsc{krope} variation is not diverging (see $\mathcal{D}_2^{w_1}$ or $\mathcal{D}_2^{w_2}$ in Figure~\ref{fig:vanroy_krope}). For the $\mathcal{D}_2^{w_3}$ or $\mathcal{D}_2^{2w_3}$ variations, both \textsc{fqe} and \textsc{krope} diverge. Therefore, sampling single bad transitions makes \textsc{fqe} more likely to diverge \citep{asadi_2024_td}.

While we hand-designed these datasets, in general, we would expect that the probability of sampling a \emph{pair} of bad transitions  is less than that of sampling a single bad transition\footnote{We expect this statement to roughly hold true unless the data consists of mostly bad transitions; see Appendix~\ref{app:emp_setup}}. These experiments show that while value function learning with \textsc{fqe} may diverge, representation learning  with \textsc{krope} may not.
\vspace{-5pt}

\begin{tcolorbox}[colback=blue!10!white, colframe=blue!70!white,  title=\textbf{Takeaway $\#3$: \textsc{krope} Divergence}]
While \textsc{fqe}’s divergence is due to sampling bad transitions, \textsc{krope}'s divergence is due to sampling bad \emph{pairs} of transitions. Intuitively, since the probability of sampling a bad transition \emph{pair} is less than that of sampling a single bad transition, \textsc{krope} training may be easier than \textsc{fqe} training.
\end{tcolorbox}

\section{Limitations and Future Work}
\label{sec:limit}
In this section, we discuss limitations and future work. A shortcoming of \textsc{krope}'s semi-gradient algorithm is its susceptibility to divergence  \citep{sutton_rlbook_2018}. While we employed commonly-used techniques such as layernorm and wide neural networks to mitigate divergence \citep{Ota2021TrainingLN, gallici2024simplifyingdeeptemporaldifference}, the  consequences of a semi-gradient method may still exist. One potential remedy is to avoid semi-gradient learning. \citet{Feng2019AKL} suggest to replace the fixed-point loss function of semi-gradient methods with an equivalent expression that leverages the Legendre-Fenchel transformation  \citep{RockWets98}. However, a drawback with this approach is that the new learning objective is a minimax procedure, which can be challenging to optimize in practice. In future work, we will explore the viability of this approach to design a provably convergent version of \textsc{krope}.

\section{Conclusion}
In this work, we tackled the problem of stabilizing offline value function learning in reinforcement learning. We introduced a bisimulation-based representation learning algorithm, kernel representations for \textsc{ope} (\textsc{krope}), that shapes the state-action representations to stabilize this procedure. Theoretically, we showed that \textsc{krope} representations are stable from two perspectives: 1) non-expansiveness, i.e., they lead to value function learning updates that enable convergence to a fixed-point and 2) Bellman completeness, i.e., they satisfy a condition for data-efficient policy evaluation. Empirically, we showed that \textsc{krope} leads to more stable and accurate offline value function learning than baselines. We also demonstrated when  representation learning with \textsc{krope} may be easier than value function learning with \textsc{fqe}. Our work showed that bisimulation-based representation learning can improve the stability and accuracy of long-term performance evaluations of offline reinforcement learning agents.

\section*{Impact Statement}
Our work is largely focused on studying fundamental RL research questions, and thus we do not see any immediate negative societal impacts.

\section*{Acknowledgments}

The authors thank Abhinav Narayan Harish, Adam Labiosa, Andrew Wang, Anshuman Senapati, David Abel, Kyle Domico, Ishan Durugkar, Pablo Samuel Castro, Prabhat Nagarajan, Tengyang Xie, and the anonymous reviewers at \textsc{iclr} and \textsc{icml}  for feedback on our work, and thank Fanghui Liu for inspiring discussions on kernel methods. Y. Chen is partially supported by NSF CCF-2233152. Q.\ Xie is supported in part by National Science Foundation (NSF) grants CNS-1955997, EPCN-2339794 and EPCN-2432546.
J.\ Hanna acknowledges support from NSF (IIS-2410981), American Family Insurance through a research partnership with the University of Wisconsin—Madison’s Data Science Institute, and Sandia National Labs through a University Partnership Award.

\bibliography{main}
\bibliographystyle{icml2025}

\newpage
\appendix
\onecolumn
\section{Background}
\label{app:background}
In this section, we present the theoretical background.

\subsection{Bisimulation Metrics}
In this section, we present background information on bisimulations and its associated metrics. Our proposed representation learning algorithm  is a bisimulation-based algorithm. Bisimulation abstractions are those under which two states with identical reward functions and that lead to identical groups of next states under any action are classified as similar \citep{ferns_bisim_2004, ferns_bisim_2011, ferns_bisim_2014}. Bisimulations are the strictest forms of abstractions. In practice, the exact bisimulation criterion is difficult to satisfy computationally and statistically. A more relaxed version of this notion is the \emph{$\pi$-bisimulation metrics}. These metrics capture the similarity between two states such that two states are considered similar if they have identical \emph{expected} reward functions and \emph{expected} transitions to identical groups of next states under a \emph{fixed policy $\pi$} \citep{castro_scalable_2019}. %

We first give the definition of bisimulation.
\begin{definition} \citep{li_towards_2006} An abstraction $\phi:\mathcal{S}\to\mathcal{S}^\phi$ over the state space $\mathcal{S}$ is a bisimulation if for any action $a$ and any abstract state $s^\phi\in\mathcal{S}^\phi$, $\phi$ is such that for any two states $s_1, s_2\in\mathcal{X}$, $\phi(s_1) = \phi(s_2)$ implies that $r(s_1, a) = r(s_2, a)$ and $\sum_{s'\in s^\phi}P(s'|s_1, a) = \sum_{s'\in s^\phi}P(s'|s_2, a)$.
\end{definition}

Below we define $\pi$-bisimulations for state-actions instead of states:

\begin{definition} \citep{castro_scalable_2019} An abstraction $\phi:\mathcal{X}\to\mathcal{X}^\phi$ over the state-action space $\mathcal{X}$ is a $\pi$-bisimulation for a fixed policy $\pi$ if for any two state-actions $x, y\in\mathcal{X}$ and abstract state-action $x^\phi\in\mathcal{X}^\phi$, $\phi$ is such that $\phi(x) = \phi(y)$ implies that $r(x) = r(y)$ and $\sum_{x'\in x^\phi}P^\pi(x'|x) = \sum_{x'\in x^\phi}P^\pi(x'|y)$.
\end{definition}

The above definitions are based on exact groupings between state-actions. This strictness motivates the use of bisimulation and $\pi$-bisimulation metrics, which we define below.

\begin{theorem} \citep{ferns_bisim_2004} Let $\mathcal{M}(\mathcal{S})$ be the space of bounded pseudometrics on the state-space $\mathcal{S}$. Then define $\mathcal{B}:\mathcal{M}(\mathcal{S})\to \mathcal{M}(\mathcal{S})$ such that for each $d\in\mathcal{M}(\mathcal{S})$:
\begin{equation*}
    \mathcal{B}(d)(s_1,s_2) = \max_{a\in\mathcal{A}} (|r(s_1, a) - r(s_2, a)| + \gamma \mathcal{W}(d)(P(\cdot|s_1,a), P(\cdot|s_2,a)),
\end{equation*}
where $\mathcal{W}$ is Wasserstein distance between the two distributions under metric $d$. Then $\mathcal{B}$ has a unique fixed point, $d^*$, and $d^*$ is a bisimulation metric.
\end{theorem}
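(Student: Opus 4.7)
The plan is to establish both existence/uniqueness of a fixed point and the bisimulation-metric property via the Banach fixed-point theorem on $\mathcal{M}(\mathcal{S})$ endowed with the sup-norm $\|d_1 - d_2\|_\infty := \sup_{s_1,s_2} |d_1(s_1,s_2) - d_2(s_1,s_2)|$. First I would verify that $\mathcal{B}$ maps $\mathcal{M}(\mathcal{S})$ into itself: non-negativity, symmetry, and $\mathcal{B}(d)(s,s)=0$ are immediate from the formula; the triangle inequality follows by combining the triangle inequality for $|r(\cdot,a)-r(\cdot,a)|$ with the distribution-level triangle inequality for the pseudometric $\mathcal{W}(d)$ and then using sub-additivity of $\max_a$; boundedness uses $r\in[-1,1]$ together with the assumed boundedness of $d$. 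Completeness of $(\mathcal{M}(\mathcal{S}),\|\cdot\|_\infty)$ follows from the usual argument that uniform limits of bounded pseudometrics remain bounded pseudometrics.

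Second, I would prove that $\mathcal{B}$ is a $\gamma$-contraction. For any $d_1,d_2 \in \mathcal{M}(\mathcal{S})$, the reward terms cancel in $\mathcal{B}(d_1)(s_1,s_2) - \mathcal{B}(d_2)(s_1,s_2)$, leaving only $\gamma$ times the difference of Wasserstein terms. The key lemma is that for any distributions $\mu,\nu$,
\[
|\mathcal{W}(d_1)(\mu,\nu) - \mathcal{W}(d_2)(\mu,\nu)| \leq \|d_1 - d_2\|_\infty,
\]
which I would prove by taking an optimal coupling $\pi^*$ achieving $\mathcal{W}(d_1)(\mu,\nu)$ and evaluating its expected cost under $d_2$; since the two cost functions differ pointwise by at most $\|d_1-d_2\|_\infty$, the expected cost (and hence $\mathcal{W}(d_2)(\mu,\nu)$) changes by at most that amount, with the reverse direction following symmetrically. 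Combining this with $|\max_a f(a) - \max_a g(a)| \leq \max_a |f(a)-g(a)|$ yields $\|\mathcal{B}(d_1)-\mathcal{B}(d_2)\|_\infty \leq \gamma \|d_1-d_2\|_\infty$, and Banach's theorem produces the unique fixed point $d^*$.

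Finally, to show that $d^*$ is a bisimulation metric, I would characterize its zero set $Z := \{(s_1,s_2) : d^*(s_1,s_2) = 0\}$ as coinciding with the bisimilarity relation $\sim$. For $\sim\;\subseteq Z$, I would start from the indicator pseudometric that is $0$ on bisimilar pairs and a large constant elsewhere, use the bisimulation property to construct, for each action $a$, a coupling between $P(\cdot|s_1,a)$ and $P(\cdot|s_2,a)$ supported on $\sim$ (so the Wasserstein term vanishes) together with matching rewards, iterate $\mathcal{B}$, and invoke uniqueness to conclude $d^*(s_1,s_2)=0$. For $Z \subseteq \sim$, the fixed-point identity at $(s_1,s_2) \in Z$ forces $r(s_1,a)=r(s_2,a)$ and $\mathcal{W}(d^*)(P(\cdot|s_1,a),P(\cdot|s_2,a))=0$ for every $a$; the latter, via an optimal-coupling argument, implies that $P(\cdot|s_1,a)$ and $P(\cdot|s_2,a)$ assign equal mass to every $Z$-equivalence class, which is exactly the bisimulation condition.

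The main obstacle will be the Wasserstein-Lipschitz lemma together with the converse direction of the kernel characterization, where one must argue that a vanishing Wasserstein distance under the pseudometric $d^*$ forces the two transition distributions to have identical projections onto the $d^*$-equivalence classes. Making this rigorous requires careful appeal to existence of optimal couplings (by compactness or Kantorovich-Rubinstein duality) and a disintegration argument on the coupling's support; this is the technical crux, whereas the contraction and completeness steps are largely routine.
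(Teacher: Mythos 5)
The paper states this result purely as background and gives no proof, deferring entirely to the cited source (Ferns et al., 2004); your proposal reconstructs essentially that original argument — Banach's fixed-point theorem on the complete space of bounded pseudometrics under the sup-norm, with the contraction step resting on the $1$-Lipschitz dependence of $\mathcal{W}(\cdot)(\mu,\nu)$ on its cost function, followed by the kernel characterization identifying $\{(s_1,s_2): d^*(s_1,s_2)=0\}$ with the bisimilarity relation. The outline is correct, and the only points requiring care are the ones you already flag (existence of optimal couplings, the gluing argument giving the triangle inequality for $\mathcal{W}(d)$, and the disintegration step in the converse kernel direction), all of which are immediate in the finite state-space setting this paper works in.
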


Similarly, we have the $\pi$-bisimulation metric:
\begin{theorem} \citep{castro_scalable_2019} Let $\mathcal{M}(\mathcal{X})$ be the space of bounded pseudometrics on the state-action space $\mathcal{X}$ and $\pi$ be a fixed policy. Then define $\mathcal{B}:\mathcal{M}(\mathcal{X})\to \mathcal{M}(\mathcal{X})$ such that for each $d\in\mathcal{M}(\mathcal{S})$:
\begin{equation*}
    \mathcal{B}(d)(x,y) = |r(x) - r(y)| + \gamma \mathcal{W}(d)(P^\pi(\cdot|x), P^\pi(\cdot|y)),
\end{equation*}
where $\mathcal{W}$ is Wasserstein distance between the two distributions under metric $d$. Then $\mathcal{B}$ has a unique fixed point, $d^*$, and $d^*$ is a $\pi$-bisimulation metric.
\end{theorem}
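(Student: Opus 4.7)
The plan is a standard Banach fixed-point argument tailored to the bounded-pseudometric space, followed by a verification step that the resulting fixed point does indeed witness $\pi$-bisimilarity. The overall structure will be: (i) show $\mathcal{B}$ is well-defined as a self-map on $\mathcal{M}(\mathcal{X})$; (ii) equip $\mathcal{M}(\mathcal{X})$ with the sup metric $\rho(d_1,d_2) := \sup_{x,y} |d_1(x,y) - d_2(x,y)|$ and show the resulting space is complete; (iii) prove $\mathcal{B}$ is a $\gamma$-contraction in $\rho$; (iv) invoke Banach's theorem to conclude existence and uniqueness of the fixed point $d^*$; and (v) show that $d^*$ satisfies the defining property of a $\pi$-bisimulation metric.

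For step (i), I would verify the four pseudometric axioms pointwise: $\mathcal{B}(d)(x,x)=0$ is immediate because $|r(x)-r(x)|=0$ and $\mathcal{W}(d)$ of a distribution with itself is $0$; symmetry follows from symmetry of $|\cdot|$ and of $\mathcal{W}$; non-negativity is inherited from $|r(x)-r(y)|\geq 0$ and $\mathcal{W}(d)\geq 0$; and the triangle inequality uses the triangle inequality for $|\cdot|$ together with the triangle inequality for $\mathcal{W}(d)$ (which holds whenever $d$ is a pseudometric). Boundedness of $\mathcal{B}(d)$ follows from $r\in[-1,1]$ and $d$ being bounded, with $\mathcal{B}(d)\le 2+\gamma\|d\|_\infty$. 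For step (ii), completeness of bounded pseudometrics under the sup metric is standard: a Cauchy sequence $\{d_n\}$ converges uniformly to some bounded function $d_\infty$, and each pseudometric axiom survives the uniform limit.

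The main step is (iii), the contraction property. The key fact I would invoke is the Lipschitz-in-$d$ property of Wasserstein distance: for any two bounded pseudometrics $d_1, d_2$ and any distributions $\mu,\nu$ on $\mathcal{X}$,
\[
\bigl|\mathcal{W}(d_1)(\mu,\nu) - \mathcal{W}(d_2)(\mu,\nu)\bigr| \;\le\; \rho(d_1,d_2).
\]
This follows from the Kantorovich dual form of $\mathcal{W}$ (or alternatively from the coupling definition, bounding $\int(d_1 - d_2)\,\mathrm{d}\Gamma$ for any coupling $\Gamma$). Granting this, the reward terms in $\mathcal{B}(d_1)(x,y)$ and $\mathcal{B}(d_2)(x,y)$ cancel, and
\[
\bigl|\mathcal{B}(d_1)(x,y) - \mathcal{B}(d_2)(x,y)\bigr| \le \gamma \,\rho(d_1,d_2),
\]
uniformly in $(x,y)$, which gives $\rho(\mathcal{B}(d_1),\mathcal{B}(d_2))\le \gamma\,\rho(d_1,d_2)$. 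Since $\gamma\in[0,1)$, $\mathcal{B}$ is a contraction, and Banach's fixed-point theorem yields a unique $d^*\in\mathcal{M}(\mathcal{X})$ with $\mathcal{B}(d^*)=d^*$.

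For step (v), I would show that $d^*$ is a $\pi$-bisimulation metric in the sense that $d^*(x,y)=0$ iff $x$ and $y$ are $\pi$-bisimilar: the ``only if'' direction uses $d^*(x,y) = |r(x)-r(y)| + \gamma\mathcal{W}(d^*)(P^\pi(\cdot|x),P^\pi(\cdot|y))$, so $d^*(x,y)=0$ forces $r(x)=r(y)$ and $\mathcal{W}(d^*)=0$ on the pushforwards, which (by unfolding the coupling definition and an inductive argument over the equivalence classes $\{y : d^*(x,y)=0\}$) implies the required matching of transitions aggregated over abstract classes. The ``if'' direction defines $d_0\equiv 0$ on bisimilar pairs, shows it is a fixed point of $\mathcal{B}$ restricted to those pairs, and appeals to uniqueness.

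The hard part will be step (v): cleanly matching the fixed-point condition $d^*(x,y)=0$ to the aggregated transition-matching condition in the definition of $\pi$-bisimulation requires the observation that $\mathcal{W}(d^*)(\mu,\nu)=0$ characterizes equality of $\mu$ and $\nu$ after quotienting by the equivalence classes induced by $d^*$. The contraction step is routine once the Wasserstein Lipschitz lemma is stated, so the bulk of the technical work will lie in that characterization and in verifying that the equivalence relation $\{(x,y):d^*(x,y)=0\}$ is exactly the bisimulation relation.
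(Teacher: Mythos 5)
The paper does not actually prove this statement: it is quoted as background and attributed to \citet{castro_scalable_2019}, so there is no in-paper proof to compare against line by line. The closest internal analogue is the paper's treatment of the \textsc{krope} operator (Lemmas~\ref{lemma:krope_contract} and \ref{lemma:krope_complete} and Proposition~\ref{prop:krope_fixedpoint}), which carries out exactly your steps (i)--(iv) in the kernel setting: well-definedness on the relevant function space, completeness under $\|\cdot\|_\infty$, a $\gamma$-contraction bound in which the reward terms cancel, and Banach's fixed-point theorem. Your contraction step is the metric-space counterpart, with the Lipschitz-in-$d$ property of $\mathcal{W}$ (provable from the primal coupling formulation by evaluating the optimal coupling for $d_2$ against $d_1$) playing the role that the expectation bound plays in Lemma~\ref{lemma:krope_contract}. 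That part of your argument is correct and standard.

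Two remarks on step (v), which the paper never needs for the metric version but does address for the kernel version in Theorem~\ref{thm:krope_bcrl}. First, the ``only if'' direction does not require induction: in a finite space, $\mathcal{W}(d^*)(\mu,\nu)=0$ means an optimal coupling is supported on the zero set of $d^*$, and any such coupling immediately forces $\mu(C)=\nu(C)$ for every equivalence class $C$ of $\{(x,y):d^*(x,y)=0\}$; this shows that relation is itself a $\pi$-bisimulation. Second, your ``if'' direction as phrased (``$d_0\equiv 0$ on bisimilar pairs \dots appeals to uniqueness'') is not quite a proof, because uniqueness holds on all of $\mathcal{M}(\mathcal{X})$, not on a restriction; the clean fix is to observe that $M_R:=\{d:\ d(x,y)=0\ \forall (x,y)\in R\}$ is a nonempty closed subset invariant under $\mathcal{B}$ whenever $R$ is a $\pi$-bisimulation (build the coupling supported on $R$-classes), so the fixed point $\lim_n\mathcal{B}^n(0)$ lies in $M_R$. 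It is worth noting that the Wasserstein formulation gives you this characterization for free, whereas the paper's kernel analogue (Lemma~\ref{lemma:mmdzero} feeding into Theorem~\ref{thm:krope_bcrl}) needs an extra injectivity assumption on abstract rewards to make the kernel matrix positive definite, because $\mathrm{MMD}=0$ under a merely positive semidefinite kernel does not by itself identify the two distributions.
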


Using the above metrics, prior works have introduced several representation learning algorithms to learn representations such that the distance between representations in latent space model the above distance metrics \citep{castro_mico_2022, castro_kernel_2023, zhang_learning_2021, kemertas_robustmetric_2021, Pavse_State-Action_2023}.

\subsection{Reproducing Kernel Hilbert Spaces}

Let $\mathcal{X}$ be a finite set and define a function $k:\mathcal{X}\times\mathcal{X}\to\mathbb{R}$ to be a positive semidefinite kernel if it is symmetric and positive semidefinite. We then have for any $\{x_1, x_2, ..., x_n\}\in\mathcal{X}$ and $\{c_1, c_2, ..., c_n\}\in\mathbb{R}$:
\begin{equation*}
    \sum_{i=1}^n \sum_{j=1}^n c_i, c_j k(x_i, x_j) \geq 0
\end{equation*}

Note that if the above inequality is strictly greater than zero whenever $\{c_1,\ldots,c_n\}$ has at least one nonzero, we say the kernel is positive definite. Given a kernel $k$ on $\mathcal{X}$ with the reproducing property,  we can construct a space of functions $\mathcal{H}_k$ referred to as a reproducing kernel Hilbert space (\textsc{rkhs}) with the following steps:
\begin{enumerate}
    \item Construct a vector space of real-valued functions on $\mathcal{X}$ of the form $\{k(x,\cdot):x\in\mathcal{X}\}$.
    \item Equip this space with an inner product given by $\langle k(x,\cdot), k(y,\cdot)\rangle_{\mathcal{H}_k} = k(x,y)$.
    \item Take the completion of the vector space with respect to the above inner product.
\end{enumerate}

Our resulting vector space $\mathcal{H}_k$ is then an \textsc{rkhs}.

It is often convenient to write $\psi(x) := k(x,\cdot) \in\mathcal{H}_k$, which is called the feature map and is an embedding of $x$ in $\mathcal{H}_k$. One can also embed probability distributions into $\mathcal{H}_k$. That is, $\Phi:\mathcal{P}(\mathcal{X})\to \mathcal{H}_{k}$, which maps probability distributions over $\mathcal{X}$ to $\mathcal{H}_{k}$. We define $\Phi(\mu) = \E_{X\sim\mu}[\psi(X)]$, which is the mean embedding in $\mathcal{H}_{k}$ under $\mu$.

Given these embeddings in the Hilbert space, we can quantify the distances between elements in $\mathcal{X}$ and $\mathcal{P}(\mathcal{X})$ in terms of the embeddings. 

\begin{definition} Given a positive semidefinite kernel $k$, define $\rho_k$ as its induced distance: 
\begin{equation*}
\rho_k := \|\psi(x) - \psi(y)\|_{\mathcal{H}_k}.
\end{equation*}
By expanding the inner product, the squared distance can be written in terms of $k$:
\begin{equation*}
    \rho^2_k(x,y) = k(x,x) + k(y,y) - 2k(x,y).
\end{equation*}
\end{definition}
Similarly, we have distances on $\mathcal{P}(\mathcal{X})$ using $\Phi$:
\begin{definition} \citep{JMLR:v13:gretton12a} Let $k$ be a kernel on $\mathcal{X}$ and $\Phi:\mathcal{P}(\mathcal{X})\to\mathcal{H}_k$ be as defined above. Then the Maximum Mean Discrepancy (MMD) is a pseudo metric on $\mathcal{P}(\mathcal{X})$ defined by:
\begin{equation*}
    \text{MMD}(k)(\mu, \nu) = \|\Phi(\mu) - \Phi(\nu)\|_{\mathcal{H}_{k}}.
\end{equation*}
\end{definition}

The core usage of the \textsc{rkhs} is to precisely characterize the nature of the \textsc{krope} kernel. In practice, we deal with neural network representations, which are embedded in Euclidean space. Therefore, our goal is to learn representations in Euclidean space that approximate the properties of representations in the \textsc{rkhs}. For more details on the \textsc{rkhs}, we refer readers to \cite{castro_kernel_2023} and \cite{JMLR:v13:gretton12a}.

\subsection{Algorithm Pseudocode}

In this section, we present the pseudocode for \textsc{lspe} and for our \textsc{fqe} + auxiliary task with \textsc{lspe} for \textsc{ope} setup.

 \begin{algorithm}[H]
  \caption{\textsc{lspe}}
  \label{algo:lspe}
  \begin{algorithmic}[1]
    \STATE Input: policy to evaluate $\pi_e$, batch $\mathcal{D}$, fixed encoder function $\phi:\sset\times\aset\to\mathbb{R}^d$.
    \STATE Initialize $\theta_0 \in\mathbb{R}^d$ randomly.
    \STATE Apply $\phi$ to $\mathcal{D}$ to generate $\Phi$.
    \FOR{t = 0, 1, 2, ... $T - 1$}
        \STATE $\theta_{t+1} \leftarrow (\E[\Phi^{\top}\Phi])^{-1} \E[\Phi^{\top}(r + \gamma P^{\pieval}\Phi\theta_{t})]$
    \ENDFOR
    \STATE Return $\theta_T$
  \end{algorithmic}
\end{algorithm}

 \begin{algorithm}[H]
  \caption{\textsc{fqe} + representation learning auxiliary task with \textsc{lspe} for \textsc{ope}}
  \label{algo:rep_learning}
  \begin{algorithmic}[1]
    \STATE Input: policy to evaluate $\pi_e$, batch $\mathcal{D}$, encoder parameters class $\Omega$, encoder function $\phi:\sset\times\aset\to\mathbb{R}^d$, action-value linear function $q:\mathbb{R}^d\to\mathbb{R}$, $\alpha\in [0, 1]$.
    
    \FOR{epoch = 1, 2, 3, ... T}
        \STATE $\mathcal{L}(\omega) := \alpha \text{Aux-Task}(\phi_\omega, \mathcal{D}, \pieval) + (1 - \alpha)\E_{(s,a,s')\sim\mathcal{D}}\left[\left(r(s,a) + \gamma \E_{a'\sim\pieval} [q_{\bar{\xi}}(\phi_{\hat{\omega}}(s',a'))] - q_{\xi}(\phi_{\hat{\omega}}(s,a))\right)^2\right]$ \COMMENT {where the penultimate features $\phi$ are fed into an auxiliary representation learning algorithm such as \textsc{krope}, \textsc{dr3}, \textsc{beer} etc.}
        \STATE $\hat{\omega}_t := \arg\min_{\omega\in\Omega} \mathcal{L}(\omega)$
        \STATE Periodically run \textsc{lspe}, $\theta := \textsc{lspe}(\pieval, \mathcal{D}, \phi_\omega)$.
        \STATE Compute estimated action-values, $\hat{q} := \Phi_{\hat{\omega}_t}\theta$. \COMMENT {where $\phi_{\hat{\omega}}$ is applied to $\mathcal{D}$ to get $\Phi_{\omega}$}
    \ENDFOR
    \STATE Return $\hat{q} := \Phi_{\hat{\omega}_T}\theta$. \COMMENT {Estimated action-value function of $\pi_e$, $q^{\pi_e}$.}
  \end{algorithmic}
\end{algorithm}

\section{Theoretical Results}
\label{app:proofs}

In this section, we present the proofs of our main and supporting theoretical results. The first set of proofs in Section~\ref{app:krope_op_valid_proofs} show that \textsc{krope} is a valid operator. While new to our work, the proofs follow those by \cite{castro_kernel_2023}. The next set of proofs in Section~\ref{app:krope_stability_proofs} prove the stability of \textsc{krope} representations and are novel to our work. For presentation purposes, it will often be convenient to refer to a state-action pair as $x\in\mathcal{X}$ instead of $(s,a)$. 

\subsection{\textsc{krope} Operator Validity}
\label{app:krope_op_valid_proofs}

Given the definition of the \textsc{krope} kernel, we now present an operator that converges to $k^\pieval$:
\begin{definition}[\textsc{krope} operator] Given a target policy $\pieval$, the \textsc{krope} operator $\mathcal{F}^{\pi_e}: \mathbb{R}^{\mathcal{X}\times\mathcal{X}} \to \mathbb{R}^{\mathcal{X}\times\mathcal{X}}$ is defined as follows: for each kernel $k:\mathcal{X}\times\mathcal{X}\to\mathbb{R}$, $\forall (s_1,a_1;s_2,a_2)\in \mathcal{X}\times\mathcal{X},$
\begin{equation}
    \label{eq:operator}
    \mathcal{F}^{\pi_e}(k)(s_1, a_1; s_2, a_2) := \underbrace{ \vphantom{\E_{s_1'}[k(s_1')]} k_1(s_1, a_1; s_2, a_2)}_{\text{short-term similarity}} + \gamma \underbrace{\E_{s_1',s_2'\sim P, a_1', a_2'\sim\pi_e}[k(s_1', a_1'; s_2', a_2')]}_{\text{long-term similarity}}
\end{equation}
where $s_1' \sim P(s_1'|s_1,a_1), s_2' \sim P(s_2'|s_2,a_2), a_1' \sim \pieval(\cdot|s_1'), a_2' \sim \pieval(\cdot | s_2')$, and $k_1(s_1, a_1; s_2, a_2) := 1 - \frac{|r(s_1, a_1) - r(s_2, a_2)|}{\left |r_{\text{max}} - r_{\text{min}}\right|}$ is a positive semidefinite kernel.
\label{def:operator}
\end{definition}

We now present the proofs demonstrating the validity of the \textsc{krope} operator. All the proofs in this sub-section model those by \cite{castro_kernel_2023}. The primary difference is that our operator is for state-actions instead of states.

\begin{restatable}{lemma}{kropecontract}
Let $\mathcal{K}(\mathcal{X})$ be the space of positive semidefinite kernels on $\mathcal{X}$. The \textsc{krope} operator $\mathcal{F}^{\pi_e}$ is a contraction with modulus $\gamma$ in $\|\cdot\|_{\infty}$.
\label{lemma:krope_contract}
\end{restatable}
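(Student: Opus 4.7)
The plan is to follow the standard Bellman-style contraction argument, exploiting the fact that the short-term kernel $k_1$ in Equation~(\ref{eq:operator}) is a fixed function independent of the input kernel $k$ and therefore cancels when we take differences.

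First, fix arbitrary kernels $k, k' \in \mathcal{K}(\mathcal{X})$ and an arbitrary pair of state-actions $(x_1, x_2) \in \mathcal{X}\times\mathcal{X}$. Writing $x_i = (s_i, a_i)$ and subtracting the two instances of Equation~(\ref{eq:operator}), the $k_1$ terms cancel, leaving
\begin{equation*}
\mathcal{F}^{\pi_e}(k)(x_1, x_2) - \mathcal{F}^{\pi_e}(k')(x_1, x_2) = \gamma \, \mathbb{E}\bigl[k(x_1', x_2') - k'(x_1', x_2')\bigr],
\end{equation*}
where the expectation is over $s_i' \sim P(\cdot \mid s_i, a_i)$ and $a_i' \sim \pi_e(\cdot \mid s_i')$ for $i \in \{1,2\}$. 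Applying the triangle inequality inside the expectation yields
\begin{equation*}
\bigl|\mathcal{F}^{\pi_e}(k)(x_1, x_2) - \mathcal{F}^{\pi_e}(k')(x_1, x_2)\bigr| \le \gamma \, \mathbb{E}\bigl[|k(x_1', x_2') - k'(x_1', x_2')|\bigr] \le \gamma \, \|k - k'\|_\infty.
\end{equation*}
Taking the supremum over $(x_1, x_2)$ on the left-hand side gives $\|\mathcal{F}^{\pi_e}(k) - \mathcal{F}^{\pi_e}(k')\|_\infty \le \gamma\, \|k - k'\|_\infty$, which is the claimed contraction.

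To justify that $\mathcal{F}^{\pi_e}$ is well-defined as a map $\mathcal{K}(\mathcal{X}) \to \mathcal{K}(\mathcal{X})$, I would also briefly verify that $\mathcal{F}^{\pi_e}(k)$ is itself positive semidefinite whenever $k$ is. The short-term term $k_1$ is PSD by assumption (Definition~\ref{def:operator}), so it suffices to show the long-term term is PSD. Using the mean-embedding interpretation from the \textsc{rkhs} background, the long-term term equals $\langle \Phi(\mu_{x_1}), \Phi(\mu_{x_2}) \rangle_{\mathcal{H}_k}$ where $\mu_x$ denotes the distribution of the next state-action pair under $P$ and $\pi_e$ starting from $x$; any Gram matrix of such inner products is PSD, and the sum of two PSD kernels is PSD.

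I do not expect any significant obstacle: the argument is essentially the standard Bellman contraction, and the only nontrivial structural fact is that the "long-term similarity" term depends linearly on $k$ through a push-forward expectation, which both preserves PSD-ness and commutes nicely with the supremum norm. The proof will essentially be two short displays plus a sentence on PSD preservation.
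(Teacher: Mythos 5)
Your argument is correct and is essentially identical to the paper's proof: both cancel the $k$-independent short-term term $k_1$, pull the difference inside the expectation by linearity, and bound it by $\gamma\|k-k'\|_\infty$ via the triangle inequality before taking the supremum. Your additional check that $\mathcal{F}^{\pi_e}$ maps $\mathcal{K}(\mathcal{X})$ into itself (PSD preservation via the mean-embedding inner product) is a sound bonus the paper omits from this lemma, but it does not change the core contraction argument.
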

\begin{proof}
Let $k_1, k_2 \in \mathcal{K}(\mathcal{X})$. We then have:
\begin{align*}
    &\|\mathcal{F}^{\pi_e}(k_1) - \mathcal{F}^{\pi_e}(k_2)\|_\infty \\
    &= \max_{(x,y)\in\mathcal{X}\times\mathcal{X}}|\mathcal{F}^{\pi_e}(k_1)(x,y) - \mathcal{F}^{\pi_e}(k_2)(x,y)|\\
    &= \gamma\max_{(x,y)\in\mathcal{X}\times\mathcal{X}}|\E_{X'\sim P^\pieval(\cdot|x), Y'\sim P^\pieval(\cdot|y)}[k_1(X', Y')] - \E_{X'\sim P^\pieval(\cdot|x), Y'\sim P^\pieval(\cdot|y)}[k_2(X', Y')]|\\
    &= \gamma\max_{(x,y)\in\mathcal{X}\times\mathcal{X}}|\E_{X'\sim P^\pieval(\cdot|x), Y'\sim P^\pieval(\cdot|y)}[k_1(X', Y') - k_2(X', Y')]|\\
    &\leq \gamma \|k_1 - k_2\|_{\infty}.
\end{align*}
This completes the proof of the lemma.
\end{proof}

\begin{restatable}{lemma}{kropecomplete}
Let $\mathcal{K}(\mathcal{X})$ be the space of positive semidefinite kernels on $\mathcal{X}$. Then the metric space $(\mathcal{K}(\mathcal{X}), \|\cdot\|_{\infty})$ is complete.
\label{lemma:krope_complete}
\end{restatable}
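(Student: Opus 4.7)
The plan is to show completeness by the standard route of picking an arbitrary Cauchy sequence and exhibiting its limit inside $\mathcal{K}(\mathcal{X})$. Concretely, let $\{k_n\}_{n\geq 1}\subset \mathcal{K}(\mathcal{X})$ be Cauchy in $\|\cdot\|_\infty$. Since $\mathcal{X}$ is finite, each $k_n$ can be identified with a symmetric matrix in $\mathbb{R}^{|\mathcal{X}|\times|\mathcal{X}|}$, and $\|\cdot\|_\infty$ is just the entrywise max-norm on this finite-dimensional vector space. Entrywise, the Cauchy condition on $\{k_n\}$ gives that $\{k_n(x,y)\}_n$ is Cauchy in $\mathbb{R}$ for every fixed $(x,y)\in\mathcal{X}\times\mathcal{X}$, so by completeness of $\mathbb{R}$ we may define the pointwise limit $k(x,y):=\lim_{n\to\infty}k_n(x,y)$. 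A standard $\varepsilon/2$ argument then upgrades this pointwise convergence to $\|\cdot\|_\infty$-convergence, which is the candidate limit in the metric space.

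It remains to verify that the limit $k$ actually belongs to $\mathcal{K}(\mathcal{X})$, i.e., that $k$ is symmetric and positive semidefinite. Symmetry is immediate: each $k_n$ satisfies $k_n(x,y)=k_n(y,x)$, and this equality is preserved under pointwise limits. For positive semidefiniteness, I would fix an arbitrary finite collection $\{x_1,\dots,x_m\}\subset\mathcal{X}$ and coefficients $\{c_1,\dots,c_m\}\subset\mathbb{R}$; the inequality
\[
\sum_{i=1}^m\sum_{j=1}^m c_i c_j\, k_n(x_i,x_j)\;\geq\; 0
\]
holds for every $n$, and since this expression is a finite linear combination of entries of $k_n$, the pointwise convergence $k_n\to k$ lets me pass to the limit and conclude
\[
\sum_{i=1}^m\sum_{j=1}^m c_i c_j\, k(x_i,x_j)\;\geq\; 0.
\]
Hence $k\in\mathcal{K}(\mathcal{X})$.

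The essence of the argument is that $\mathcal{K}(\mathcal{X})$ is a closed subset of the complete (because finite-dimensional) normed space of symmetric real-valued functions on $\mathcal{X}\times\mathcal{X}$ equipped with $\|\cdot\|_\infty$, and closed subsets of complete metric spaces are complete. There is no real obstacle here: finiteness of $\mathcal{X}$ trivializes both the convergence of the Cauchy sequence and the topological structure. The only step requiring a moment of care is noting that the defining inequalities for positive semidefiniteness are continuous in the kernel (with respect to $\|\cdot\|_\infty$), so that the PSD cone is closed; this is what guarantees the limit stays in $\mathcal{K}(\mathcal{X})$ rather than on its boundary outside.
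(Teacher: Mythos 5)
Your proof is correct and follows essentially the same route as the paper's: both identify $\mathcal{K}(\mathcal{X})$ as a subset of the finite-dimensional (hence complete) space $\mathbb{R}^{\mathcal{X}\times\mathcal{X}}$ under $\|\cdot\|_\infty$ and then argue the limit of a Cauchy sequence of positive semidefinite kernels remains positive semidefinite. You in fact spell out the closedness of the PSD cone (passing to the limit in the quadratic-form inequality) more explicitly than the paper does, which simply asserts that uniform convergence of PSD kernels yields a PSD limit.
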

\begin{proof}
To show that $\mathcal{K}(\mathcal{X})$ is complete it suffices to show that every Cauchy sequence $\{k_n\}_{n\geq 0}$ has a limiting point in $\mathcal{K}(\mathcal{X})$. Since $\mathcal{X}$ is a finite, the space of function $\mathbb{R}^{\mathcal{X}\times\mathcal{X}}$ is a finite-dimensional vector space, which is complete under $\|\cdot\|_{\infty}$. Thus, the limiting point $k\in\mathbb{R}^{\mathcal{X}\times\mathcal{X}}$ of the Cauchy sequence $\{k_n\}_{n\geq 0}$ lies in $\mathbb{R}^{\mathcal{X}\times\mathcal{X}}$. Moreover, since we are considering only positive semidefinite kernel elements in the Cauchy sequence and they uniformly converge to $k\in\mathbb{R}^{\mathcal{X}\times\mathcal{X}}$, $k$ must also be positive semidefinite. Thus, $\mathcal{K}(\mathcal{X})$ is complete under $\|\cdot\|_{\infty}$. 
\end{proof}

\begin{restatable}{proposition}{kropeunique}\label{prop:krope_fixedpoint}
The  \textsc{krope} operator $\mathcal{F}^{\pi_e}$ has a unique fixed point in $\mathcal{K}(\mathcal{X})$. That is, there is a unique kernel $k^\pieval \in \mathcal{K}(\mathcal{X})$ satisfying
\begin{equation*}
    k^{\pieval}(s_1, a_1; s_2, a_2) = 1 - \frac{|r(s_1, a_1) - r(s_2, a_2)|}{\left |r_{\text{max}} - r_{\text{min}}\right|} + \gamma \E_{s_1',s_2'\sim P, a_1', a_2'\sim\pi_e}[k^{\pieval}(s_1',a_1'; s_2', a_2')].
\end{equation*}
\end{restatable}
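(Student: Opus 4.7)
The plan is to invoke the Banach fixed-point theorem on the metric space $(\mathcal{K}(\mathcal{X}), \|\cdot\|_\infty)$. The two lemmas preceding the proposition do most of the heavy lifting: Lemma~\ref{lemma:krope_complete} gives completeness of the ambient metric space, and Lemma~\ref{lemma:krope_contract} gives that $\mathcal{F}^{\pi_e}$ is a $\gamma$-contraction in the sup norm. What remains is to verify the one hypothesis that is not already handed to us, namely that $\mathcal{F}^{\pi_e}$ actually maps $\mathcal{K}(\mathcal{X})$ into itself. With that closure property in hand, Banach's theorem yields a unique fixed point $k^{\pi_e} \in \mathcal{K}(\mathcal{X})$, and unwinding the definition of $\mathcal{F}^{\pi_e}$ gives exactly the displayed equation.

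The main (and really only) step that requires thought is showing closure: if $k \in \mathcal{K}(\mathcal{X})$ is positive semidefinite, then so is $\mathcal{F}^{\pi_e}(k)$. The short-term kernel $k_1(x,y) = 1 - \frac{|r(x) - r(y)|}{|r_{\max} - r_{\min}|}$ is PSD by hypothesis (stated in Definition~\ref{def:operator}). For the long-term term, I would write any PSD $k$ as an inner product $k(x,y) = \langle \psi(x), \psi(y)\rangle$ in a suitable feature space, and then exploit the fact that the expectation in Equation~(\ref{eq:operator}) factors because $(s_1', a_1')$ and $(s_2', a_2')$ are conditionally independent given $(s_1,a_1)$ and $(s_2,a_2)$. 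This gives
\begin{equation*}
\mathbb{E}[k(X',Y')] = \langle \mu^{\pi_e}(x), \mu^{\pi_e}(y)\rangle, \qquad \mu^{\pi_e}(x) := \mathbb{E}_{X'\sim P^{\pi_e}(\cdot|x)}[\psi(X')],
\end{equation*}
which is manifestly PSD as a Gram matrix of the vectors $\{\mu^{\pi_e}(x)\}_{x\in\mathcal{X}}$. Adding two PSD kernels with nonnegative coefficients preserves PSD, so $\mathcal{F}^{\pi_e}(k) \in \mathcal{K}(\mathcal{X})$.

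Once closure is established, the remainder is a one-line application of Banach: $\mathcal{F}^{\pi_e}$ is a contraction on a complete metric space, so it admits a unique fixed point $k^{\pi_e}\in\mathcal{K}(\mathcal{X})$, and by definition of $\mathcal{F}^{\pi_e}$ this fixed point satisfies the recursive identity in the statement. I do not expect any genuine obstacle here; the argument is structurally identical to how one establishes existence and uniqueness of the value function via the Bellman operator, with the PSD closure check playing the role that $\|\cdot\|_\infty$-boundedness plays in the classical case.
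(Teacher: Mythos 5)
Your proposal is correct and follows essentially the same route as the paper, which likewise deduces the result from Lemmas~\ref{lemma:krope_contract} and \ref{lemma:krope_complete} via Banach's fixed-point theorem. The one thing you add --- verifying that $\mathcal{F}^{\pi_e}$ maps $\mathcal{K}(\mathcal{X})$ into itself by writing $k(x,y)=\langle\psi(x),\psi(y)\rangle$ and using the independence of the two next-state draws to express the long-term term as the Gram kernel of the mean embeddings $\mu^{\pi_e}(x)$ --- is a correct and genuinely needed step that the paper's own two-line proof leaves implicit, so your version is if anything more complete.
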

\begin{proof}
Due to Lemmas~\ref{lemma:krope_contract} and \ref{lemma:krope_complete}, $\mathcal{F}^{\pi_e}$ is a contraction in a complete metric space. Therefore, by Banach's fixed point theorem, the unique fixed point $k^{\pieval}$ exists.
\end{proof}

\begin{restatable}{proposition}{kropedistance}
\label{prop:kropedist}
The \textsc{krope} similarity metric $d_{\textsc{krope}}$ satisfies:
\begin{equation*}
    \forall x,y\in\mathcal{X}, d_{\textsc{krope}}(x,y) = |r(x) - r(y)| + \gamma \text{MMD}^2(k^\pieval)(P^\pieval(\cdot|x), P^\pieval(\cdot|y)).
\end{equation*}
\end{restatable}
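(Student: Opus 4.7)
The plan is to start from the definition
\[d_{\text{\textsc{krope}}}(x,y) = k^{\pieval}(x,x) + k^{\pieval}(y,y) - 2k^{\pieval}(x,y)\]
and unfold each of the three terms using the fixed-point equation for $k^\pieval$ guaranteed by Proposition~\ref{prop:krope_fixedpoint}. Concretely, for any pair $(x,y)$ we have
\[k^\pieval(x,y) = k_1(x,y) + \gamma\,\E_{X' \sim P^\pieval(\cdot|x),\,Y' \sim P^\pieval(\cdot|y)}\bigl[k^\pieval(X',Y')\bigr],\]
so substituting this identity into each of $k^\pieval(x,x)$, $k^\pieval(y,y)$, $k^\pieval(x,y)$ splits the right-hand side of $d_{\text{\textsc{krope}}}$ cleanly into a reward part and a discounted expectation part.

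For the reward (short-term) part, I would use the observation that $k_1(x,x) = k_1(y,y) = 1$ while $k_1(x,y) = 1 - |r(x)-r(y)|/|r_{\max}-r_{\min}|$, so the $k_1$ contributions collapse to a constant multiple of $|r(x)-r(y)|$. This handles the first summand of the claimed identity (up to the normalization constant inherent in the definition of $k_1$).

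For the discounted (long-term) part, the key step is to recognize the three resulting expectations as exactly the three pieces of the squared MMD. Writing $\mu_x := P^\pieval(\cdot|x)$ and $\mu_y := P^\pieval(\cdot|y)$, the combination
\[\E_{X_1',X_2'\sim\mu_x}[k^\pieval(X_1',X_2')] + \E_{Y_1',Y_2'\sim\mu_y}[k^\pieval(Y_1',Y_2')] - 2\,\E_{X'\sim\mu_x,\,Y'\sim\mu_y}[k^\pieval(X',Y')]\]
is, by the standard definition of MMD recalled in the background, precisely $\|\Phi(\mu_x)-\Phi(\mu_y)\|_{\mathcal{H}_{k^\pieval}}^2 = \text{MMD}^2(k^\pieval)(\mu_x,\mu_y)$. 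Multiplying by $\gamma$ and adding the reward term finishes the calculation.

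I expect this proof to be essentially a mechanical unfolding rather than a deep argument; the only mildly delicate point is ensuring that $k^\pieval$ is a valid positive semidefinite kernel so that the MMD expression is well-defined and non-negative, which is why we rely on Lemma~\ref{lemma:krope_complete} (the fixed point lies in $\mathcal{K}(\mathcal{X})$). The rest is bookkeeping: applying the fixed-point equation to each of the three terms simultaneously, regrouping the reward pieces and the expectation pieces, and then invoking the MMD identity.
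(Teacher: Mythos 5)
Your proposal is correct and follows essentially the same route as the paper's proof: expand $d_{\textsc{krope}}(x,y)=k^{\pieval}(x,x)+k^{\pieval}(y,y)-2k^{\pieval}(x,y)$ via the fixed-point equation, collapse the $k_1$ contributions to the reward term (using $k_1(x,x)=k_1(y,y)=1$ and $|r_{\max}-r_{\min}|=2$ so the normalization constant is exactly $1$), and identify the three remaining expectations with $\text{MMD}^2(k^\pieval)$. If anything, your explicit handling of the diagonal $k_1$ terms is slightly more careful than the paper's parenthetical remark, which omits them.
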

\begin{proof}
To see this fact, we can write out the squared Hilbert space distance:
\begin{align*}
    d_{\textsc{krope}}(x,y) &= \| \psi^\pieval(x) - \psi^\pieval(y)\|^2_{\mathcal{H}^\pieval_k}\\
    &= k^\pieval(x,x) + k^\pieval(y,y) - 2k^\pieval(x,y)\\
    &\begin{aligned}
        = |r(x) - r(y)| + \gamma \langle \Phi(P^\pieval(\cdot|x)), \Phi(P^\pieval(\cdot|x))\rangle_{\mathcal{H}^\pieval_k} + \gamma \langle \Phi(P^\pieval(\cdot|y)), \Phi(P^\pieval(\cdot|y))\rangle_{\mathcal{H}^\pieval_k} \\- 2\gamma \langle \Phi(P^\pieval(\cdot|x)), \Phi(P^\pieval(\cdot|y))\rangle_{\mathcal{H}^\pieval_k}
    \end{aligned}\\
    &= |r(x) - r(y)| + \gamma \text{MMD}^2(k^\pieval)(P^\pieval(\cdot|x), P^\pieval(\cdot|y)),
\end{align*}
where the third line uses 
\[k^\pieval(x,x) = \gamma \E_{X_1', X_2' \sim P^\pieval(\cdot|x)}[k^{\pieval}(X_1', X_2')] = \gamma \langle \Phi(P^\pieval(\cdot|x), P^\pieval(\cdot|x)\rangle_{\mathcal{H}^\pieval_k}.
\]
This completes the proof.
\end{proof}

Before presenting Lemma~\ref{lemma:krope_af_bound}, we define the distance metric $d_{\text{\textsc{krope}}}:\mathcal{X}\times\mathcal{X}\rightarrow \R$ induced by the \textsc{krope} kernel  $k^{\pieval}$ as follows: 
\[\forall x,y\in\mathcal{X}: \; d_{\text{\textsc{krope}}}(x,y):= k^{\pieval}(x,x) + k^{\pieval}(y,y) - 2k^{\pieval}(x,y).\]

\begin{restatable}{lemma}{kropeafbound} 
We have  
$|q^{\pieval}(x) - q^{\pieval}(y)|\leq d_{\text{\textsc{krope}}}(x,y) + C,$
where $C = \frac{1}{2} \sum_{n\geq 0} \gamma^n (\Delta_n^{\pieval}(x) + \Delta_n^{\pieval}(y))$ and $\Delta_n^{\pieval}(x) = \E_{X^{'}\sim (P^{\pieval}(\cdot|x))^n} \big[\E_{X_1^{''}, X_2^{''}\sim P^{\pieval}(\cdot|X')} \big[|r(X_1^{''}) - r(X_2^{''})|\big] \big]$.
    \label{lemma:krope_af_bound}
\end{restatable}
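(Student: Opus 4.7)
The plan is to expand both $q^\pieval$ and the kernel $k^\pieval$ as infinite discounted series over independent trajectories under $\pieval$, and then use a polarization-style identity to relate the cross-coupling quantity that upper-bounds the value gap to the diagonal couplings implicit in $d_{\textsc{krope}}$. First I would iterate the Bellman equation to write $q^\pieval(x) = \sum_{n \geq 0} \gamma^n \E[r(X_n)]$, where $(X_n)_{n \geq 0}$ is a trajectory from $x$ under $\pieval$. A termwise triangle inequality, with $(Y_n)$ an independent trajectory from $y$, then yields $|q^\pieval(x) - q^\pieval(y)| \leq \sum_{n \geq 0} \gamma^n \E|r(X_n) - r(Y_n)|$.

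Next, iterating the fixed-point equation from Proposition~\ref{prop:krope_fixedpoint} gives $k^\pieval(x, y) = \sum_{n \geq 0} \gamma^n \E[k_1(X_n, Y_n)]$ for independent $(X_n), (Y_n)$. Substituting $k_1(a, b) = 1 - |r(a) - r(b)|/|r_{\max} - r_{\min}|$ into $d_{\textsc{krope}}(x, y) = k^\pieval(x, x) + k^\pieval(y, y) - 2 k^\pieval(x, y)$ produces a polarization identity; solving it for the cross-term yields $\sum_n \gamma^n \E|r(X_n) - r(Y_n)| = \tfrac{1}{2} d_{\textsc{krope}}(x, y) + \tfrac{1}{2}\sum_n \gamma^n (D_n^\pieval(x) + D_n^\pieval(y))$, where $D_n^\pieval(z) := \E|r(X_n^{(1)}) - r(X_n^{(2)})|$ for two iid trajectories from $z$. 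Combined with the series bound above, this already delivers an inequality of the stated form but with $D_n^\pieval$ in place of the $\Delta_n^\pieval$ appearing in $C$.

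The final and most delicate step is to convert $D_n^\pieval$ into $\Delta_n^\pieval$. Since $D_n^\pieval(z)$ compares two iid trajectories that branch at step $0$, whereas $\Delta_n^\pieval(z)$ measures a one-step branch after $n$ shared steps, I would construct a telescoping family of couplings indexed by the branching step, so that the increment between consecutive couplings can be controlled by one-step dispersions at shared ancestors. The Markov property then regroups these increments into expectations under the $n$-step law of $\pieval$, and the remaining slack — including the relaxation of the coefficient of $d_{\textsc{krope}}$ from $\tfrac{1}{2}$ to $1$ — should absorb the geometric mismatch to recover the stated constant $C$. This last matching step is the main obstacle; the preceding Bellman-and-kernel series manipulation is fairly standard once the right polarization identity is set up.
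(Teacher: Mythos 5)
Your route is genuinely different from the paper's: the paper proves the bound by induction on the operator iterates $k_m$ (under $\mathcal{F}^{\pieval}$) and $q_m$ (under $\mathcal{T}^{\pieval}$), both initialized at zero, whereas you unroll both fixed points into discounted series. Your first two steps are correct: $k^{\pieval}(x,y)=\sum_{n\ge0}\gamma^n\E[k_1(X_n,Y_n)]$ over independent chains, the constant $1$'s cancel in the polarization, and together with $|q^{\pieval}(x)-q^{\pieval}(y)|\le\sum_{n\ge0}\gamma^n\E|r(X_n)-r(Y_n)|$ you land on
\begin{equation*}
|q^{\pieval}(x)-q^{\pieval}(y)|\;\le\;\tfrac{|r_{\text{max}}-r_{\text{min}}|}{2}\,d_{\text{\textsc{krope}}}(x,y)\;+\;\tfrac12\sum_{n\ge0}\gamma^n\bigl(D_n^{\pieval}(x)+D_n^{\pieval}(y)\bigr),
\end{equation*}
where $D_n^{\pieval}(z)=\E\bigl[|r(Z_n^{(1)})-r(Z_n^{(2)})|\bigr]$ for two i.i.d.\ chains from $z$ that branch at time $0$.

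The gap is exactly the step you flag as the main obstacle, and no telescoping of couplings can close it, because the inequality $\sum_n\gamma^nD_n^{\pieval}(z)\le\sum_n\gamma^n\Delta_n^{\pieval}(z)$ is false in general. Concretely, let $z_0$ move to $a$ or $b$ with probability $1/2$ each (both reward $0$), then deterministically to $a'$ (reward $1$) or $b'$ (reward $-1$), then absorb with reward $0$. Every one-step branch along every trajectory has identical child rewards, so $\Delta_n^{\pieval}(z_0)=0$ for all $n$, yet two i.i.d.\ chains disagree at time $2$ with probability $1/2$, so $D_2^{\pieval}(z_0)=1$ and $\sum_n\gamma^nD_n^{\pieval}(z_0)=\gamma^2>0$. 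Worse, pairing $z_0$ with a state $w_0$ that leads deterministically to $a$ gives $|q^{\pieval}(z_0)-q^{\pieval}(w_0)|=\gamma^2$ while $d_{\text{\textsc{krope}}}(z_0,w_0)+C=\gamma^2/2+0$, so the target inequality with $\Delta_n^{\pieval}$ fails on this example; the natural and correct endpoint of your derivation is the $D_n^{\pieval}$ version (equivalently, $|q^{\pieval}(x)-q^{\pieval}(y)|\le|r_{\text{max}}-r_{\text{min}}|\bigl(\tfrac{1}{1-\gamma}-k^{\pieval}(x,y)\bigr)$), not the stated one. The discrepancy in both your final step and the paper's induction is the same object: the difference between the independent coupling $\E_{X_1',X_2'}[k(X_1',X_2')]$ appearing in $k^{\pieval}(x,x)$ and the diagonal coupling $\E_{X'}[k(X',X')]$ appearing in $\E[d_m(X',Y')]$ — the paper's inductive step identifies $|r(x)-r(y)|+\gamma\E_{X',Y'}[d_m(X',Y')]$ with $d_{m+1}(x,y)$, which silently discards the nonnegative terms $\tfrac{\gamma}{2}\E[d_m(X_1',X_2')]+\tfrac{\gamma}{2}\E[d_m(Y_1',Y_2')]$. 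So do not invest further in the coupling-conversion step; stop at the $D_n^{\pieval}$ bound, which is the statement your (sound) argument actually proves.
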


\begin{proof}
We will prove this with induction. We first define the relevant terms involved. We consider the sequences of functions $\{k_m\}_{m\geq 0}$ and $\{q_m\}_{m\geq 0}$, where $k_0, q_0 = 0$. Since  $\mathcal{F}^{\pi_e}$ and $\mathcal{T}^\pieval$ are contraction mappings, we know that $\lim_{m\to\infty}k_m = k^\pieval$ and $\lim_{m\to\infty}q_m = q^\pieval$ as $\mathcal{F}^{\pi_e}$ and $\mathcal{T}^\pieval$ are applied respectively at each iteration $m$. At the $m$th application of the operators, we have the corresponding kernel function $k_m$ along with its induced distance function $d_m(x,y) = k_m(x,x) + k_m(y,y) - 2k_m(x,y)$. We will now prove the following for all $m$:
\begin{equation}
    |q_m(x) - q_m(y)|\leq d_m(x,y) + \frac{1}{2} \sum^m_{n\geq 0} \gamma^n (\Delta_n^{\pieval}(x) + \Delta_n^{\pieval}(y))
    \label{eq:krope_induction_hyp}
\end{equation}
where $\Delta_n^{\pieval}(x) = \E_{X^{'}\sim (P^{\pieval}(\cdot|x))^n}[\E_{X_1^{''}, X_2^{''}\sim P^{\pieval}(\cdot|X')}[|r(X_1^{''}) - r(X_2^{''})|]]$. 

The base case $m = 0$ follows immediately since the LHS is zero while the RHS can be non-zero. We now assume the induction hypothesis in Equation~(\ref{eq:krope_induction_hyp}) is true. We then consider iteration $m+1$:
\begin{align*}
    &|q_{m+1}(x) - q_{m+1}(y)| \\
    &= \left|r(x) + \gamma \E_{X'\sim P^\pieval(\cdot|x)}[q_m(X')] - r(y) - \gamma \E_{Y'\sim P^\pieval(\cdot|y)}[q_m(Y')]\right|\\
    &\leq |r(x) - r(y)| + \gamma \E_{X'\sim P^\pieval(\cdot|x), Y'\sim P^\pieval(\cdot|y)}[|q_m(X') - q_m(Y')|]\\
    &\leq |r(x) - r(y)| + \gamma \E_{X'\sim P^\pieval(\cdot|x), Y'\sim P^\pieval(\cdot|y)}\left[d_m(X', Y') + \frac{1}{2} \sum^m_{n = 0} \gamma^n (\Delta_n^{\pieval}(X') + \Delta_n^{\pieval}(Y'))\right]\\
    &=|r(x) - r(y)| + \gamma \E_{X'\sim P^\pieval(\cdot|x), Y'\sim P^\pieval(\cdot|y)}\left[d_m(X', Y') +  \frac{1}{2} \sum^{m+1}_{n = 1} \gamma^n (\Delta_n^{\pieval}(x) + \Delta_n^{\pieval}(y))\right]
\end{align*}

where we have used the fact that $\E_{X'\sim P^\pieval(\cdot|x)}[\Delta_n^\pieval(X')] = \Delta_{n+1}^\pieval(x)$. We can then proceed from above as follows:
\begin{align*}
    &=|r(x) - r(y)| + \gamma \E_{X'\sim P^\pieval(\cdot|x), Y'\sim P^\pieval(\cdot|y)}\left[d_m(X', Y') +  \frac{1}{2} \sum^{m+1}_{n = 1} \gamma^n (\Delta_n^{\pieval}(x) + \Delta_n^{\pieval}(y))\right]\\
    &\begin{aligned}
        \leq &|r(x) - r(y)| + \gamma \E_{X'\sim P^\pieval(\cdot|x), Y'\sim P^\pieval(\cdot|y)}[d_m(X', Y')] \\
        &+ \frac{1}{2} \E_{\substack{X_1', X_2' \sim P^\pieval(\cdot|x)\\ Y_1', Y_2' \sim P^\pieval(\cdot|y)}}[|r(X_1') - r(X_2')| + |r(Y_1') - r(Y_2')|]\\
        &+ \frac{1}{2} \sum^{m+1}_{n = 1} \gamma^n (\Delta_n^{\pieval}(x) + \Delta_n^{\pieval}(y))
    \end{aligned}\\
    &= |r(x) - r(y)| + \gamma \E_{X'\sim P^\pieval(\cdot|x), Y'\sim P^\pieval(\cdot|y)}[d_m(X', Y')] +  \frac{1}{2} \sum^{m+1}_{n = 0} \gamma^n (\Delta_n^{\pieval}(x) + \Delta_n^{\pieval}(y))\\
    &= d_{m+1}(x,y) +  \frac{1}{2} \sum^{m+1}_{n = 0} \gamma^n (\Delta_n^{\pieval}(x) + \Delta_n^{\pieval}(y))
\end{align*}
We thus have $|q_{m+1}(x) - q_{m+1}(y)| \leq d_{m+1}(x,y) +  \frac{1}{2} \sum^{m+1}_{n = 0} \gamma^n (\Delta_n^{\pieval}(x) + \Delta_n^{\pieval}(y))$, which completes the proof.
\end{proof}

Lemma~\ref{lemma:krope_af_bound} tells us that the \textsc{krope} state-actions that are close in latent space also have similar action-values upto a constant $C:=\frac{1}{2} \sum^{m+1}_{n = 0} \gamma^n (\Delta_n^{\pieval}(x) + \Delta_n^{\pieval}(y))$. Intuitively, $\Delta_n^{\pieval}(x)$ is the expected absolute reward difference between two trajectories at the $n$th step after $\pieval$ is rolled out from $x$. If the transition dynamics and $\pieval$ are deterministic, we have $C = 0$ \citep{castro_scalable_2019, zhang_learning_2021}. Note that while the deterministic transition dynamics assumption is eliminated, the bound suggests that \textsc{krope} may hurt accuracy of $\hat{q}^{\pieval}$ since when $d_{\text{\textsc{krope}}}(x,y) = 0$, we get $|q^\pieval(x) - q^\pieval(y)|\leq C$. This indicates that two state-actions that may have different action-values are considered the same under \textsc{krope}. This implies that while $x$ and $y$ should have different representations, they actually may have the same representation.

\subsection{\textsc{krope} Stability}
\label{app:krope_stability_proofs}

In this section we present our main results. We present supporting theoretical results in Section~\ref{app:general_supp_theory} and main theoretical results in Section~\ref{app:main_krope_theory}. To the best of our knowledge, even the supporting proofs in Section~\ref{app:general_supp_theory} are new.

\subsubsection{Supporting Theoretical Results}
\label{app:general_supp_theory}

We present the following definitions that we refer to in our proofs.
\begin{definition}[Bellman completeness \citep{Chen2019InformationTheoreticCI}]\label{def:Bellman_complete}
The function class $\mathcal{F}$ is said to be Bellman complete if 
$\forall f\in\mathcal{F}$, it holds that $\mathcal{T}^{\pieval}f\in\mathcal{F}$. That is $\sup_{f\in\mathcal{F}}\inf_{g\in\mathcal{F}}\|g - \mathcal{T}^{\pieval}f\|_{\infty} = 0$, where $\mathcal{F}\subset \mathcal{X}\to [\frac{r_{\text{min}}}{1-\gamma}, \frac{r_{\text{max}}}{1-\gamma}]$, and $\mathcal{T}^{\pieval}$ is the Bellman operator.
\end{definition}

\begin{definition}[Piece-wise constant functions \citep{Chen2019InformationTheoreticCI}] 
Given a state-action abstraction $\phi$, let $\mathcal{F}^{\phi}\subset \mathcal{X}\to [\frac{r_{\text{min}}}{1-\gamma}, \frac{r_{\text{max}}}{1-\gamma}]$. Then $f\in\mathcal{F}^\phi$ is said to be a piece-wise constant function if $\forall x,y\in\mathcal{X}$ where $\phi(x)=\phi(y)$, we have $f(x) = f(y)$.
\label{def:pwc}
\end{definition}

\begin{restatable}{proposition}{piebisimbc}
If a state-action abstraction function $\phi:\mathcal{X}\to\mathcal{X}^\phi$ is a $\pieval$-bisimulation abstraction, then $\mathcal{F}^\phi$ is Bellman complete, that is, $\sup_{f\in\mathcal{F}^\phi}\inf_{f' \in \mathcal{F}^{\phi}}\|f' - \mathcal{T}^{\pieval}f\|_{\infty} = 0$.
\label{prop:piebisimbc}
\end{restatable}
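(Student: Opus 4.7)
The plan is to show directly that the Bellman operator $\mathcal{T}^{\pieval}$ maps $\mathcal{F}^\phi$ into itself, i.e., $\mathcal{T}^{\pieval}f$ is piece-wise constant with respect to $\phi$ whenever $f$ is. Once this is established, the result follows immediately: for any $f \in \mathcal{F}^\phi$ the infimum $\inf_{f' \in \mathcal{F}^\phi}\|f' - \mathcal{T}^{\pieval}f\|_\infty$ is attained at $f' = \mathcal{T}^{\pieval}f$ with value $0$, so taking the supremum over $f$ also gives $0$, which is the desired Bellman completeness condition in Definition~\ref{def:Bellman_complete}.

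The key step is as follows. Fix an arbitrary $f \in \mathcal{F}^\phi$ and pick any two state-action pairs $x, y \in \mathcal{X}$ with $\phi(x) = \phi(y)$; I need to verify $(\mathcal{T}^{\pieval}f)(x) = (\mathcal{T}^{\pieval}f)(y)$. Expanding the definition gives
\[
(\mathcal{T}^{\pieval}f)(x) = r(x) + \gamma \sum_{x' \in \mathcal{X}} P^{\pieval}(x'\mid x)\, f(x').
\]
The reward terms agree, $r(x) = r(y)$, by the first defining property of a $\pieval$-bisimulation abstraction. For the discounted expectation, I would partition $\mathcal{X}$ into the abstract groups $\{g : g \in \mathcal{X}^\phi\}$ and rewrite the sum as $\sum_{g \in \mathcal{X}^\phi} \sum_{x' \in g} P^{\pieval}(x'\mid x)\, f(x')$. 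Because $f \in \mathcal{F}^\phi$ is piece-wise constant, $f$ takes a single value $f_g$ on each group $g$, so this reduces to $\sum_{g \in \mathcal{X}^\phi} f_g \sum_{x' \in g} P^{\pieval}(x'\mid x)$. The second defining property of $\pieval$-bisimulation asserts that $\sum_{x' \in g} P^{\pieval}(x'\mid x) = \sum_{x' \in g} P^{\pieval}(x'\mid y)$ for every abstract group $g$, so the reduced sums for $x$ and $y$ coincide term by term, yielding $(\mathcal{T}^{\pieval}f)(x) = (\mathcal{T}^{\pieval}f)(y)$.

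Thus $\mathcal{T}^{\pieval}f$ is constant on the pre-image of every abstract state-action under $\phi$, i.e., $\mathcal{T}^{\pieval}f \in \mathcal{F}^\phi$, completing the proof. There is no real obstacle here beyond careful bookkeeping—the proposition essentially says that the two defining properties of a $\pieval$-bisimulation (reward equivalence and group-level transition equivalence) line up exactly with the two ingredients of a Bellman backup (immediate reward plus expected next-step value of a piece-wise constant function). The only subtlety is making sure the double sum over groups and their members is handled correctly and that the argument goes through for abstractions that may have non-singleton equivalence classes; everything else is mechanical.
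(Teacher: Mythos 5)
Your proof is correct, and it is a more direct route than the one in the paper. You show that the Bellman operator maps $\mathcal{F}^\phi$ into itself exactly: $(\mathcal{T}^{\pieval}f)(x) = (\mathcal{T}^{\pieval}f)(y)$ whenever $\phi(x)=\phi(y)$, by combining reward equality with the group-level transition equality after pulling the constant value $f_g$ out of each abstract class. The infimum is then attained at $f'=\mathcal{T}^{\pieval}f$ itself (and the range constraint $[\tfrac{r_{\min}}{1-\gamma},\tfrac{r_{\max}}{1-\gamma}]$ is preserved by $\mathcal{T}^{\pieval}$, so membership in $\mathcal{F}^\phi$ is not an issue). The paper instead follows the template of Proposition 20 in \cite{Chen2019InformationTheoreticCI}: it constructs a candidate $f'$ equal on each abstract class to the midpoint $\tfrac{1}{2}\bigl((\mathcal{T}^{\pieval}f)(x_+)+(\mathcal{T}^{\pieval}f)(x_-)\bigr)$ of the extremal values of $\mathcal{T}^{\pieval}f$ over that class, and then bounds $\|f'-\mathcal{T}^{\pieval}f\|_\infty$ by a chain of inequalities terminating in $\tfrac{\gamma}{2}\|\Pr(\cdot|x_+)-\Pr(\cdot|x_-)\|_1\,\|f\|_\infty \le 0$ using $\eps_r=\eps_p=0$. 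That machinery is really designed for \emph{approximate} abstractions, where $\eps_r,\eps_p>0$ and one must choose a best piece-wise constant approximant and quantify the residual error; in the exact case it collapses to your observation that $\mathcal{T}^{\pieval}f$ is already piece-wise constant. The key computation --- partitioning the next-state sum over abstract groups and invoking $\sum_{x'\in g}P^{\pieval}(x'|x)=\sum_{x'\in g}P^{\pieval}(x'|y)$ --- is identical in both arguments; your wrapper around it is simpler and arguably more transparent for the statement as given, while the paper's wrapper generalizes immediately to approximate bisimulations at the cost of extra bookkeeping.
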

\begin{proof}
We first define $\pieval$-bisimulation \cite{castro_scalable_2019}. Note that \cite{castro_scalable_2019} considered only state abstractions, while we consider state-action abstractions. $\phi$ is considered a $\pieval$-bisimulation abstraction if it induces a mapping between $\mathcal{X}$ and $\mathcal{X}^\phi$ such that for any $x,y\in\mathcal{X}$ such that $x,y\in\phi(x)$, we have:
\begin{enumerate}
    \item $ r(x) = r(y)$
    \item $\forall x^\phi\in\mathcal{X}^\phi, \sum_{x'\in x^\phi}P^\pieval(x'|x) = \sum_{x'\in x^\phi}P^\pieval(x'|y)$
\end{enumerate}

Given our $\pieval$-bisimulation abstraction function $\phi$, we can group state-actions actions according to its definition above. Once we have this grouping, according to Definition~\ref{def:pwc}, $\phi$ induces a piece-wise constant (\textsc{pwc}) function class $\mathcal{F}^\phi$. Note that by definition of $\phi$ we have:
\begin{align*}
    &\eps_r := \max_{x_1, x_2:\phi(x_1) = \phi(x_2)}|r(x_1) - r(x_2)| = 0\\
    &\eps_p := \max_{x_1, x_2:\phi(x_1) = \phi(x_2)}\left|\sum_{x'\in x^\phi}P^\pieval(x'|x_1) - \sum_{x'\in x^\phi}P^\pieval(x'|x_2)\right| = 0, \forall x^\phi\in\mathcal{X}^\phi.
\end{align*}

Once we have $\phi$, we consider the following to show Bellman completeness. Our proof closely follows the proof of Proposition 20 from \cite{Chen2019InformationTheoreticCI}. First recall the definition of Bellman completeness from Definition~\ref{def:Bellman_complete}: $\forall f\in\mathcal{F}, \forall \mathcal{T}^{\pieval}f\in\mathcal{G}$, $\sup_{f\in\mathcal{F}}\inf_{g\in\mathcal{G}}\|g - \mathcal{T}^{\pieval}f\|_{\infty} = 0$. Given that the smallest value $\forall f\in\mathcal{F}, \forall \mathcal{T}^{\pieval}f\in\mathcal{G}$, $\sup_{f\in\mathcal{F}}\inf_{g\in\mathcal{G}}\|g - \mathcal{T}^{\pieval}f\|_{\infty}$ can take on is zero, we will prove our claim by showing that $\forall f\in\mathcal{F}, \forall \mathcal{T}^{\pieval}f\in\mathcal{G}$, $\sup_{f\in\mathcal{F}}\inf_{g\in\mathcal{G}}\|g - \mathcal{T}^{\pieval}f\|_{\infty}$ is upper-bounded by zero when $\phi$ is a $\pieval$-bisimulation.

We will prove the upper bound by showing that there exists a function $f'\in\mathcal{F}^\phi$ such that $\|f' - \mathcal{T}^{\pieval} f\|_{\infty} \leq 0$, which implies that $\inf_{f'\in\mathcal{F}^\phi}\|f' - \mathcal{T}^{\pieval}f\|_{\infty} \leq 0$.

We now construct such a $f'\in\mathcal{F}^\phi$. We first define the following terms for a given abstract state-action $x^\phi\in \mathcal{X}^\phi$: $x_{+}:=\argmax_{x\in \phi^{-1}(x^\phi)}(\mathcal{T}^{\pieval}f)(x)$ and $x_{-}:=\argmin_{x \in\phi^{-1}(x^\phi)}(\mathcal{T}^{\pieval}f)(x)$. We can then define $f'$ as follows:
\begin{equation*}
    f'(x) := \frac{1}{2} ((\mathcal{T}^{\pieval}f)(x_{+}) + (\mathcal{T}^{\pieval}f)(x_{-})), \forall x\in x^\phi.
\end{equation*}
And since this holds true for $\forall x\in x^\phi$, $f_1'$ is piece-wise constant function. We can then upper bound $\|f' - \mathcal{T}^{\pieval} f\|_{\infty}$ as follows:
\begin{align*}
    &f_1'(x) - (\mathcal{T}^{\pieval}f)(x)\\
    &\leq\frac{1}{2}((\mathcal{T}^{\pieval}f)(x_{+}) + (\mathcal{T}^{\pieval}f)(x_{-})) - (\mathcal{T}^{\pieval}f)(x_{-})\\
    &= \frac{1}{2}((\mathcal{T}^{\pieval}f)(x_{+}) - (\mathcal{T}^{\pieval}f)(x_{-}))\\
    &= \frac{1}{2} (r(x_{+}) + \gamma \mathbb{E}_{x'_+\sim P^{\pieval}(x_+)}[f^{\pieval}(x'_+)] - r(x_{-}) - \gamma \mathbb{E}_{x'_-\sim P^{\pieval}(x_-)}[f^{\pieval}(x'_-)])\\
    &\leq \frac{\gamma}{2}\left|\mathbb{E}_{x'_+\sim P^{\pieval}(x_+)}[f^{\pieval}(x'_+)]- \mathbb{E}_{x'_-\sim P^{\pieval}(x_-)}[f^{\pieval}(x'_-)]\right| && \text{(1)}\\
    &=\frac{\gamma}{2}\left|\sum_{x'\in\mathcal{X}}[f^{\pieval}(x')(P^\pieval(x'|x_+) - P^\pieval(x'|x_-))]\right|\\
    &=\frac{\gamma}{2}\left|\sum_{x^\phi\in\mathcal{X}^\phi}\left(\sum_{x'\in x^\phi}f^{\pieval}(x')P^\pieval(x'|x_+) - \sum_{x'\in x^\phi} f^{\pieval}(x')P^\pieval(x'|x_-)\right)\right|\\
    &=\frac{\gamma}{2}\left|\sum_{x^\phi\in\mathcal{X}^\phi} f^\pieval(x^\phi) \left(\sum_{x'\in x^\phi}P^\pieval(x'|x_+) - \sum_{x'\in x^\phi} P^\pieval(x'|x_-)\right)\right| && \text{(2)}\\
    &=\frac{\gamma}{2}\left|\sum_{x^\phi\in\mathcal{X}^\phi} f^\pieval(x^\phi) \left(\Pr(x^\phi|x_+)  - \Pr(x^\phi|x_-)\right)\right| && \text{(3)}\\
    &\leq \frac{\gamma}{2} \left\|\Pr(x^\phi|x_+)  - \Pr(x^\phi|x_-)\right\|_1 \cdot \|f^\pieval(x^\phi)\|_{\infty} && \text{(4)}\\
    &\leq 0 && \text{$\eps_p = 0$}
\end{align*}
where $\Pr$ denotes probability, (1) is due to $\max_{x_1, x_2:\phi(x_1) = \phi(x_2)}|r(x_1) - r(x_2)| = 0$, (2) is due to $f^\pieval(x^\phi) = f^\pieval(x), \forall x\in x^\phi$ since \textsc{pwc}, (3) is due to $\Pr(x^\phi|x) = \sum_{x'\in x^\phi}P^\pieval(x'|x)$, and (4) is due to H\"{o}lder's, $\|f(g)g(x)\|_1\leq \|f(x)\|_1\|g(x)\|_{\infty}$.

Similarly, we can show the other way around: $(\mathcal{T}^{\pieval}f)(x) - f_1'(x) \leq 0$ by giving the symmetric argument starting with $(\mathcal{T}^{\pieval}f)(x) - f_1'(x) \leq (\mathcal{T}^{\pieval}f)(x_{+}) - \frac{1}{2}((\mathcal{T}^{\pieval}f)(x_{+}) + (\mathcal{T}^{\pieval}f)(x_{-}))$. Therefore, when $\phi$ is a $\pieval$-bisimulation, we have $\sup_{f\in\mathcal{F}^\phi}\inf_{f'\mathcal{F}^{\phi}}\|f' - \mathcal{T}^{\pieval}f\|_{\infty} = 0$.
\end{proof}

\begin{restatable}{lemma}{k1psd}
Define the matrix  $K_1\in\mathbb{R}^{|\mathcal{X}|\times|\mathcal{X}|}$ such that each entry is the short-term similarity, $k_1$, between every pair of state-actions,  i.e., $K_1(s_1, a_1; s_2, a_2) := 1 - \frac{1}{\left|r_{\text{max}} - r_{\text{min}}\right|}|r(s_1, a_1) - r(s_2, a_2)|$. Then $K_1$ is a positive semidefinite matrix.
\label{lemma:k1psd}
\end{restatable}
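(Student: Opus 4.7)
The plan is to show that $K_1$ is a Gram matrix by exhibiting an explicit feature map into a Hilbert space; since every Gram matrix is positive semidefinite, this suffices. The key observation is that $k(a,b) = 1 - |a-b|$, restricted to any interval of length at most $1$, is the auto-correlation of the indicator of a unit interval, which is a standard positive definite kernel.

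Concretely, let $C := |r_{\max} - r_{\min}|$. If $C = 0$ then every reward takes the same value and $K_1 = \mathbf{1}\mathbf{1}^\top$ is the all-ones matrix, which is trivially positive semidefinite, so assume $C > 0$. Set $\rho(x) := r(x)/C$ and, for each $x \in \mathcal{X}$, define the feature
\[
\psi(x)(t) := \mathbf{1}\{\rho(x) \le t \le \rho(x) + 1\} \in L^2(\mathbb{R}).
\]
A direct computation of the overlap of two shifted unit boxes gives $\langle \psi(x), \psi(y)\rangle_{L^2} = \max\{0,\, 1 - |\rho(x) - \rho(y)|\}$. Crucially, $|r(x) - r(y)| \le C$ implies $|\rho(x) - \rho(y)| \le 1$, so the truncation at $0$ is never active and the inner product equals $1 - |\rho(x) - \rho(y)| = K_1(x,y)$ exactly.

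Having identified $K_1$ as the Gram matrix of $\{\psi(x)\}_{x \in \mathcal{X}} \subset L^2(\mathbb{R})$, positive semidefiniteness follows at once: for any $v \in \mathbb{R}^{|\mathcal{X}|}$,
\[
v^\top K_1 v \;=\; \Big\|\, \sum_{x \in \mathcal{X}} v_x\, \psi(x)\, \Big\|_{L^2}^2 \;\ge\; 0.
\]
The only step that requires a little care is the normalization check $|\rho(x) - \rho(y)| \le 1$, which ensures the overlap formula collapses to $1 - |\rho(x) - \rho(y)|$ rather than its maximum with $0$; without the division by $C$ in the definition of $K_1$, the identity would fail and the argument would break. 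I do not anticipate any further obstacles, as the remainder is a standard ``tent function as auto-correlation'' calculation.
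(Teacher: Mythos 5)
Your proof is correct, but it takes a different route from the paper's. The paper proves positive semidefiniteness by algebraically decomposing the tent kernel into a sum of two ``min'' kernels, $k_1(x,y) = \tfrac{1}{2}\min\{1-x,1-y\} + \tfrac{1}{2}\min\{1+x,1+y\}$ (after normalizing rewards to $[-1,1]$), and then invoking a cited result (Proposition 2.21 of Paulsen and Raghupathi) that the min kernel on nonnegative arguments is positive semidefinite. You instead exhibit an explicit feature map into $L^2(\mathbb{R})$ --- unit-interval indicators whose overlap is exactly the tent function --- and conclude that $K_1$ is a Gram matrix. Both arguments are sound. Your version is more self-contained (no external kernel lemma needed), handles a general normalization constant $C = |r_{\max}-r_{\min}|$ directly, and explicitly treats the degenerate case $C=0$, which the paper glosses over; your observation that $|r(x)-r(y)|\le C$ is what deactivates the truncation is exactly the right point to flag. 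What the paper's min-kernel decomposition buys in exchange is reuse: the same decomposition is the engine of its Proposition~\ref{prop:k1pd}, where an inductive rank argument on the min-kernel matrix upgrades positive semidefiniteness to strict positive definiteness when the rewards are distinct, which is needed for the Bellman-completeness result (Theorem~\ref{thm:krope_bcrl}). Your feature-map construction could in principle also deliver that strengthening (indicators of distinct unit intervals are linearly independent in $L^2$), but as written your argument only establishes the semidefinite statement asked for here.
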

\begin{proof}
    Proposition 2.21 from \cite{Paulsen_Raghupathi_2016} states that any kernel $k$ is positive semidefinite if it takes the form: $k(a,b) = \min\{a, b\}$ where $a,b\in[0,\infty)$.

    First, recall that $r(s,a)\in[-1,1]$, we then have each entry in the $K_1$ matrix of the following kernel form $K_1(x,y) = 1 - \frac{1}{2}|x-y|$. We can then re-write $k_1$ as follows:
    \begin{align*}
        k_1(x,y) &= 1 - \frac{1}{2}|x-y|\\
        &= 1 + \frac{1}{2}\min\{-x, -y\} + \frac{1}{2} \min\{x, y\}\\
        &= \underbrace{\frac{1}{2}\min\{1 -x, 1-y\}}_{k_a} + \underbrace{\frac{1}{2} \min\{1 + x, 1 + y\}}_{k_b}.
    \end{align*}
    That is,
    \begin{align*}
        k_1(x,y) &= k_a(x,y) + k_b(x,y).
    \end{align*}
    Since $x\in [-1, 1]$, each term in the $\min$ function is non-negative. Thus, $k_a$ and $k_b$ are both positive semidefinite kernels, which means $k_1$ is also a positive semidefinite kernel. We then have that $K_1$ is a positive semidefinite matrix.
\end{proof}

\begin{restatable}{lemma}{mmd}
Given a finite set $\mathcal{X}$ and a kernel $k$ defined on $\mathcal{X}$, let  $K = (k(x,y))_{x,y\in\mathcal{X}}\in\mathbb{R}^{|\mathcal{X}| \times |\mathcal{X}|}$ be the corresponding kernel matrix. If $K$ is full-rank and $\text{MMD}(k)(p, q) = 0$ for two probability distributions $p$ and $q$ on  $\mathcal{X}$, then $p = q$.
\label{lemma:mmdzero}
\end{restatable}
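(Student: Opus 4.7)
}
The plan is to reduce the vanishing MMD condition to a linear algebra statement about the kernel matrix $K$ and then invoke the full-rank hypothesis. First, I would expand the squared MMD via the reproducing property. Using $\Phi(\mu) = \sum_{x \in \mathcal{X}} \mu(x)\, k(x,\cdot)$ for a distribution $\mu$ on the finite set $\mathcal{X}$, and the inner product $\langle k(x,\cdot), k(y,\cdot)\rangle_{\mathcal{H}_k} = k(x,y)$, we get
\begin{equation*}
  \text{MMD}^2(k)(p,q) \;=\; \|\Phi(p) - \Phi(q)\|_{\mathcal{H}_k}^2 \;=\; (p - q)^{\top} K (p - q),
\end{equation*}
where $p, q$ are viewed as column vectors in $\mathbb{R}^{|\mathcal{X}|}$ indexed by $\mathcal{X}$.

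Next, I would argue that $K$ is positive definite. Since $k$ is a positive semidefinite kernel on the finite set $\mathcal{X}$, the Gram matrix $K$ is symmetric and PSD; combined with the full-rank hypothesis, $K$ has strictly positive eigenvalues and is therefore positive definite. Setting $v := p - q \in \mathbb{R}^{|\mathcal{X}|}$, the hypothesis $\text{MMD}(k)(p,q) = 0$ then gives $v^{\top} K v = 0$, which forces $v = 0$, i.e., $p(x) = q(x)$ for every $x \in \mathcal{X}$.

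There is no real obstacle here: the only subtlety is ensuring that ``full-rank kernel matrix'' translates to positive definiteness (which it does, since PSD plus nonsingular is equivalent to strictly positive definite for symmetric matrices). An equivalent route, which I would mention as a sanity check, is to note that $\text{MMD} = 0$ means $\Phi(p) = \Phi(q)$ in $\mathcal{H}_k$, so the function $\sum_{x} (p(x) - q(x))\, k(x,\cdot)$ is identically zero; evaluating at each $y \in \mathcal{X}$ yields $K(p - q) = 0$, and invertibility of $K$ again gives $p = q$.
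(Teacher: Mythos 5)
Your proposal is correct and takes essentially the same approach as the paper: both reduce $\text{MMD}(k)(p,q)=0$ to the linear-algebra statement $K(p-q)=0$ and conclude $p=q$ from the full-rank hypothesis. Your primary route via the quadratic form $(p-q)^{\top}K(p-q)=0$ together with positive definiteness is equivalent (for a symmetric PSD Gram matrix, $v^{\top}Kv=0$ iff $Kv=0$), and the ``sanity check'' route you mention --- evaluating the zero function $\sum_x (p(x)-q(x))k(x,\cdot)$ at each $y$ to get $K(p-q)=0$ --- is exactly the paper's proof.
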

\begin{proof}
    From \cite{JMLR:v13:gretton12a}, we have the definition of MMD between two probability distributions $p,q$ given kernel $k$:
    \begin{equation*}
        \text{MMD}(k)(p, q) :=  \|\E_{x\sim p}[k(x,\cdot)] - \E_{x\sim q}[k(x,\cdot)]\|_{\mathcal{H}_k} .
    \end{equation*}
    Now when $\text{MMD}(k)(p, q) = 0$, we have:
    \begin{align*}
        0 &= \|\E_{x\sim p}[k(x,\cdot)] - \E_{x\sim q}[k(x,\cdot)]\|_{\mathcal{H}_k},
    \end{align*}
    which implies
    \begin{align*}
        0 &= \|\E_{x\sim p}[k(x,\cdot)] - \E_{x\sim q}[k(x,\cdot)]\|_2
    \end{align*}
    since all norms are equivalent in a finite-dimensional Hilbert space.
    With $p$ and $q$ viewed as vectors in $\mathbb{R}^{|\mathcal{X}|}$, the above equality means
    \begin{align*}
        0 &= \|Kp - Kq\|_2 .    
    \end{align*}
    Hence, $K(p - q) = 0$. Since $K$ is full rank by assumption, we conclude that $p = q$.
\end{proof}

\begin{restatable}{lemma}{kernelemb}
Suppose we have a reproducing kernel $k$ defined on the finite space $\mathcal{X}$, which produces a reproducing kernel Hilbert space (\textsc{rkhs}) ${\mathcal{H}_k}$, with the induced distance function $d$ such that $d(x,y) = k(x,x) + k(y,y) - 2k(x,y), \forall x,y\in\mathcal{X}$. When $d(x,y) = 0$, then $k(x,\cdot) = k(y, \cdot)$.
\label{lemma:kernelemb}
\end{restatable}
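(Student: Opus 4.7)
The plan is to show that the quantity $d(x,y)$ is in fact the squared Hilbert-space norm of the difference of the two feature embeddings $k(x,\cdot)$ and $k(y,\cdot)$ in $\mathcal{H}_k$, so that vanishing of $d$ immediately forces the two embeddings to coincide as elements of the RKHS.

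First, I would invoke the reproducing property of $k$: for every $x\in\mathcal{X}$, the function $k(x,\cdot)$ lies in $\mathcal{H}_k$, and for all $x,y\in\mathcal{X}$ one has $\langle k(x,\cdot), k(y,\cdot)\rangle_{\mathcal{H}_k} = k(x,y)$. Using bilinearity of the inner product, I would then expand
\[
\bigl\|k(x,\cdot) - k(y,\cdot)\bigr\|_{\mathcal{H}_k}^{2}
= \langle k(x,\cdot),k(x,\cdot)\rangle_{\mathcal{H}_k}
- 2\langle k(x,\cdot),k(y,\cdot)\rangle_{\mathcal{H}_k}
+ \langle k(y,\cdot),k(y,\cdot)\rangle_{\mathcal{H}_k}
= k(x,x) - 2k(x,y) + k(y,y),
\]
which is precisely $d(x,y)$ by the definition given in the statement.

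With this identity in hand, the hypothesis $d(x,y)=0$ is equivalent to $\|k(x,\cdot)-k(y,\cdot)\|_{\mathcal{H}_k}=0$. Since $\|\cdot\|_{\mathcal{H}_k}$ is a genuine norm (it arises from a positive-definite inner product on the Hilbert space $\mathcal{H}_k$), vanishing of the norm forces $k(x,\cdot) - k(y,\cdot) = 0$ as an element of $\mathcal{H}_k$, i.e.\ $k(x,\cdot) = k(y,\cdot)$ as functions on $\mathcal{X}$.

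There is no real obstacle here: the entire content of the lemma is the RKHS identity $d(x,y) = \|k(x,\cdot)-k(y,\cdot)\|_{\mathcal{H}_k}^{2}$ together with the definiteness of the RKHS norm. The only subtlety worth flagging is that the equality $k(x,\cdot) = k(y,\cdot)$ is a pointwise equality of functions on $\mathcal{X}$ (which again is immediate from the reproducing property: for any $z\in\mathcal{X}$, $k(x,z) = \langle k(x,\cdot), k(z,\cdot)\rangle_{\mathcal{H}_k} = \langle k(y,\cdot), k(z,\cdot)\rangle_{\mathcal{H}_k} = k(y,z)$), so that no ambiguity between equivalence-class representatives and pointwise values arises on the finite set $\mathcal{X}$.
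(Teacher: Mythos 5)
Your proof is correct and rests on the same underlying fact as the paper's: $d(x,y)=\|k(x,\cdot)-k(y,\cdot)\|_{\mathcal{H}_k}^{2}$, so $d(x,y)=0$ forces the embeddings to coincide by definiteness of the norm. The paper reaches the same conclusion via a chain of inner-product rearrangements rather than expanding the squared norm directly, but the argument is essentially identical (and yours is arguably the cleaner presentation).
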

\begin{proof}
    When $d(x,y) = 0$, we have $2k(x,y) = k(x,x) + k(y,y)$. Therefore, we  the following equalities:
    \begin{align*}
        k(x,x) + k(y,y) &= 2k(x,y)\\
        k(x,x) - k(x,y) &= k(x,y) - k(y,y)\\
        \langle k(x,\cdot), k(x,\cdot)\rangle_{\mathcal{H}_k} - \langle k(x,\cdot), k(y,\cdot)\rangle_{\mathcal{H}_k} &= \langle k(x,\cdot), k(y,\cdot)\rangle_{\mathcal{H}_k} - \langle k(y,\cdot), k(y,\cdot)\rangle_{\mathcal{H}_k} && \text{(1)} \\
        \langle k(x,\cdot), k(x,\cdot) - k(y,\cdot) \rangle_{\mathcal{H}_k} &= \langle k(x,\cdot) - k(y,\cdot), k(y,\cdot)\rangle_{\mathcal{H}_k} && \text{(2)}\\
        \langle k(x,\cdot), k(x,\cdot) - k(y,\cdot) \rangle_{\mathcal{H}_k} &= \langle k(y,\cdot), k(x,\cdot) - k(y,\cdot) \rangle_{\mathcal{H}_k} && \text{(3)}\\
        \implies k(x,\cdot) = k(y,\cdot)
    \end{align*}
    where (1), (2), and (3) are is due to \textsc{rkhs} definition, linearity of inner product, and symmetry of inner product respectively.
\end{proof}

\begin{proposition}
Let $x_{1},\ldots,x_{n}\in(0,\infty)$ be $n$ distinct and strictly
positive numbers. Let $K\in\real^{n\times n}$ be the matrix with
entries $K_{ij}=\min\{x_{i},x_{j}\}$. Then $K$ is a positive definite
matrix.
\label{prop:k1pd}
\end{proposition}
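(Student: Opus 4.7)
My plan is to realize $K$ as a Gram matrix via the classical integral identity
\[
\min\{x,y\} \;=\; \int_0^\infty \mathbf{1}_{[0,x]}(t)\,\mathbf{1}_{[0,y]}(t)\,dt, \qquad x,y > 0.
\]
Setting $f_i := \mathbf{1}_{[0,x_i]} \in L^2(\mathbb{R}_+)$, this identity gives $K_{ij} = \langle f_i, f_j\rangle_{L^2}$, so $K$ is the Gram matrix of the family $\{f_1,\ldots,f_n\}$. Gram matrices are automatically positive semidefinite, and $K$ is positive definite if and only if the $f_i$ are linearly independent in $L^2(\mathbb{R}_+)$.

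First I would, without loss of generality, reindex so that $x_1 < x_2 < \cdots < x_n$, and set $x_0 := 0$. For any $c \in \mathbb{R}^n$ I would then compute
\[
c^{\top} K c \;=\; \Bigl\| \sum_{i=1}^n c_i f_i \Bigr\|_{L^2}^{2} \;=\; \sum_{k=1}^n (x_k - x_{k-1}) \Bigl(\sum_{i \geq k} c_i\Bigr)^{2},
\]
using that on the interval $(x_{k-1},x_k]$ the indicator $f_i(t)$ equals $1$ precisely when $i \geq k$. Each weight $x_k - x_{k-1}$ is strictly positive by the assumed distinctness and positivity of the $x_i$.

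Finally I would conclude strict positivity: if $c^{\top} K c = 0$ then $\sum_{i \geq k} c_i = 0$ for every $k = 1, \ldots, n$; taking $k = n$ yields $c_n = 0$, and a trivial backward induction on $k$ (subtracting consecutive partial sums) forces $c_i = 0$ for every $i$. Hence $K$ is positive definite.

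The main obstacle is essentially absent, since the integral identity reduces the claim to linear independence of a triangular family of step functions. An equivalent alternative, which I would mention as a cross-check, is the explicit Cholesky-type factorization $K = L D L^{\top}$ after sorting, where $L$ is lower-triangular with $L_{ij} = \mathbf{1}_{j \leq i}$ and $D = \mathrm{diag}(x_1 - x_0,\, x_2 - x_1,\, \ldots,\, x_n - x_{n-1})$; since $L$ is unit lower-triangular (hence invertible) and $D$ has strictly positive diagonal, positive definiteness follows at once.
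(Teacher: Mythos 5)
Your proof is correct, and it takes a genuinely different route from the paper's. The paper first invokes Proposition 2.21 of Paulsen and Raghupathi to get positive semidefiniteness, and then proves invertibility separately by induction on $n$: it peels off $x_1 J_n$ (the all-ones matrix scaled by $x_1$), reduces to the $(n-1)\times(n-1)$ min-matrix built from the shifted values $x_i - x_1$, and argues that $K\alpha = 0$ forces $\alpha = 0$. You instead realize $K$ directly as the Gram matrix of the indicators $\mathbf{1}_{[0,x_i]}$ in $L^2(\mathbb{R}_+)$ and compute the quadratic form explicitly as $c^{\top}Kc = \sum_{k=1}^n (x_k - x_{k-1})\bigl(\sum_{i\ge k} c_i\bigr)^2$ after sorting, from which strict positivity for $c\neq 0$ is immediate by telescoping. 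I checked the key identity: on $(x_{k-1},x_k]$ the indicator $f_i$ equals $1$ exactly for $i\ge k$, so the displayed formula is right, and your $LDL^{\top}$ cross-check reproduces $\min\{x_i,x_j\} = \sum_{k\le \min(i,j)}(x_k-x_{k-1})$ correctly. Your approach buys self-containedness: it proves positive semidefiniteness and definiteness in one stroke without the external citation, and the explicit diagonal form makes the role of distinctness (strictly positive gaps $x_k - x_{k-1}$) transparent. The paper's induction is essentially the same telescoping phenomenon run as a recursion rather than written in closed form; its advantage is that it piggybacks on a lemma the paper already needs elsewhere for the semidefinite case.
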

\begin{proof}
By Proposition 2.21 in \cite{Paulsen_Raghupathi_2016}, the matrix $K$ is positive semidefinite,
so we only need to show that $K$ is full rank. WLOG assume that $0<x_{1}<x_{2}<\cdots<x_{n}$.
We prove by induction on $n$. The base case with $n=1$ clearly holds.
Suppose the claim holds for $n-1$ numbers. Now consider $n$ numbers.
Let $\alpha=(\alpha_{1},\ldots,\alpha_{n})^{\top}\in\real^{n}$. It
suffices to show that $K\alpha=0$ implies $\alpha=0$. We write $K$
in block matrix form as 
\[
K=x_{1}J_{n}+\begin{bmatrix}0 & 0 & \cdots & 0\\
0 & x_{2}-x_{1} & \cdots & x_{2}-x_{1}\\
\vdots & \cdots & \ddots & \cdots\\
0 & x_{2}-x_{1} & \cdots & x_{n}-x_{1}
\end{bmatrix}=x_{1}\begin{bmatrix}1 & 1 & \cdots & 1\\
\onevec & \onevec & \cdots & \onevec
\end{bmatrix}+\begin{bmatrix}0 & 0 & \cdots & 0\\
\zerovec & \mathbf{u}_{2} & \cdots & \mathbf{u}_{n}
\end{bmatrix},
\]
where $J_{n}$ is the $n$-by-$n$ all one matrix, $\onevec\in\real^{n-1}$
the all one vector, $\mathbf{0} \in \real^{n-1}$ the all zero vector, and $\mathbf{u}_{i}\in\real^{n-1},i=2,\ldots,n$.
It follow that 
\[
0=K\alpha=\begin{bmatrix}x_{1}\sum_{i=1}^{n}\alpha_{i}\\
\left(x_{1}\sum_{i=1}^{n}\alpha_{i}\right)\onevec+\sum_{i=2}^{n}\alpha_{i}\mathbf{u_{i}}
\end{bmatrix},
\]
that is, 
\begin{align}
x_{1}\sum_{i=1}^{n}\alpha_{i} & =0,\label{eq:k1-1}\\
\left(x_{1}\sum_{i=1}^{n}\alpha_{i}\right)\onevec+\sum_{i=2}^{n}\alpha_{i}\mathbf{u_{i}} & =\mathbf{0}.\label{eq:k1-2}
\end{align}
Plugging equation (\ref{eq:k1-1}) into equation (\ref{eq:k1-2}),
we get $\sum_{i=2}^{n}\alpha_{i}\mathbf{u_{i}}=\mathbf{0}$. By the
induction hypothesis, the $(n-1)$-by-$(n-1)$ matrix 
\[
\begin{bmatrix}\mathbf{u}_{2} & \cdots & \mathbf{u}_{n}\end{bmatrix}=\big[\min\left\{ x_{i}-x_{1},x_{j}-x_{1}\right\} \big]_{i,j=2,\ldots,n}
\]
has full rank since the $(n-1)$ numbers $x_{2}-x_{1},\ldots,x_{n}-x_{1}$
are distinct and strictly positive. Therefore, we must have $\alpha_{2}=\cdots=\alpha_{n}=0$.
Plugging back into equation (\ref{eq:k1-1}) and using $x_{1}>0$,
we obtain $\alpha_{1}=0$. 
\end{proof}

\subsection{Main \textsc{krope} Theoretical Results}
\label{app:main_krope_theory}

We now present the main theoretical contributions of our work.

\kropespectral*
\begin{proof}
    Recall from Definition~\ref{def:krope_rep}, we have:
    \begin{equation*}
        \E[\Phi\Phi^{\top}] = K_1 + \gamma \E[P^{\pieval}\Phi (P^{\pieval}\Phi)^{\top}],
    \end{equation*}
    where $K_1\in\mathbb{R}^{|\mathcal{X}|\times|\mathcal{X}|}$ such that each entry is the short-term similarity, $k_1$, between every pair of state-actions i.e. $K_1(s_1, a_1; s_2, a_2) := 1 - \frac{|r(s_1, a_1) - r(s_2, a_2)|}{\left |r_{\text{max}} - r_{\text{min}}\right|}$.

    From this definition, we can proceed by left and right multiplying $\Phi^{\top}$ and $\Phi$ respectively to get:
    \begin{equation*}
        \E[\Phi^{\top}\Phi\Phi^{\top}\Phi] = \E[\Phi^{\top}K_1\Phi] + \gamma \E[\Phi^{\top}P^{\pieval}\Phi (P^{\pieval}\Phi)^{\top}\Phi].
    \end{equation*}
    Notice that $B:= \E[\Phi^{\top}\Phi]$ is the feature covariance matrix and $C := \E[\Phi^{\top}P^{\pieval}\Phi]$ is the cross-covariance matrix. By making the appropriate substitutions, we get:
    \begin{equation*}
        BB^{\top} = \E[\Phi^{\top}K_1\Phi] + \gamma CC^{\top}.
    \end{equation*}
    We can then left and right multiply by $B^{-1}$ and $B^{-\top}$ to get the following where $L := \gamma B^{-1}C$:
    \begin{equation*}
        I = B^{-1}\E[\Phi^{\top}K_1\Phi]B^{-\top} + \frac{1}{\gamma}LL^{\top}.
    \end{equation*}
    Rearranging terms gives
    \begin{equation*}
        I - \frac{1}{\gamma} LL^{\top} = B^{-1}\E[\Phi^{\top}K_1\Phi]B^{-\top}.
    \end{equation*}
    From Lemma~\ref{lemma:k1psd}, we know that $K_1$ is positive semidefinite, which means that $B^{-1}\E[\Phi^{\top}K_1\Phi]B^{-\top}$ is also positive semidefinite. Therefore, the eigenvalues of LHS above must also be greater than or equal to zero. Letting $\lambda$ be the eigenvalue of $L$, we know that that the following must hold:
    \begin{align*}
        &1 - \frac{\lambda^2}{\gamma} \geq 0
        \implies |\lambda| \leq \sqrt{\gamma} .
    \end{align*}
    Since $\gamma < 1$, the spectral radius of $L = (\E[\Phi^{\top}\Phi])^{-1}(\gamma\E[\Phi^{\top}P^{\pieval}\Phi])$ is always less than $1$. Thus, \textsc{krope} representations are stable. Finally, since \textsc{krope} representations are stable and due to Proposition~\ref{prop:stability}, \textsc{krope} representations stabilize \textsc{lspe}.
\end{proof}

\kropebcrl*
\begin{proof}

Our proof strategy is to show that the abstraction function $\phi$ due to \textsc{krope} is a $\pieval$-bisimulation, which implies it is Bellman complete due to Proposition~\ref{prop:piebisimbc}.

According to Propsition~\ref{prop:kropedist}, $d_{\text{\textsc{krope}}}(x,y) = |r(x) - r(y)| + \gamma \text{MMD}(k^\pieval)(P^\pieval(\cdot|x), P^\pieval(\cdot|y))$. When $d_{\text{\textsc{krope}}}(x,y) = 0$ for any two state-actions, it implies that $r(x) = r(y)$ and $\text{MMD}(k^\pieval)(P^\pieval(\cdot|x), P^\pieval(\cdot|y)) = 0$.

For $\phi$ to be a $\pieval$-bisimulation, we need $\forall x^\phi\in\mathcal{X}^\phi, \sum_{x'\in x^\phi}P^\pieval(x'|x) = \sum_{x'\in x^\phi}P^\pieval(x'|y)$ to be true for any $x,y\in\mathcal{X}$ such that $\phi(x) = \phi(y)$. While $\text{MMD}(k^\pieval)(P^\pieval(\cdot|x), P^\pieval(\cdot|y)) = 0$, it is possible that  $P^\pieval(\cdot|x) \neq P^\pieval(\cdot|y)$. However, as we will show, under the assumption that the abstract rewards $r^\phi$ are distinct $\forall x^\phi\in\mathcal{X}^\phi$, we do have $\forall x^\phi\in\mathcal{X}^\phi, \sum_{x'\in x^\phi}P^\pieval(x'|x) = \sum_{x'\in x^\phi}P^\pieval(x'|y)$. Before we proceed, we make the following technical assumption on the reward function: $r(x)\in (-1,1), \forall x\in\mathcal{X}$. The exclusion of the rewards $-1$ and $1$ allows us to use Proposition~\ref{prop:k1pd} to show that the \textsc{krope} kernel is positive definite instead of positive semi-definite.
 
Once we group state-actions $x,y\in\mathcal{X}$ together such that $d_{\text{\textsc{krope}}}(x,y) = 0$, we have the corresponding abstraction function $\phi:\mathcal{X}\to\mathcal{X}^\phi$. Accordingly, $\phi$ induces a Markov reward process, $\mathcal{M}^\phi := \langle \mathcal{X}^\phi, r^\phi, P^\phi, \gamma\rangle$ where $r^\phi$ is the abstract reward function $r^\phi:\mathcal{X}^\phi\to (-1,1)$ and $P^\phi$ is the transition dynamics on the abstract \textsc{mrp} i.e. $P^\phi(\cdot|x^\phi)$.  We can also consider the abstract \textsc{krope} kernel, $k^\phi(x^\phi, y^\phi)$, which measures the \textsc{krope} relation on $\mathcal{X}^\phi$. Note that all these quantities are a function of $\pieval$. We drop the notation for clarity. By this construction, we have:
\begin{align*}
    r^\phi(x^\phi) = r(x), \forall x\in x^\phi && \text{Since all rewards are equal within $x^\phi$}\\
    k^\phi(x^\phi,\cdot) = k(x,\cdot), \forall x\in x^\phi && \text{Lemma~\ref{lemma:kernelemb}}
\end{align*}

Now, under the assumption that all abstract rewards $r^\phi(x^\phi)$ are distinct $\forall x^\phi\in\mathcal{X}^\phi$, we have that the kernel matrix $K^\phi\in\mathbb{R}^{\mathcal{X}^\phi\times\mathcal{X}^\phi}$ where each entry $k^\phi(x^\phi, y^\phi)$ is positive definite. To see this fact, consider that:
\begin{equation}
    k^\phi(x^\phi, y^\phi) = k_1^\phi(x^\phi, y^\phi) + \gamma \E_{X^\phi\sim P^\phi(\cdot|x^\phi), Y^\phi \sim P^\phi(\cdot|y^\phi)}[k^\phi(X^\phi, Y^\phi)],
    \label{eq:abstract_krope}
\end{equation}
where $k_1^\phi(x^\phi, y^\phi) := 1 - \frac{1}{r^\phi_{\text{max}} - r^\phi_{\text{min}}}|r^\phi(x^\phi) - r^\phi(y^\phi)|$. From Lemma~\ref{lemma:k1psd}, we know that $k_1^\phi$ is positive semidefinite. However, under the assumption that all abstract rewards $r^\phi$ are distinct, Proposition~\ref{prop:k1pd} tells us that $k_1^\phi$ is positive definite. Given that $k^\phi(x^\phi, y^\phi)$ (Equation (\ref{eq:abstract_krope})) is just a summation of positive definite kernels, $k^\phi$ is positive definite, which means $K^\phi$ is positive definite.

We now consider when the MMD is zero. Again, by construction, we have the following when $\text{MMD}(k^\pieval)(P^\pieval(\cdot|x), P^\pieval(\cdot|y)) = 0$. For clarity, we use $k$ instead of $k^\pieval$.
\begin{align*}
    0 &= \|\E_{X'\sim P^\pieval(\cdot|x)}[k(X',\cdot)] - \E_{X'\sim P^\pieval(\cdot|y)}[k(X',\cdot)]\|_{\mathcal{H}_k}\\
    &= \|\E_{X'\sim P^\pieval(\cdot|x)}[k(X',\cdot)] - \E_{X'\sim P^\pieval(\cdot|y)}[k(X',\cdot)]\|_{2}\\
    &= \| \sum_{x'\in\mathcal{X}} P^\pieval(x'|x)k(x',\cdot) - \sum_{x'\in\mathcal{X}} P^\pieval(x'|y)k(x',\cdot)\|_2\\
    &= \left\| \sum_{x^\phi\in\mathcal{X}^\phi} \sum_{x'\in x^\phi} P^\pieval(x'|x)k(x',\cdot) - \sum_{x^\phi\in\mathcal{X}^\phi} \sum_{x'\in x^\phi} P^\pieval(x'|y)k(x',\cdot)\right\|_2\\
    &= \left\| \sum_{x^\phi\in\mathcal{X}^\phi} k^\phi(x^\phi,\cdot) \sum_{x'\in x^\phi} P^\pieval(x'|x) - \sum_{x^\phi\in\mathcal{X}^\phi} k^\phi(x^\phi,\cdot) \sum_{x'\in x^\phi} P^\pieval(x'|y)\right\|_2 && \text{(1)}\\
    &= \left\| \sum_{x^\phi\in\mathcal{X}^\phi} k^\phi(x^\phi,\cdot) \Pr(x^\phi|x) - \sum_{x^\phi\in\mathcal{X}^\phi} k^\phi(x^\phi,\cdot) \Pr(x^\phi|y) \right\|_2 && \text{$\Pr$ denotes probability}\\
\end{align*}
where $(1)$ is due to $k^\phi(x^\phi,\cdot) = k(x,\cdot), \forall x\in x^\phi$
From above, we can see that the kernel and probability distributions are over $\mathcal{X}^\phi$. In matrix notation, we can write the above as follows where $p^\phi := \Pr(\cdot|x)$ and $q^\phi := \Pr(\cdot|y)$ are viewed as probability distribution vectors in $\mathbb{R}^{|\mathcal{X}^\phi|}$.
\begin{align*}
    0 = \left\| K^\phi p^\phi - K^\phi q^\phi\right\|_2 \\
    \implies p^\phi = q^\phi && \text{since $K^\phi$ is positive definite, from Lemma~\ref{lemma:mmdzero}.}
\end{align*}

We thus have $\forall x^\phi\in\mathcal{X}^\phi, \sum_{x'\in x^\phi}P^\pieval(x'|x) = \sum_{x'\in x^\phi}P^\pieval(x'|y)$ to be true for any $x,y\in\mathcal{X}$ such that $\phi(x) = \phi(y)$. Given this condition holds true and $r(x) = r(y), \forall x,y\in\mathcal{X}$ such that $\phi(x) = \phi(y)$, $\phi$ is a $\pieval$-bisimulation. From Proposition~\ref{prop:piebisimbc} we then have that $\phi$ is Bellman complete.

\end{proof}

\textbf{Remark on the injective reward assumption}. The injective reward assumption simply means that each abstract state-action group will have a distinct associated reward from every other abstract state-action group. This assumption comes as a trade-off. \citet{Chen2019InformationTheoreticCI} proved that bisimulation abstractions are Bellman complete. Instead of assuming injective rewards, they assumed that two states were grouped together if each state’s transition dynamics led to next states that are also grouped together (one of the conditions for exact bisimulations). This condition is also considered strict and inefficient to compute \citep{castro_scalable_2019}.

In our work, we relax this exact transition dynamics equality by considering independent couplings between next state distributions, thereby making the \textsc{krope} algorithm efficient to compute. However, the drawback of this relaxation is that preserving distinctness between abstract state-actions may be lost (see remarks on $k^{\pi_e}$ in Section~\ref{sec:krope_op}). To ensure the distinctness between state-action abstractions, we assumed that the reward function is injective. This then allowed us to show Bellman completeness in a similar way to that shown in \citet{Chen2019InformationTheoreticCI}.

\section{Empirical Details}
\label{app:empirical}

In this section, we provide specific details on the empirical setup and additional results.

\subsection{Empirical Setup}
\label{app:emp_setup}

\paragraph{General Training Details.} In all the continuous state-action experiments, we use a neural network with $1$ layer and $1024$ neurons using \textsc{relu} activation function and layernorm to represent the encoder $\phi:\mathcal{X}\to\mathbb{R}^d$ \citep{gallici2024simplifyingdeeptemporaldifference}. We use mini-batch gradient descent to train the network with mini-batch sizes of $2048$ and for $500$ epochs, where a single epoch is a pass over the full dataset. We use the Adam optimizer with learning rate $\{1e^{-5}, 2e^{-5}, 5e^{-5}\}$ and weight decay $1e^{-2}$. The target network is updated with a hard update after every epoch. The output dimension $d$ is $\{|\mathcal{X}|/4, |\mathcal{X}|/2, 3|\mathcal{X}|/4\}$, where $|\mathcal{X}|$ is the dimension of the original state-action space of the environment. All our results involve analyzing this learned $\phi$. Since \textsc{fqe} outputs a scalar, we add a linear layer on top of the $d$-dimensional vector to output a scalar. The entire network is then trained end-to-end. The discount factor is $\gamma = 0.99$. The auxiliary task weight with \textsc{fqe} for all representation learning algorithms is $\alpha = 0.1$. When using \textsc{lspe} for \textsc{ope}, we invert the covariance matrix by computing the pseudoinverse.

In the tabular environments, we use a similar setup as above. The only changes are that we use a linear network with a bias component but no activation function and fix the learning rate to be $1e^{-3}$. For the experiment in Section~\ref{app:exp_krope_div}, $\alpha = 0.8$. We refer the reader pseudo-code in Appendix~\ref{app:background}.

\paragraph{Evaluation Protocol: \textsc{ope} Error}. As noted earlier, we measure \textsc{ope} error by measuring \textsc{msve}. To ensure comparable and interpretable values, we normalize the \textsc{msve} by dividing with $\operatorname{\textsc{msve}}[q^{\textsc{rand}}] \coloneqq \E_{(S, A)\sim\mathcal{D}}[(q^{\textsc{rand}}(S,A) - q^{\pieval}(S, A))^2]$, where $q^{\textsc{rand}}$ is the action-value function of a random-policy. Similarly, in the continuous state-action environments, we normalize by $\operatorname{\textsc{msve}}[q^{\textsc{rand}}] \coloneqq \E_{S_0 \sim d_0, A_0\sim\pieval}[(q^{\textsc{rand}}(S_0,A_0) - q^{\pieval}(S_0, A_0))^2]$. Values less than one mean that the algorithm estimates the true performance of $\pieval$ better than a random policy.

\textbf{Evaluation Protocol: Realizability Error}. In tabular experiments, we normalize the realizability error. After solving the least-squares problem $\eps := \|\Phi\hat{w} - q^{\pieval}\|^2_2$, where $\hat{w} := \argmin_w\|\Phi w - q^{\pieval}\|_2^2$. We divide $\eps$ by $\frac{1}{|\mathcal{X}|}\sum_i |q^{\pieval}(x_i)|$ and plot this value.

\textbf{Pearson Correlation.} The formula for the Pearson correlation used in the main experiments is:
\begin{equation*}
    r = \frac{\sum_{i=1}^n (x_i -\bar{x})(y_i - \bar{y})}{\sqrt{\sum_{i=1}^n (x_i -\bar{x})^2} \sqrt{\sum_{i=1}^n (y_i -\bar{y})^2}}
\end{equation*}
where $\bar{x}$ and $\bar{y}$ are the means of all the $x_i$'s and $y_i$'s respectively.

\paragraph{Custom Datasets.} 
We generated the datasets by first training policies in the environment using \textsc{sac} \citep{haarnoja_sac_2018} and recording the trained policies during the course of training. For each environment, we select $3$ policies, where each contributes equally to generate a given dataset. We set $\pieval$ to be one of these policies. The expected discounted return of the policies and datasets for each domain is given in Table \ref{tab:pi_vals} ($\gamma = 0.99$). In all environments, $\pieval = \pi^1_b$ (see Table \ref{tab:pi_vals}). The values for the evaluation and behavior policies were computed by running each for $300$ unbiased Monte Carlo rollouts, which was more than a sufficient amount for the estimate to converge. This process results in total of $4$ datasets, each of which consisted of $100$K transitions.

\begin{table}[H]
\centering
\begin{tabular}{l|l|l|l|l|l|l}
\toprule
Environments &
$\pieval$ &
$\pi^1_b$ &
$\pi^2_b$  \\
\midrule
CartPoleSwingUp &
$50$ &
$20$ &
$5$ \\
FingerEasy &
$100$ &
$71$ &
$32$ \\
HalfCheetah &
$51$ &
$27$ &
$2$  \\
WalkerStand &
$90$ &
$55$ &
$40$  \\
\bottomrule
\end{tabular}
\caption{Policy values of the target policy and behavior policy on DM-control \citep{tassa_dm_2018}.}
\label{tab:pi_vals}
\end{table}

\paragraph{\textsc{d4rl} Datasets.} Due to known discrepancy issues between newer environments of gym\footnote{\url{https://github.com/Farama-Foundation/D4RL/tree/master}}, we generat our datasets instead of using the publicly available ones. To generate the datasets, we use the publicly available policies \footnote{\url{https://github.com/google-research/deep_ope}}. For each domain, the expert (and target policy) was the $10$th (last policy) from training. The medium (and behavior policy) was the $5$th policy. We added a noise of $0.1$ to the policies. The values for the evaluation and behavior policies were computed by running each for $300$ unbiased Monte Carlo rollouts , which was more than a sufficient amount for the estimate to converge. We set $\gamma = 0.99$. We evaluate on the Cheetah, Walker, and Hopper domains. This generation process for three environments, led to $9$ datasets, each of which consisted of $100$K transitions.

\paragraph{Toy Divergence Example} We set $\gamma = 1$. The sampled state-actions are fed into a linear network encoder with a bias component and no activation function which outputs a $d=3$ representation. This representation is shaped by \textsc{krope} and this representation is then fed into a linear function to output the scalar value for \textsc{fqe}. $\mathcal{D}^{\text{on}}$ is a dataset of $2000$ on-policy transitions. $\mathcal{D}^{s}$ is an off-policy dataset with $5000$ transitions starting from the specified state $s$, where $s = \{w_1, w_2, w_3, 2w_3\}$, where the next state is sampled according to the transition dynamics of the \textsc{mrp}. 

Regarding the statement of, ``we would generally expect that the probability of sampling a bad transition \emph{pair} is less than that of sampling a single bad transition", we make the following simplistic and rough calculation. For a given a dataset, assume that the probability of choosing a bad transition is $p$ and a good transition is $1- p$. Then sampling a pair of bad transitions versus sampling a single bad transition involves comparing the probabilities of $p^2$ and $2p(1-p)$, respectively. In general, we would expect $2p(1-p)\geq p^2$, unless $p\geq 0.66$, i.e., the dataset starts to overflow with bad transitions. Roughly, we would expect \textsc{fqe} to diverge when $p\geq 0.5$, but we would expect \textsc{krope} to diverge when $p\geq 0.66$, which suggests that \textsc{krope} is less prone to divergence than \textsc{fqe}.

\subsection{Baselines}

We provide details of the $6$ baselines in this section.

\paragraph{\textsc{dbc}.} It is a bisimulation-based metric learning algorithm that learns $\pi$-bisimilar representations by ensuring the L$1$ distance between the latent representations approximates the $\pi$-bisimulation metric \cite{zhang_learning_2021}. During the learning process, it also learns a model of the environment. Its additional hyperparameters are the learning rate of model learning (which we set to $1e^{-5}$) and the output dimension of $\phi$.

\paragraph{\textsc{rope}.} It is a bisimulation-based metric learning algorithm that is off-policy evaluation variant of \textsc{mico} \cite{castro_mico_2022, Pavse_State-Action_2023}. It learns representations directly such that the $\pi$-bisimulation metric is satisfied and makes no assumptions on the transition dynamics. Its additional learning rate is the output dimension of $\phi$. Furthermore, we provide additional details on our experimental setup is different from that in \citep{Pavse_State-Action_2023}. The main differences are:
\begin{enumerate}
    \item The original \textsc{rope} used \textsc{rope} as a pre-training step, fixed the representations, and then fed them into \textsc{fqe} for \textsc{ope}. In our case, we also pre-train the representations but with \textsc{fqe} as a representation learning algorithm (for value predictive representations \cite{lehnert_2020_sr}) along with other representation learning algorithms as auxiliary tasks. The fixed learned representations are then fed into \textsc{lspe} for \textsc{ope}.
    \item The original \textsc{rope} encoder architecture had a \textsc{tanh} activation function on the output layer, which we effectively serves a clipping mechanism to the features, which is similar to how public implementations of \textsc{fqe} clip the return to avoid divergence\footnote{\url{https://github.com/google-research/google-research/blob/master/policy_eval/q_fitter.py}}.
\end{enumerate}

We opted to use our alternative setup for two reasons: 1) By using \textsc{lspe} as the \textsc{ope} algorithm, instead of \textsc{fqe}, we can precisely quantify the stability properties of the representations in terms of the spectral radius (Theorem~\ref{thm:krope_spectral}), which is harder to do when using \textsc{fqe} as the \textsc{ope} algorithm; 2) while a valid architectural choice, for this current work, we viewed the use of the tanh function as obfuscating the true stability properties of the representations and so opted to avoid it. More practically, it is reasonable to use the \textsc{tanh} as part of the architecture.

\paragraph{\textsc{bcrl}.} Unlike the other algorithms, \textsc{bcrl} is not used as an auxiliary loss with \textsc{fqe} \citep{chang_learning_2022}. We use the same learning rates as mentioned above for $\{1e^{-5}, 2e^{-5}, 5e^{-5}\}$ when training $\phi$. As suggested by prior work, self-predictive algorithms such \textsc{bcrl} work well when network that outputs the predicted next state-action is trained at a faster rate \citep{tang_2023_spr, grill2020byol}. Accordingly, we set its learning rate to be $1e^{-4}$. For \textsc{bcrl-exp}, which involves the log determinant regularizer, we set this coefficient to $1e^{-2}$. \textsc{bcrl}'s hyperparameters are: the learning rates for $\phi$, $M$, and $\rho$; the output dimension of $\phi$; and the log determinant coefficient (see Equation (\ref{eq:bcrl_proxy})).

\paragraph{\textsc{dr3}.} The \textsc{dr3} regularizer minimizes the total feature co-adaptation by adding the term $\sum_{(s,a,s')\in\mathcal{D}, a'\sim \pieval} \phi(s,a)^{\top}\phi(s',a')$ as an auxiliary task to the main \textsc{fqe} loss \citep{kumar_dr3_2021}. \cite{ma_rd_2024} introduced an improvement to this auxiliary loss by suggesting that the absolute value of the feature co-adaptation be minimized, i.e., $\sum_{(s,a,s')\in\mathcal{D}, a'\sim \pieval} |\phi(s,a)^{\top}\phi(s',a')|$. We use $\alpha = 0.1$ as its auxiliary task weight. Absolute \textsc{dr3}'s only hyperparameters are the auxiliary task weight $\alpha$ and the $\phi$ output dimension.

\paragraph{\textsc{beer}.} \cite{he2024adaptive} introduced an alternative regularizer to \textsc{dr3} rank regularizer since they suggested that the minimization of the unbounded feature co-adaptation can undermine performance. They introduced their bounded rank regularizer \textsc{beer} (see Equation (12) in \cite{he2024adaptive}). \textsc{beer} introduces only the auxiliary task weight $\alpha$ as the additional hyperparameter.

\paragraph{\textsc{krope}.} \textsc{krope}'s only hyperparameters are the output dimension of $\phi$ and the learning rate of the \textsc{krope} learning algorithm.

\subsection{Additional Results}

In this section, we include additional empirical results.

\begin{figure*}[hbtp]
    \centering
        \subfigure[Realizability Error]{\label{fig:randommdp_realize}\includegraphics[scale=0.15]{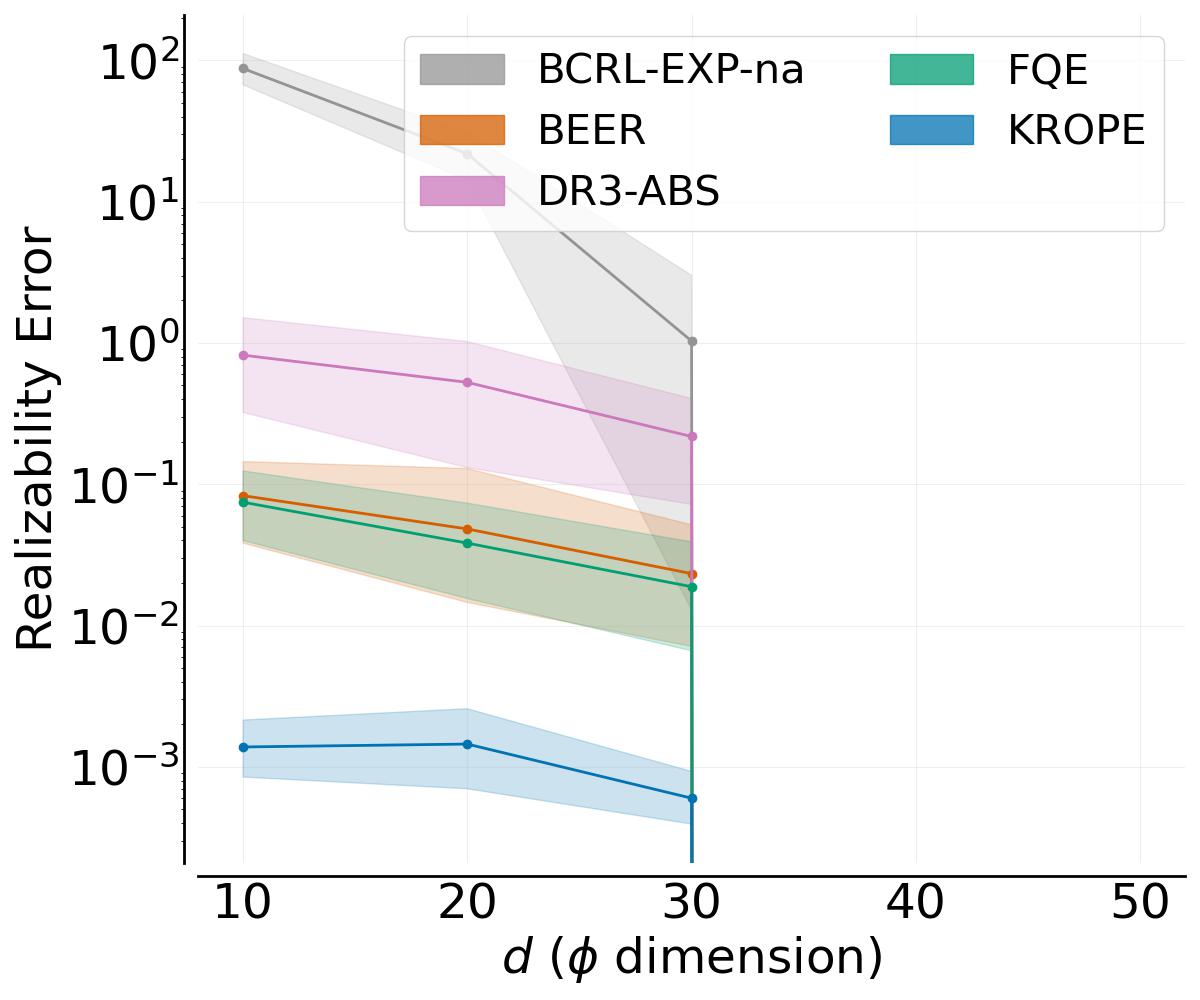}}
    \caption{\footnotesize Evaluation of basic representation properties on Garnet \textsc{mdp}s with $40$ state-actions vs. output dimension $d$.  Figure~\ref{fig:randommdp_realize}: Realizability error; lower is better. All results are averaged over $30$ trials and the shaded region is the $95\%$ confidence interval.}
\end{figure*}

\paragraph{Realizability.} In addition to stability, we also care about the realizability of $\Phi$. We say $\Phi$ is a \emph{realizable} representation if $q^{\pieval} \in \text{Span}(\Phi),$ where $\text{Span}(\Phi)$ is the subspace of all expressible action-value functions with $\Phi$. Note that even if $\Phi$ is a realizable and stable representation, \textsc{lspe} may not recover the $q^{\pieval}$ solution \citep{sutton_rlbook_2018}.

A basic criterion for learning $q^\pieval$ is realizability. That is, we want $\eps := \|\Phi\hat{w} - q^{\pieval}\|^2_2$, where $\hat{w} := \argmin_w\|\Phi w - q^{\pieval}\|_2^2$, to be low. In our experiments, we compute $\eps$ and plot it as a function of $d$ in Figure~\ref{fig:randommdp_realize}. A critical message from our results is that stability and realizability do not always go hand-in-hand. While \textsc{bcrl-exp-na} has favorable spectral radius properties (Figure~\ref{fig:randomdmp_spectral}), it has poor realizability, which will negatively affect its \textsc{ope} accuracy. \textsc{krope}, on the other hand, has favorable stability and realizability properties up till $d=30$. When $d\geq40$, the realizability error is $0$ for all algorithms since the subspace spanned by $\Phi$ is large enough to contain the true action-value function \citep{bellemare_2020_stab}. While the realizability error is $0$ for $d\geq 40$, the representations can be unstable (Figure~\ref{fig:randomdmp_spectral}). 
\vspace{-10pt}

\subsubsection{Learning Graphs for Offline Policy Evaluation on \textsc{dmc} and \textsc{d4rl} Datasets}
\label{app:d4rl_ope}

In this section, we present the offline policy evaluation results on the \textsc{dmc} and \textsc{d4rl} datasets.  We present the results in Figures~\ref{fig:dmc_ope} and~\ref{fig:d4rl_ope}.

\begin{figure*}[hbtp]
    \centering
        \subfigure[CartPoleSwingUp]{\includegraphics[scale=0.13]{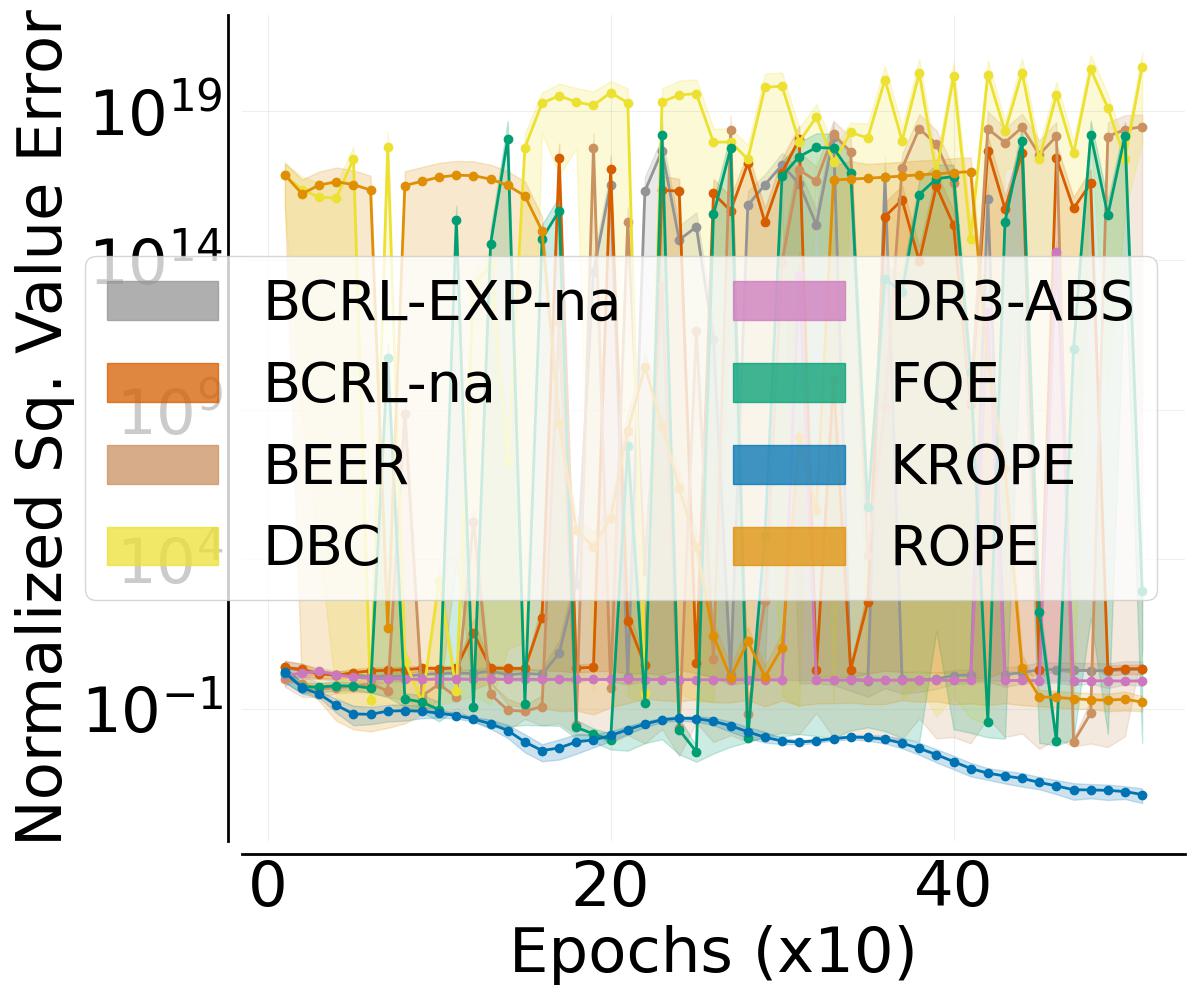}}
        \subfigure[CheetahRun]{\includegraphics[scale=0.13]{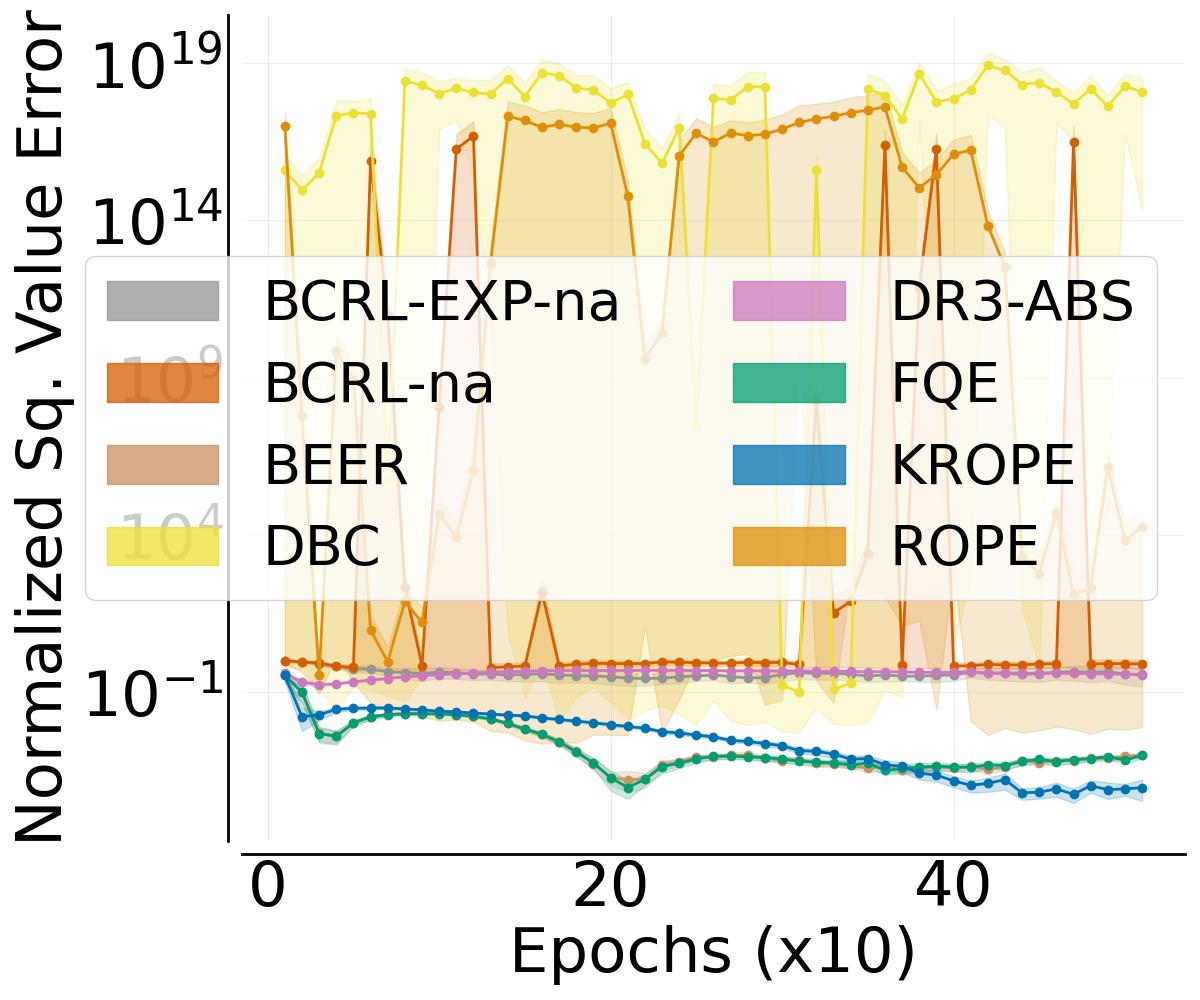}}
        \subfigure[FingerEasy]{\includegraphics[scale=0.13]{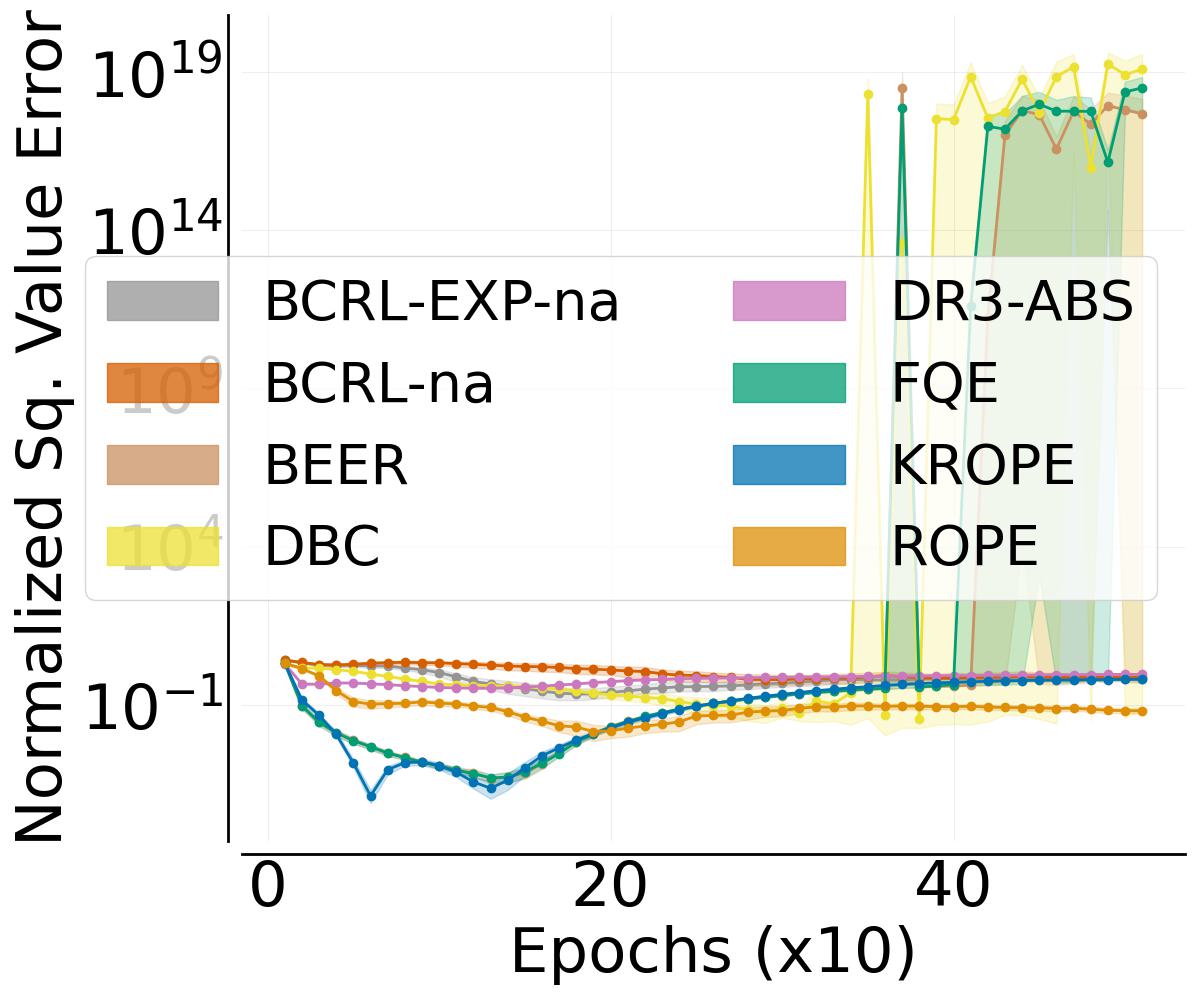}}
        \subfigure[WalkerStand]{\includegraphics[scale=0.13]{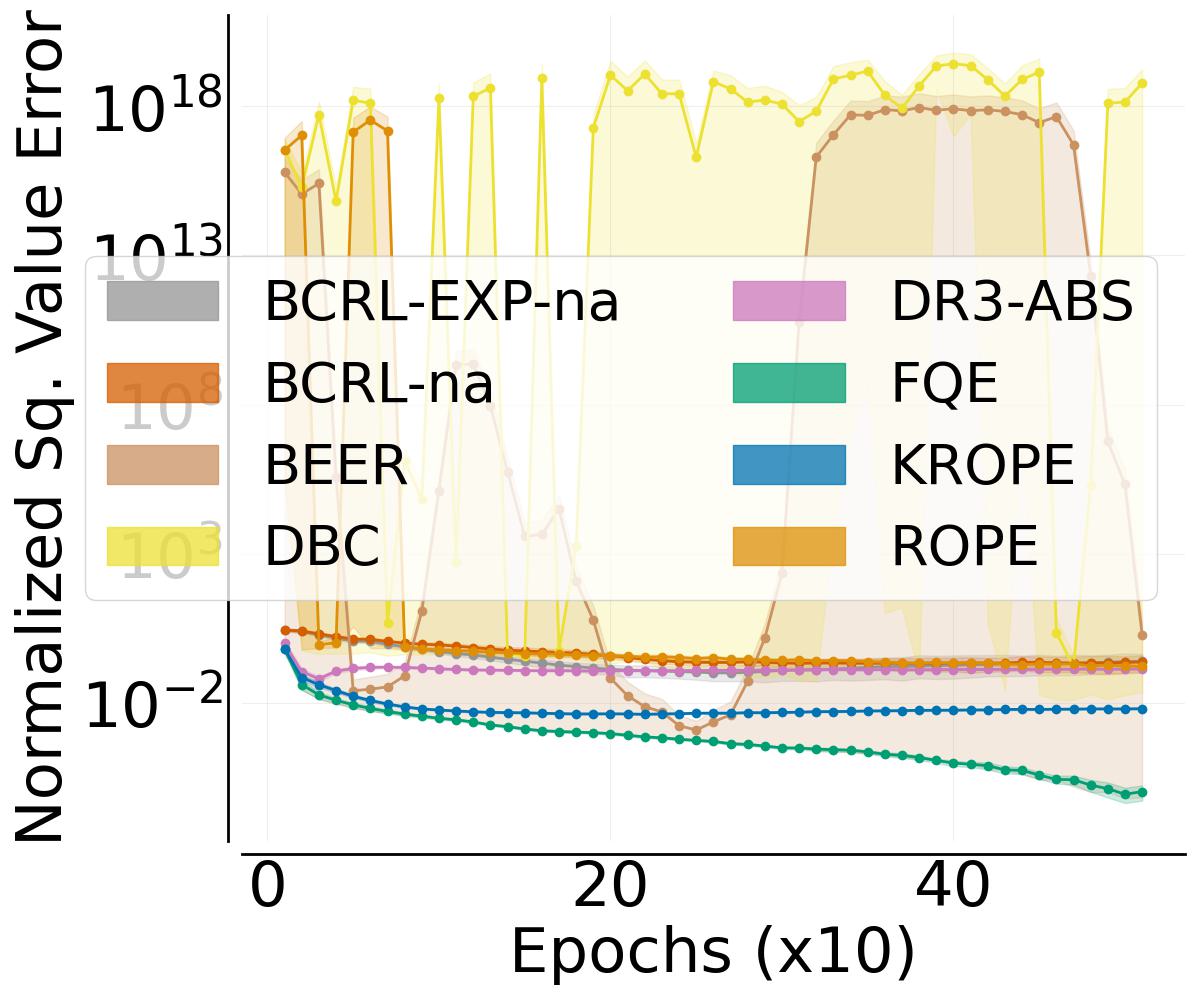}}
    \caption{\footnotesize Normalized squared value error achieved by \textsc{lspe} when using a particular representation vs. representation training epochs on the custom \textsc{dmc} datasets. \textsc{lspe} estimates are computed every $10$ epochs. Results are averaged over $20$ trials and the shaded region is the $95\%$ confidence interval. Lower and less erratic is better.}
    \label{fig:dmc_ope}
\end{figure*}

On the \textsc{dmc} dataset, we find that \textsc{krope} reliably produces stable and low \textsc{msve} during the full course of training.

\begin{figure*}[hbtp]
    \centering
        \subfigure[Cheetah random]{\includegraphics[scale=0.14]{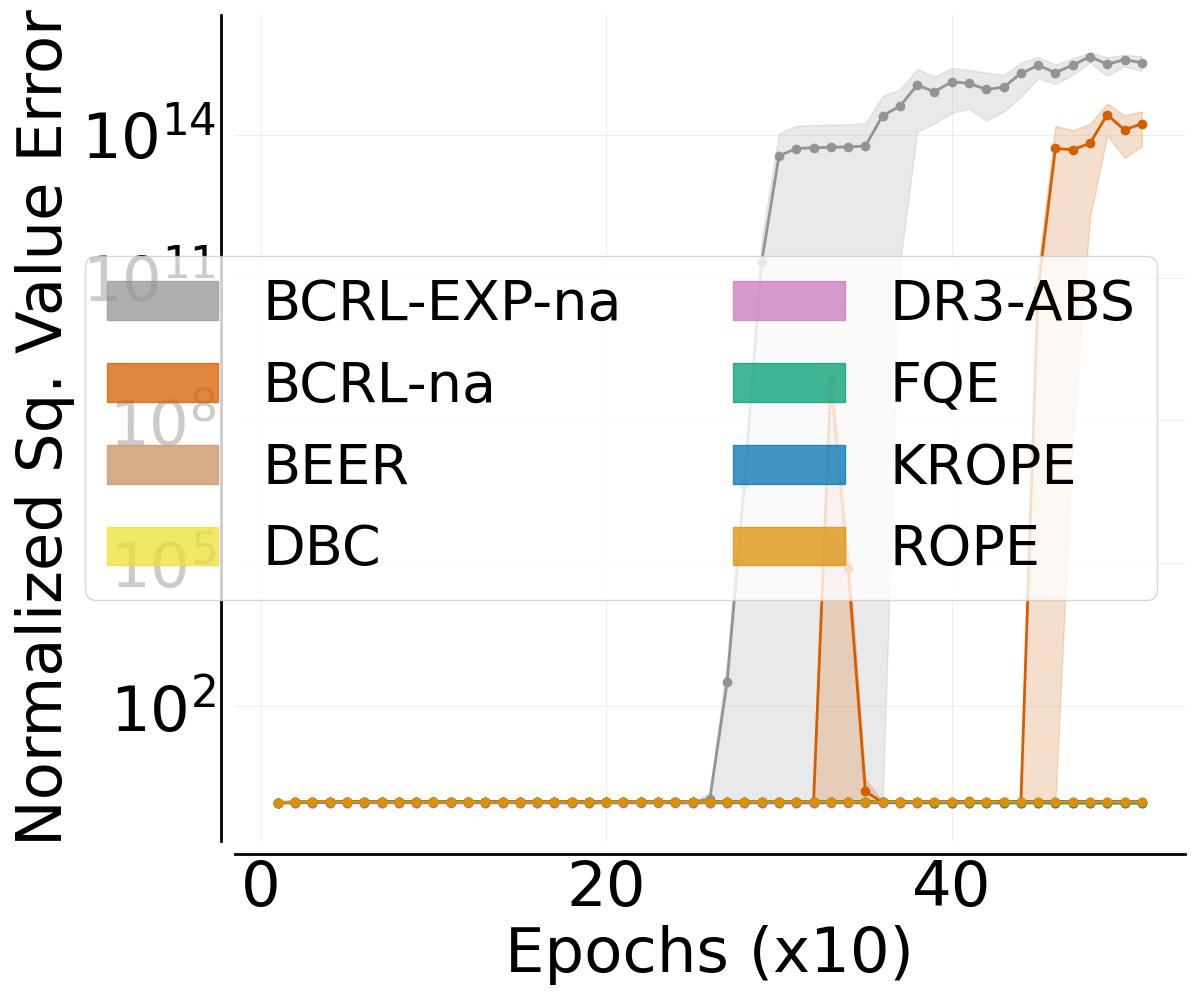}}
        \subfigure[Cheetah medium]{\includegraphics[scale=0.14]{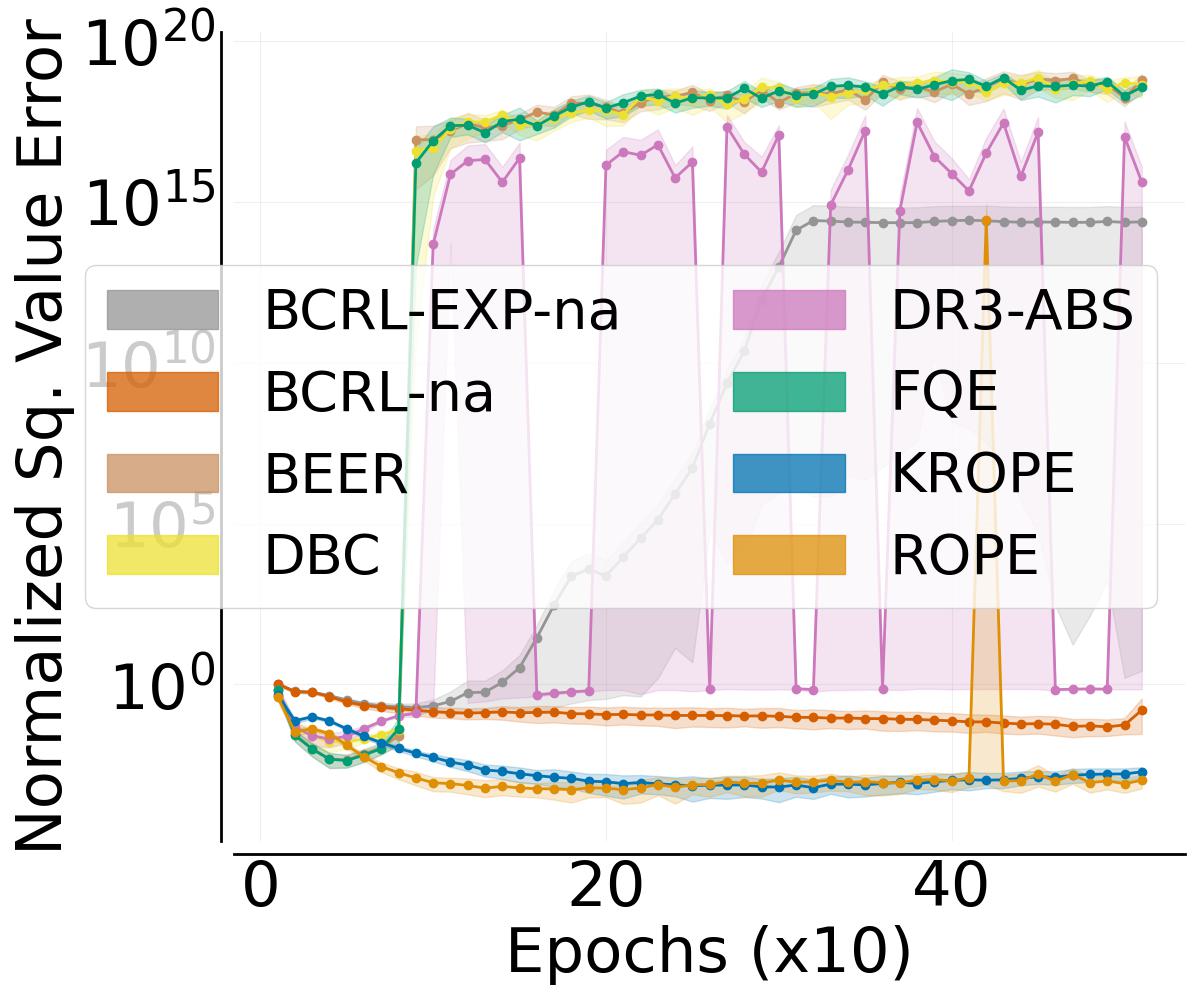}}
        \subfigure[Cheetah medium-expert]{\includegraphics[scale=0.14]{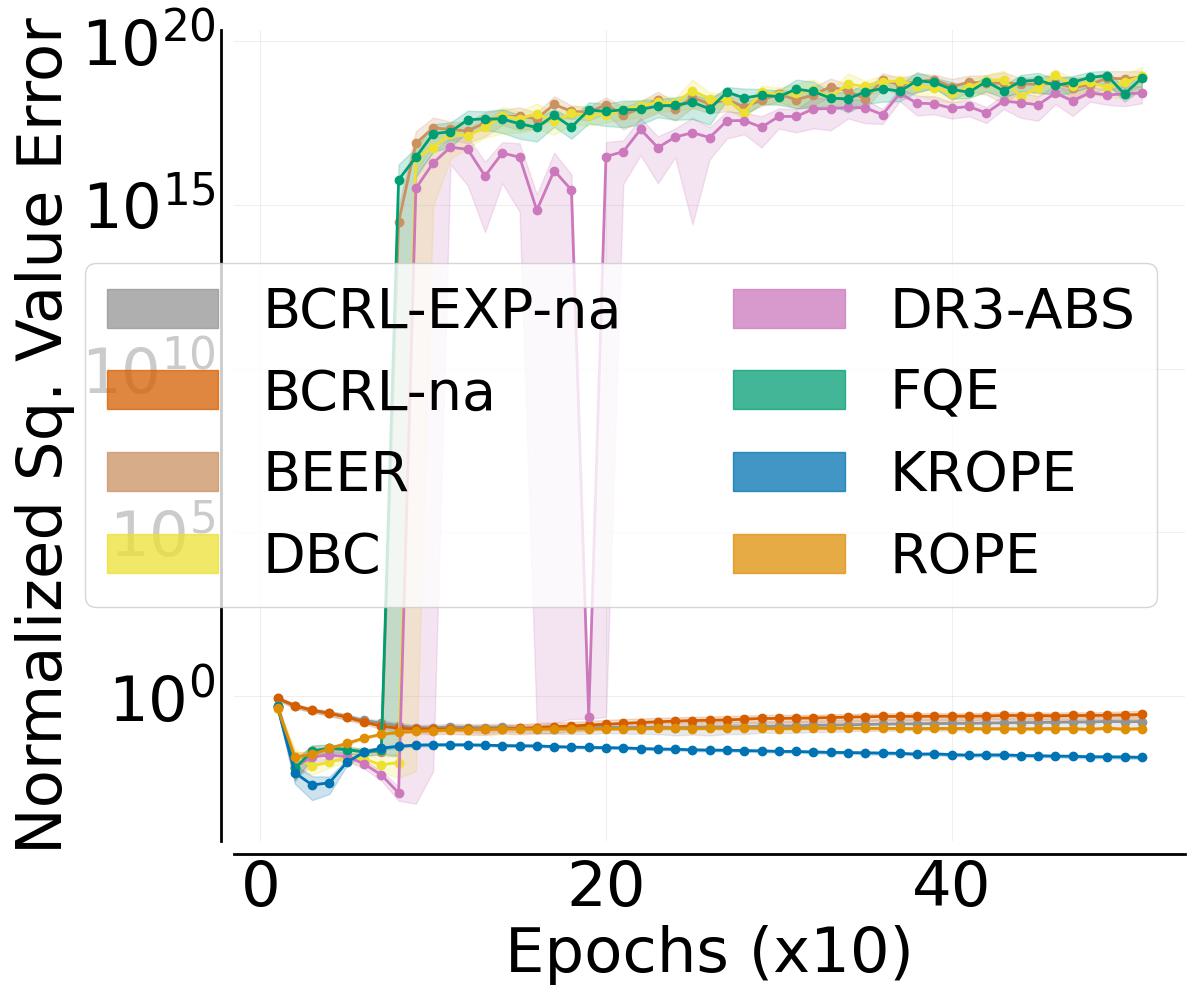}}\\
        \subfigure[Hopper random]{\includegraphics[scale=0.14]{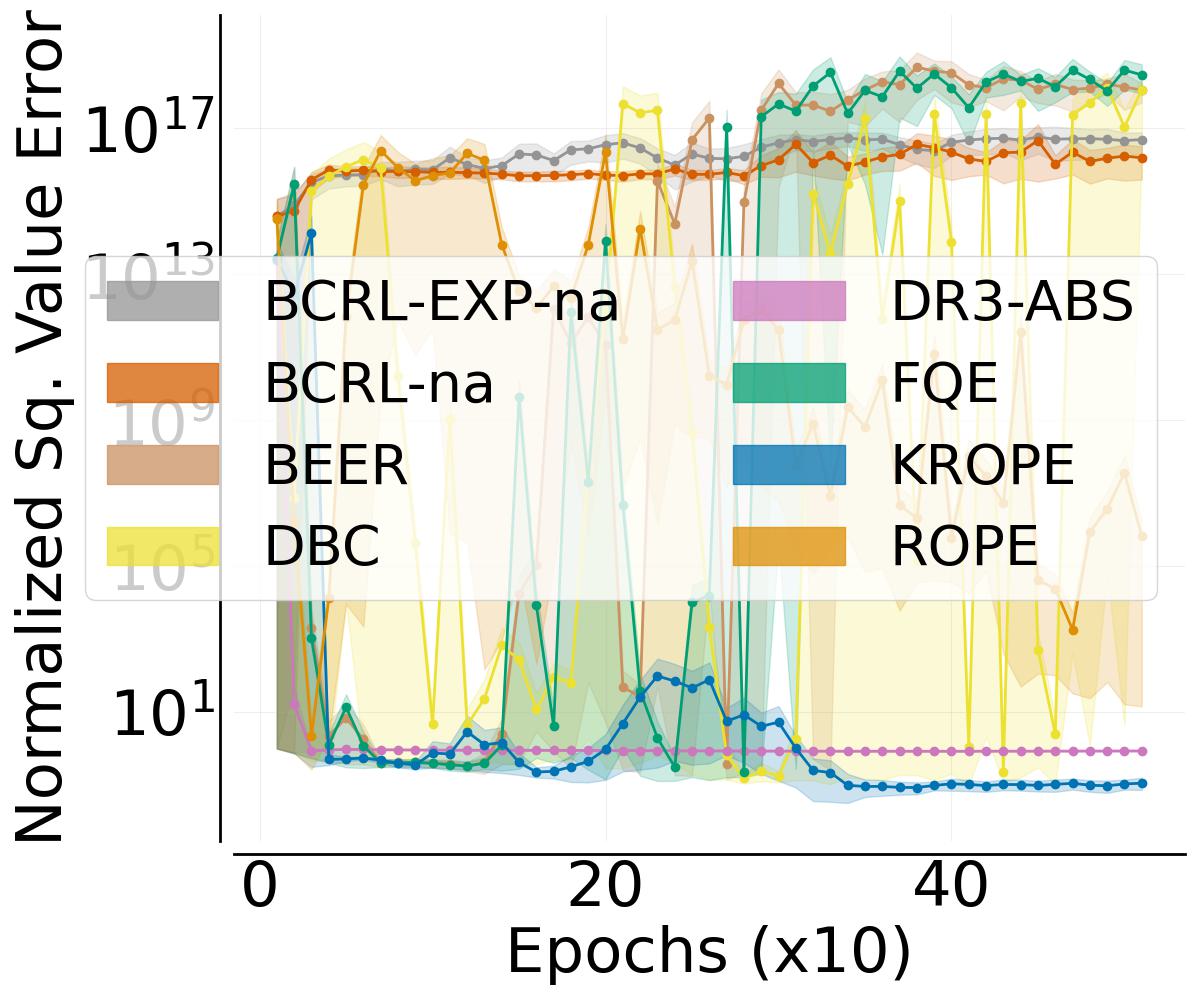}}
        \subfigure[Hopper medium]{\includegraphics[scale=0.14]{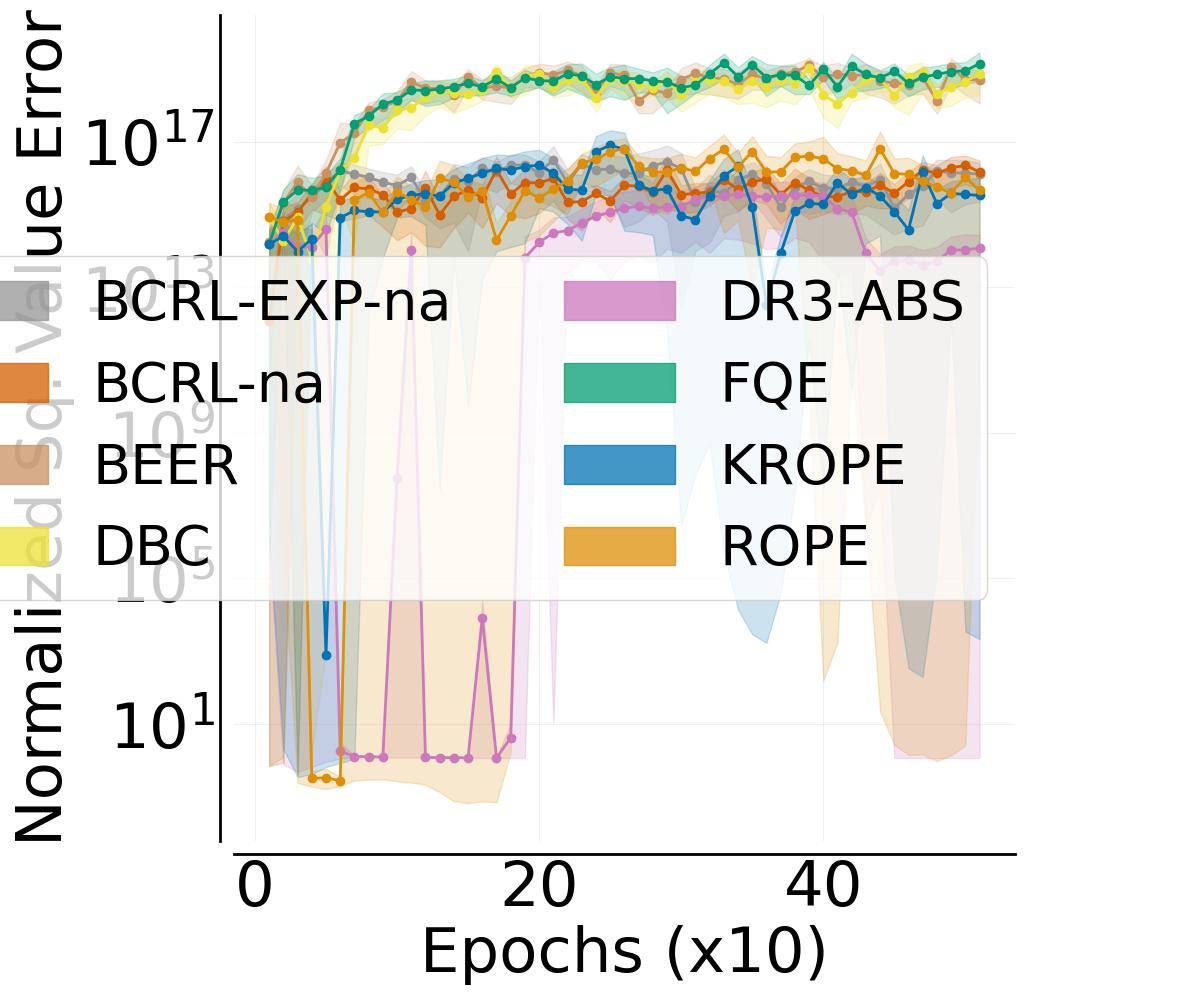}}
        \subfigure[Hopper medium-expert]{\includegraphics[scale=0.14]{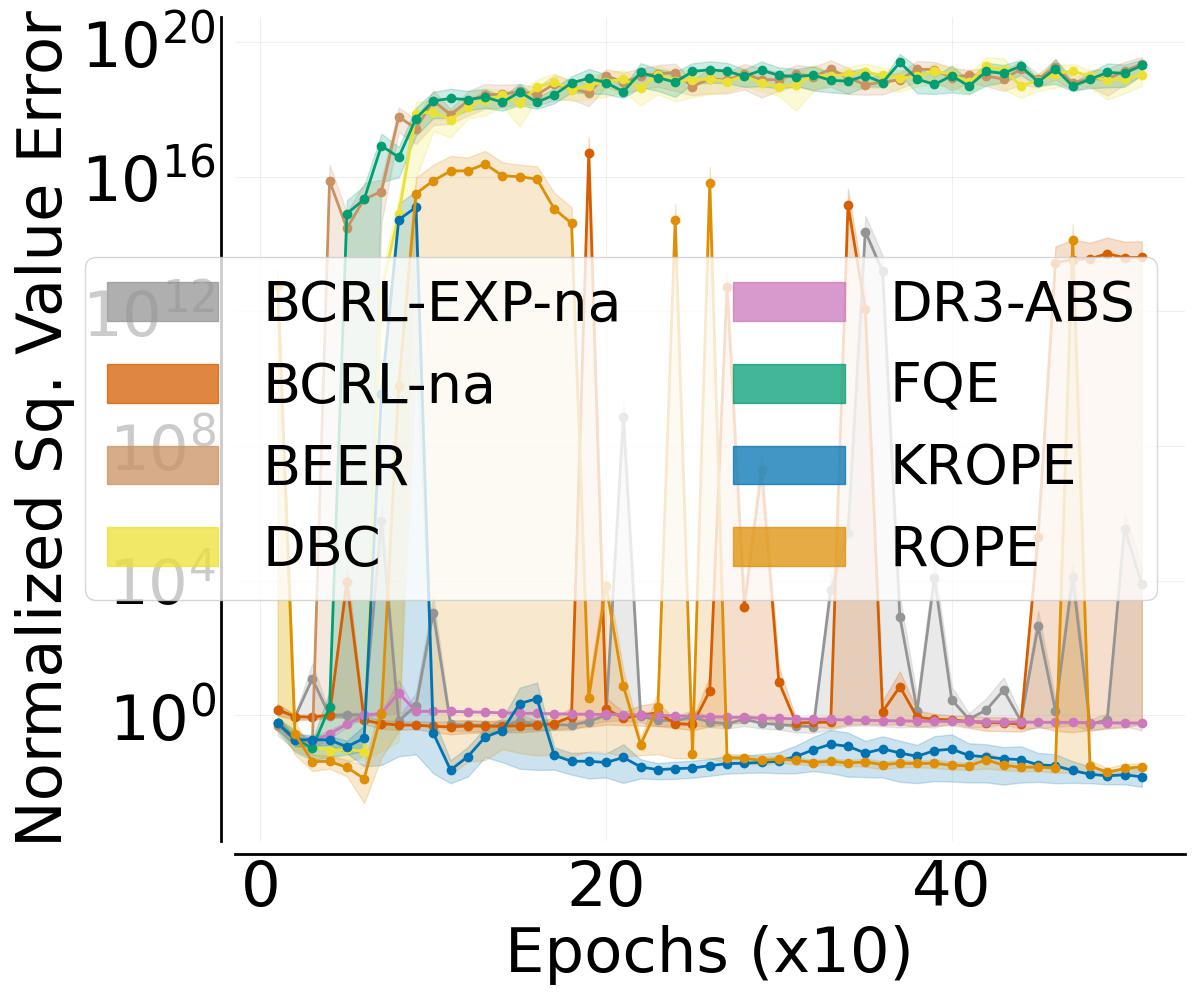}}\\
        \subfigure[Walker random]{\includegraphics[scale=0.14]{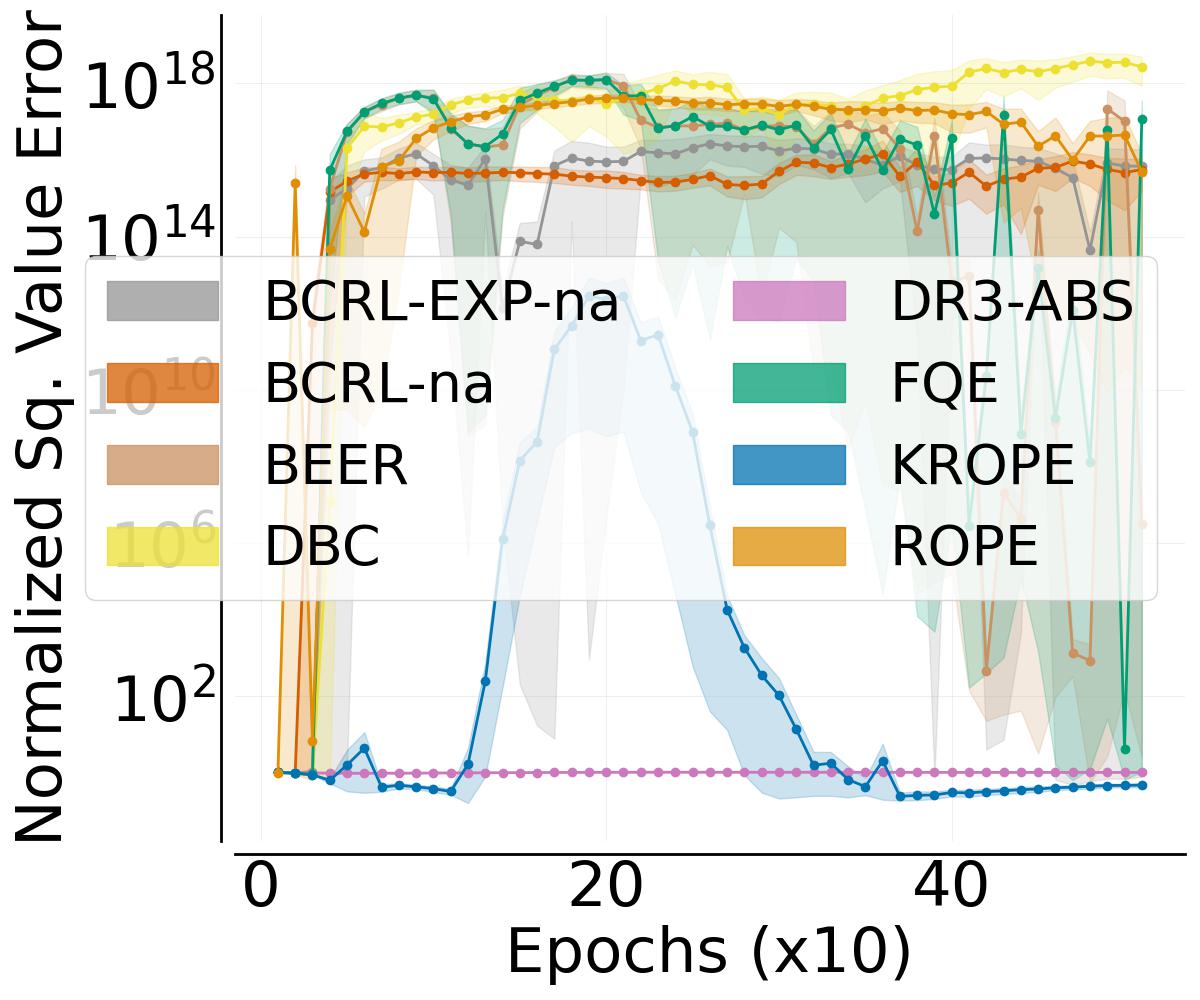}}
        \subfigure[Walker medium]{\includegraphics[scale=0.14]{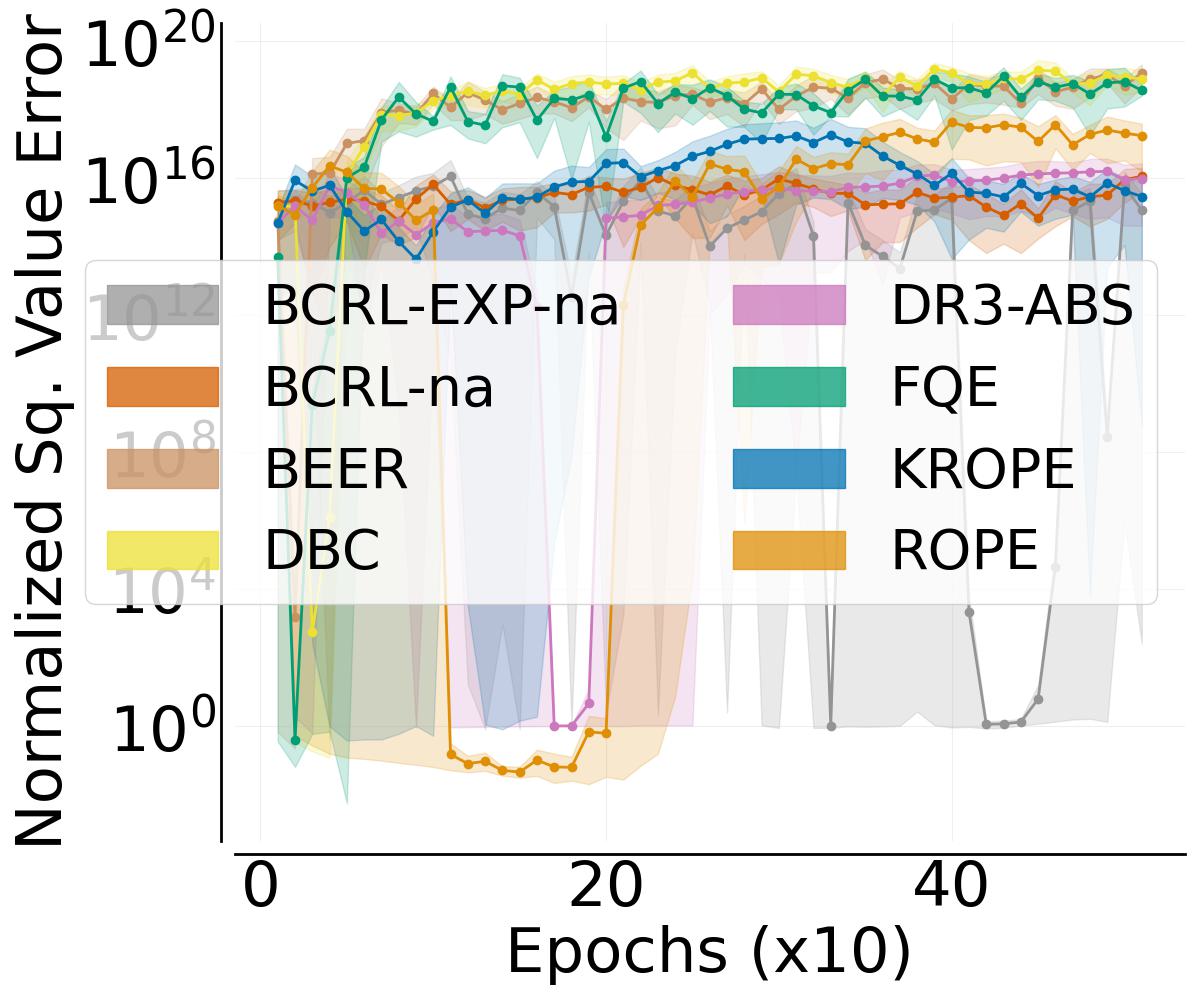}}
        \subfigure[Walker medium-expert]{\includegraphics[scale=0.14]{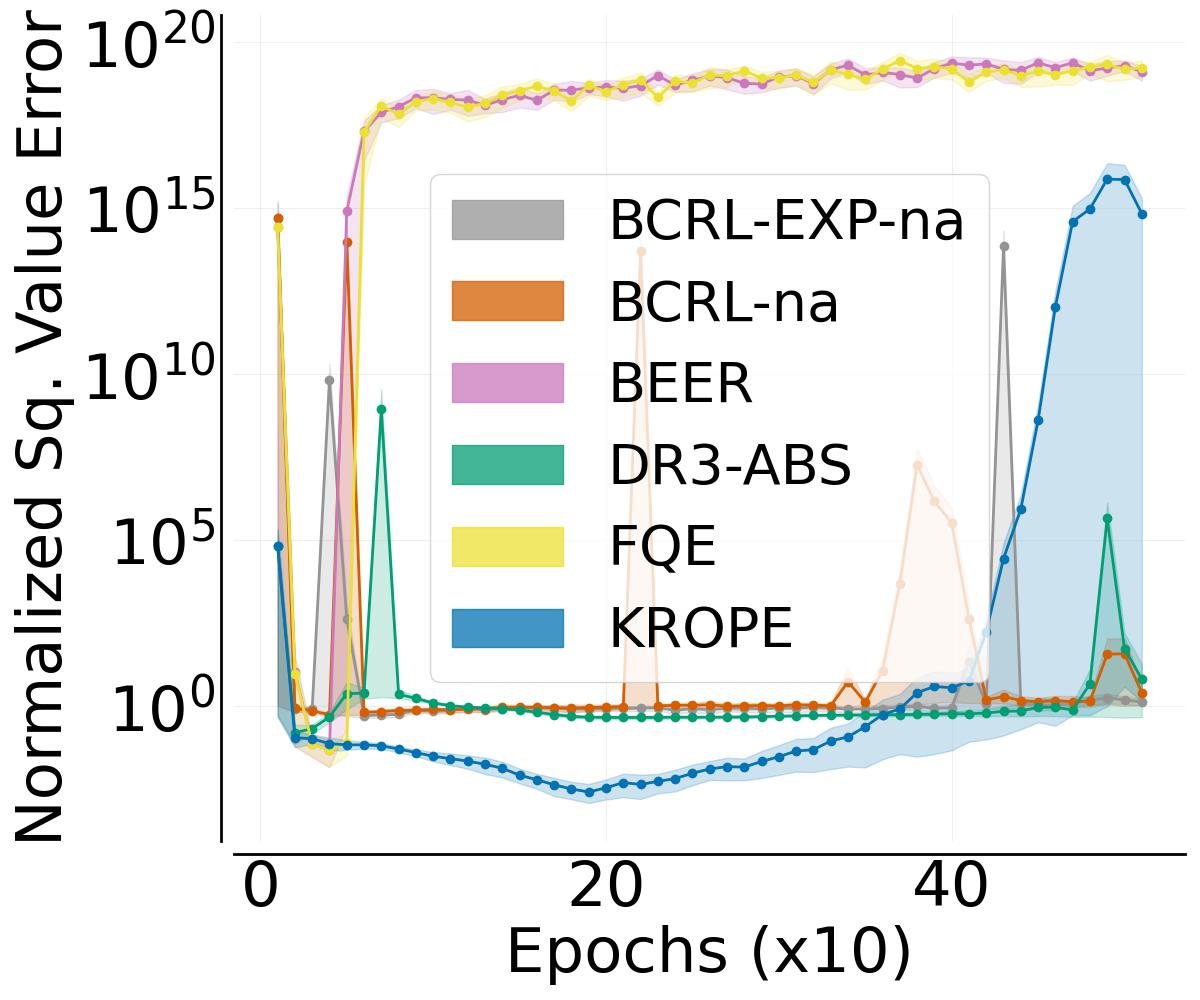}}
    \caption{\footnotesize Normalized squared value error achieved by \textsc{lspe} when using a particular representation vs. representation training epochs on the \textsc{d4rl} datasets. \textsc{lspe} estimates are computed every $10$ epochs. Results are averaged over $20$ trials and the shaded region is the $95\%$ confidence interval. Lower and less erratic is better.}
    \label{fig:d4rl_ope}
\end{figure*}

On the \textsc{d4rl} dataset, we reach the similar conclusions: \textsc{krope} is effective in producing stable and accurate \textsc{ope} estimates. However, in $3/9$ instances, \textsc{krope} does diverge. This divergence is likely related to the discussion in
Section~\ref{sec:limit} and Section~\ref{app:exp_krope_div}. Recall that \textsc{krope} is a semi-gradient method, which does not optimize any objective function and is susceptible to divergence \citep{Feng2019AKL, sutton_rlbook_2018}. So while \textsc{krope} \emph{representations} stabilize value function learning, \textsc{krope}'s \emph{learning algorithm} may diverge and not converge to \textsc{krope} representations. However, regardless of this result, \textsc{krope} does improve the stability and accuracy of \textsc{fqe} in all cases.

\subsubsection{Stability-related Analysis on Custom Datasets}

In this section, We include the remaining stability-related metric analysis that was deferred from the main paper.

\paragraph{Hyperparameter Sensitivity.}  In this subsection, we include all the remaining results related to the stability metrics for all environments.

\begin{figure*}[hbtp]
    \centering
        \subfigure[CheetahRun]{\includegraphics[scale=0.14]{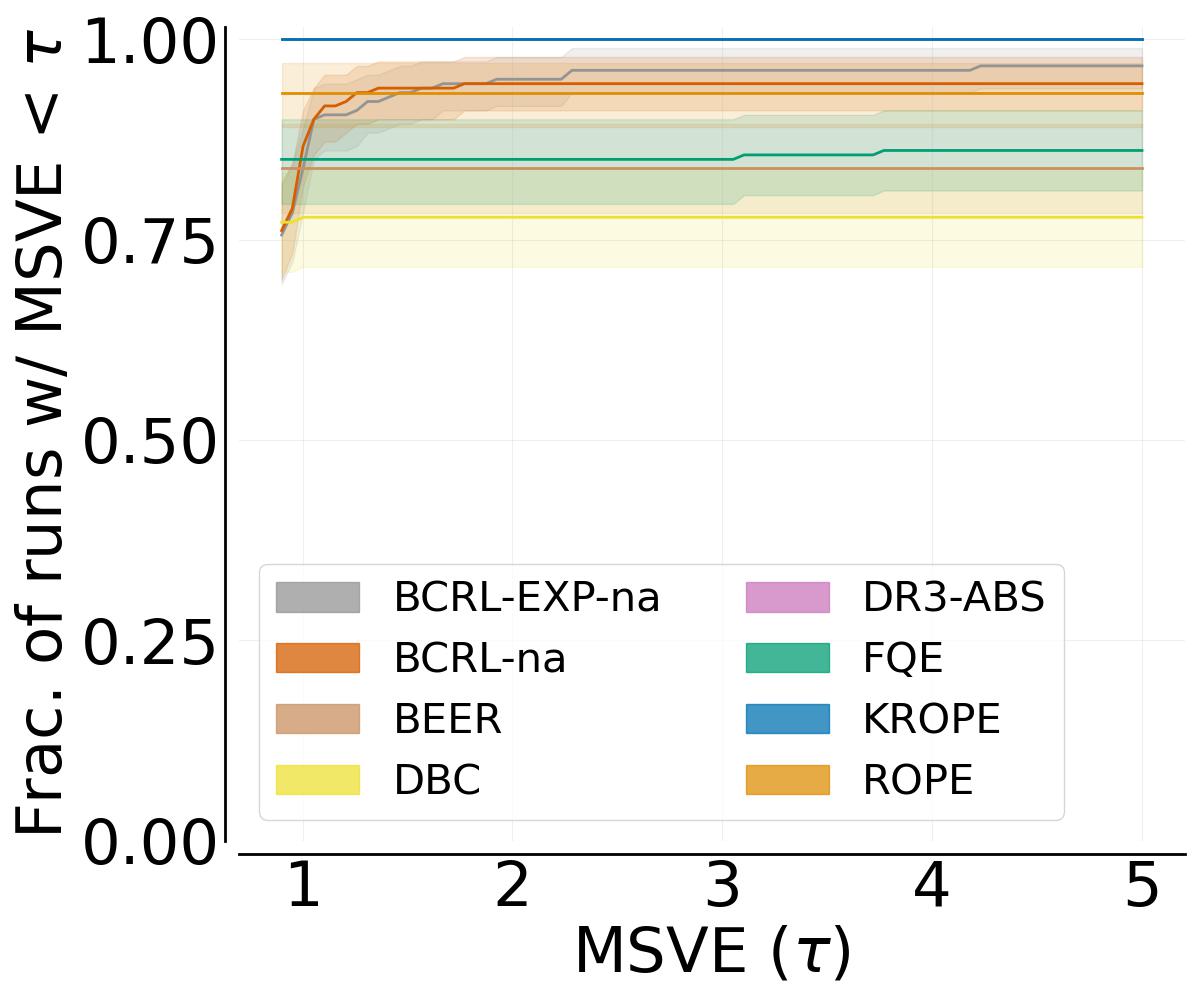}}
        \subfigure[FingerEasy]{\includegraphics[scale=0.14]{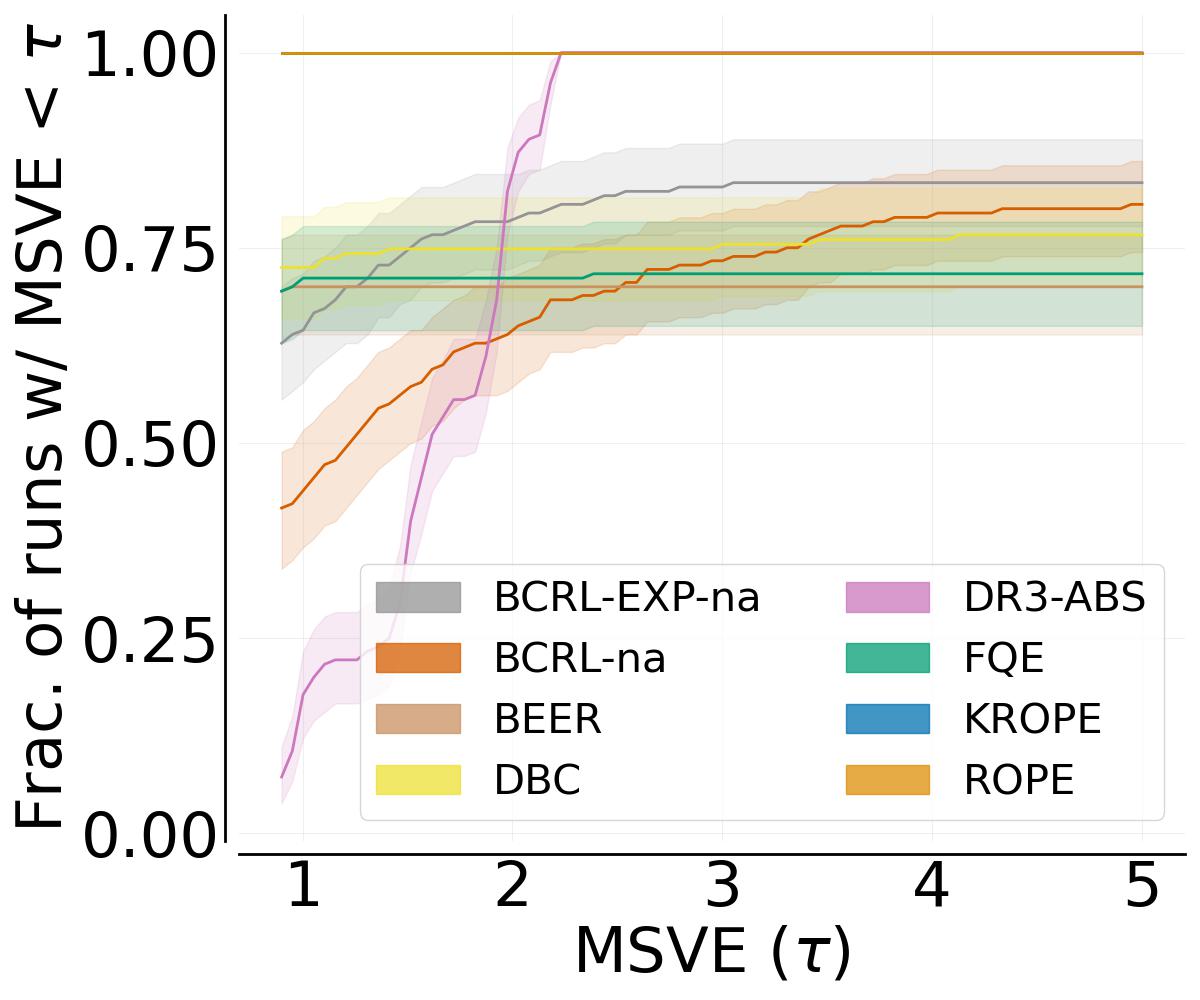}}
    \caption{\footnotesize Hyperparameter sensitivity on different environments as a function of training epochs; larger area under the curve is better. All results are averaged over $20$ trials for each hyperparameter configuration and shaded region is the $95\%$ confidence interval.}
\end{figure*}

\paragraph{Bellman completeness.} Another metric that is associated with stability is Bellman completeness (\textsc{bc}) \citep{chang_learning_2022, wang_instabilities_2021}.  We find that \textsc{krope} is approximately Bellman complete even though it does not explicitly optimize for it; this finding aligns with our Theorem~\ref{thm:krope_bcrl}. While \textsc{bc} is difficult to approximate, we can minimize the proxy metric introduced given in Equation (\ref{eq:bcrl_proxy}) \citep{chang_learning_2022}:

\begin{equation}
    \mathcal{L}(M, \rho) := \E_\mathcal{D}\left\|
    \begin{bmatrix}
           M \\
           \rho^\top
    \end{bmatrix}\phi(s,a)
    -
    \begin{bmatrix}
           \gamma \E_{s'\sim P(\cdot|s, a), a'\sim \pieval(\cdot|s')}[\phi(s', a')]\\
           r(s,a)
    \end{bmatrix}
     \right\|^2_2
     \label{eq:bcrl_proxy}
\end{equation}
where $(\rho, M)\in\mathbb{R}^{d\times d}$, $\phi$ is fixed, and $\mathcal{L}(M, \rho) = 0$ indicates Bellman completeness. Given the final learned representation, we compute and report the \textsc{bc} error in Table~\ref{table:bc_measure}. We find that \textsc{krope} is approximately Bellman complete even though it does not explicitly optimize for it; this finding aligns with our Theorem~\ref{thm:krope_bcrl}. We note that \textsc{bcrl-exp} is less Bellman complete since it also includes the exploratory objective in its loss function, which if maximized can reduce the Bellman completeness. While \textsc{bcrl} is more \textsc{bc} than \textsc{bcrl-exp}, we found that it is less \textsc{bc} in general. We attribute this finding due to the difficulty in explicitly optimizing the \textsc{bcrl} objective which involves multiple neural networks ($M, \rho, \phi)$ and multiple loss functions on different scales (reward, self-prediction, log determinant regularization losses). \textsc{krope} can achieve approximate Bellman completeness without these optimization-related difficulties.

\begin{table}[h]
\centering
\begin{tabular}{@{}lllllll@{}} \toprule
  & \multicolumn{5}{c}{Algorithm} \\ 
\cmidrule(r){2-7}
Domain & \textsc{bcrl} + \textsc{exp} & \textsc{bcrl}  & \textsc{beer} & \textsc{dr3} & \textsc{fqe}  & \textsc{krope} (ours) \\ \midrule
CartPoleSwingUp  & $0.4 \pm 0.1$ & $0.2\pm 0.1$ &$0.1 \pm 0.0$ & $0.0 \pm 0.0$ & $0.1 \pm 0.0$ & $0.0 \pm 0.0$\\
CheetahRun  & $3.3 \pm 0.6$ & $2.4\pm 0.5$ &$0.7 \pm 0.0$ & $0.0 \pm 0.0$ & $0.7 \pm 0.0$ & $0.2 \pm 0.0$\\
FingerEasy  & $1.3 \pm 0.6$ &$0.7\pm 0.2$&$0.9 \pm 0.0$ & $137.0\pm 4.4$ & $0.9 \pm 0.0$ & $0.2 \pm 0.0$\\
WalkerStand  & $10.4 \pm 2.0$ & $0.3 \pm 0.1$ &$0.5 \pm 0.1$ & $66.1 \pm 0.6$ & $0.6 \pm 0.0$ & $0.1 \pm 0.0$\\
\bottomrule
\end{tabular}
\caption{\footnotesize Bellman completeness measure for all algorithms across all domains. Results are averaged across $20$ trials and the deviation shown is the $95\%$ confidence interval. Values are rounded to the nearest single decimal.} 
\label{table:bc_measure}
\end{table}

\subsubsection{Using \textsc{fqe} Directly for \textsc{ope}}
 In our main empirical section (Section~\ref{sec:emp}), we used \textsc{fqe} as a representation learning algorithm on our custom datasets. We adopted the linear evaluation protocol, i.e., an approach of analyzing the penultimate features of the action-value function network and applied \textsc{lspe} on top of these features for \textsc{ope}. This protocol enabled us to better understand the nature of the learned features. 

For the sake of completeness, we present results of \textsc{fqe} as an \textsc{ope} algorithm where the action-value network is directly used to estimate the performance of $\pieval$. We present the results in Figures~\ref{fig:fqe_ope} and ~\ref{fig:fqe_ope_hparam}. As done in Section~\ref{sec:emp}, we evaluate the performance of \textsc{fqe} and \textsc{krope} based on how they shape the penultimate features of the action-value network. However, when conducting \textsc{ope}, we evaluate two variants: 1) using \textsc{lspe} (\textsc{-l}) and 2) using the same end-to-end \textsc{fqe} action-value network (\textsc{-e2e}). 

From Figure~\ref{fig:fqe_ope}, we find that there are hyperparameter configurations that can outperform the \textsc{krope} variants. However, based on Figure~\ref{fig:fqe_ope_hparam}, we find both \textsc{krope} variants are significantly more robust to hyperparameter tuning. This latter result suggests that \textsc{krope} does improve stability with respect to the hyperparameter sensitivity metric as well.

Regardless of \textsc{fqe}'s hyperparameter sensitivity, it is still interesting to observe that when \textsc{fqe} is used as an \textsc{ope} algorithm, it produces reasonably accurate \textsc{ope} estimates. It even outperforms the \textsc{fqe+krope} combination. The primary difference between using \textsc{fqe} for \textsc{ope} vs. \textsc{fqe} features and \textsc{lspe} for \textsc{ope} is in how the last linear layer is trained. The former is trained by gradient descent while the latter is trained with the iterative \textsc{lspe} algorithm on the fixed features. An interesting future direction will be to explore the learning dynamics of these two approaches.

On a related note, we point out that the training dynamics of \textsc{fqe} are still not well-understood. For example, \cite{fujimoto2022why} show that the \textsc{fqe} loss function poorly correlates with value error. That is, the \textsc{fqe} loss can be high but value error (and \textsc{ope} error) can be low.

\begin{figure*}[hbtp]
    \centering
        \subfigure[CartPoleSwingUp]{\includegraphics[scale=0.14]{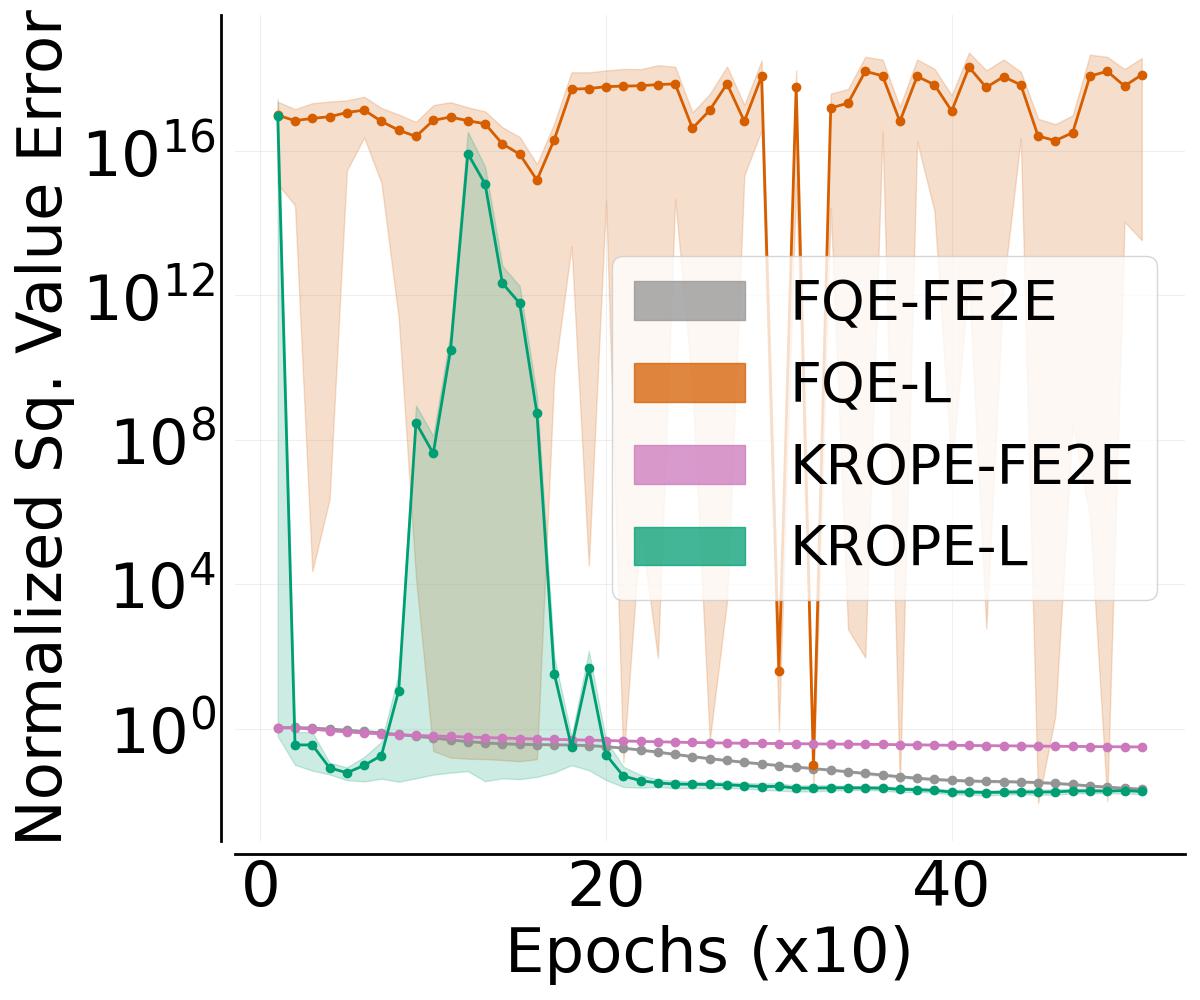}}
        \subfigure[FingerEasy]{\includegraphics[scale=0.14]{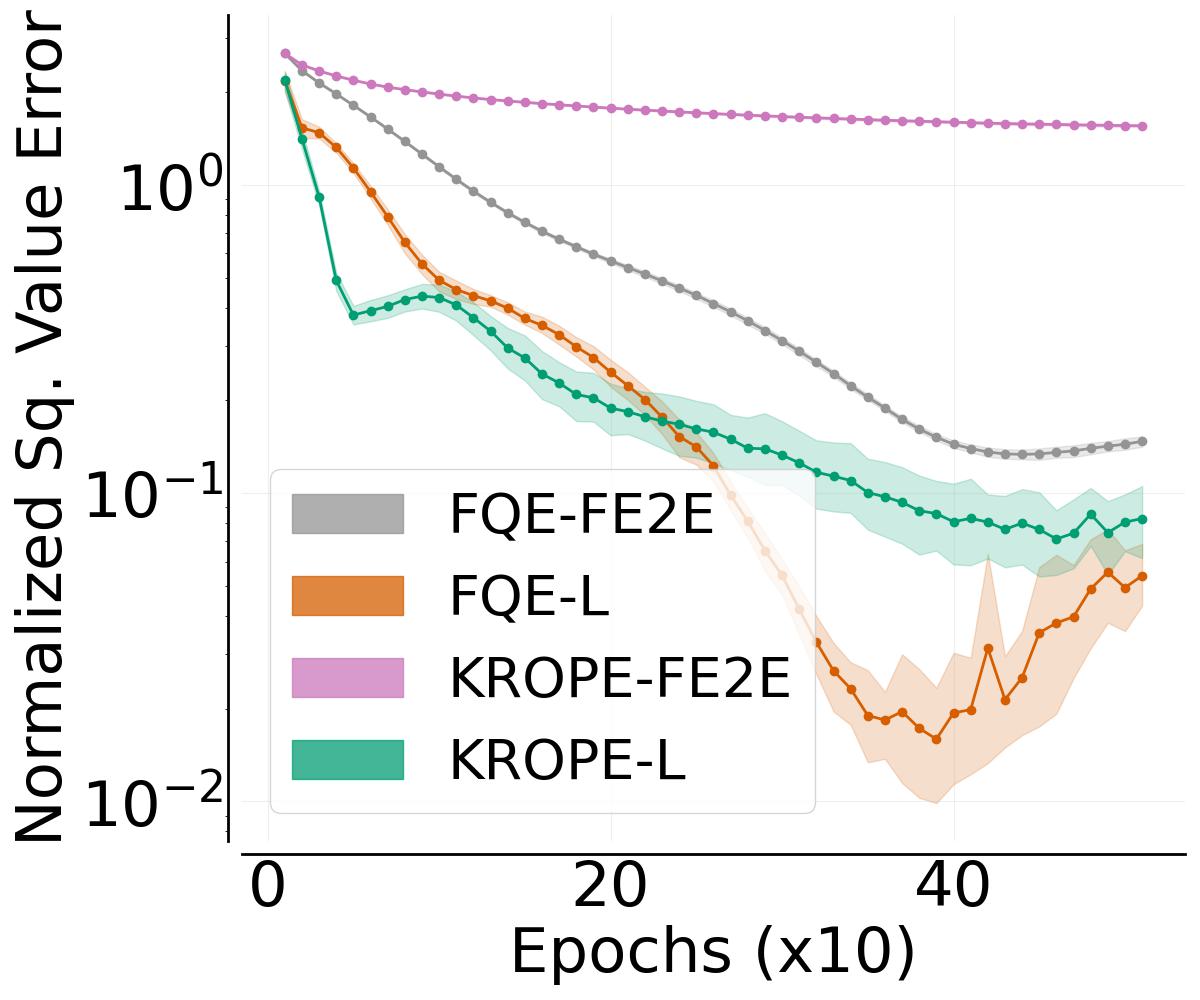}}\\
        \subfigure[CheetahRun]{\includegraphics[scale=0.14]{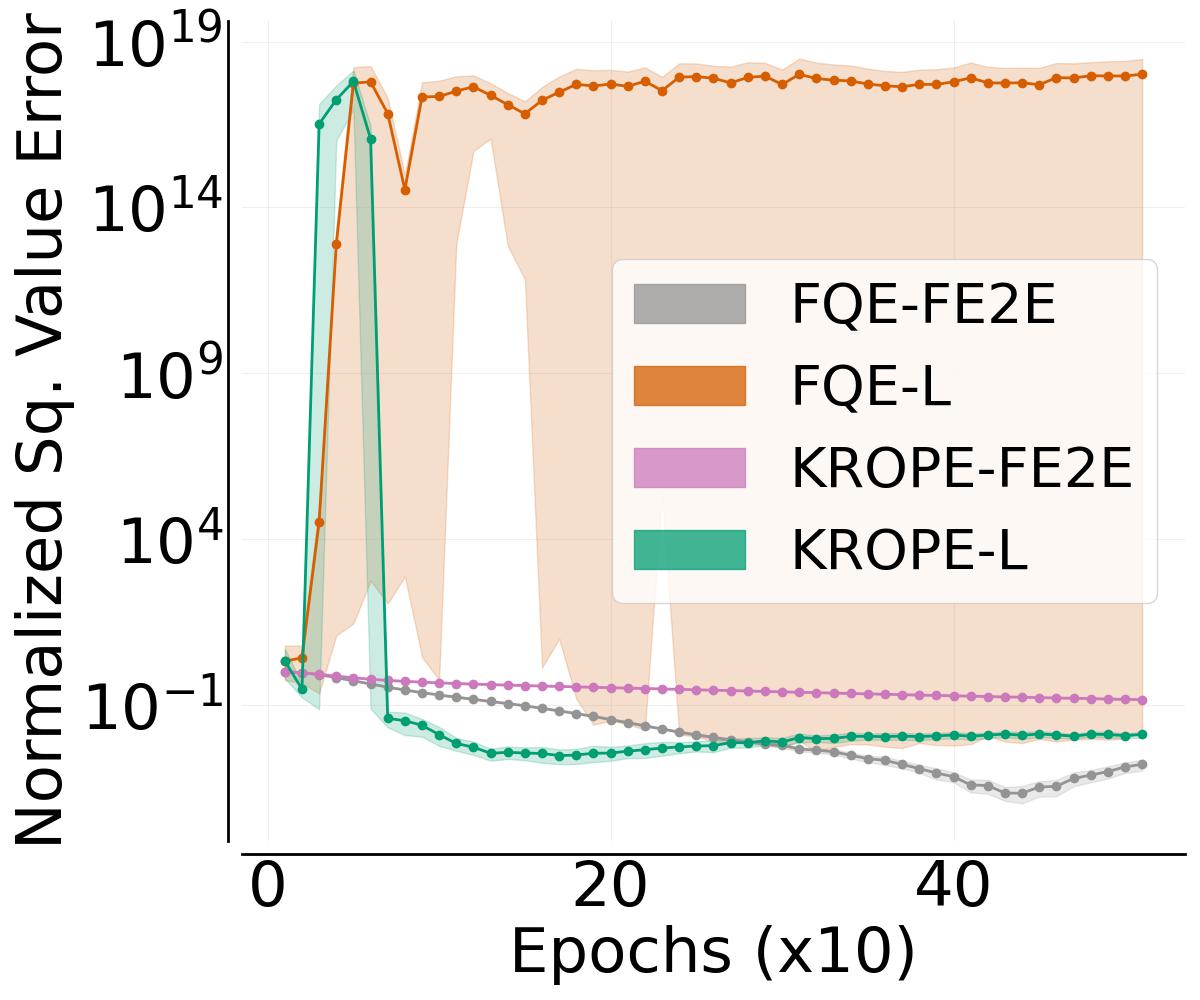}}
        \subfigure[WalkerStand]{\includegraphics[scale=0.14]{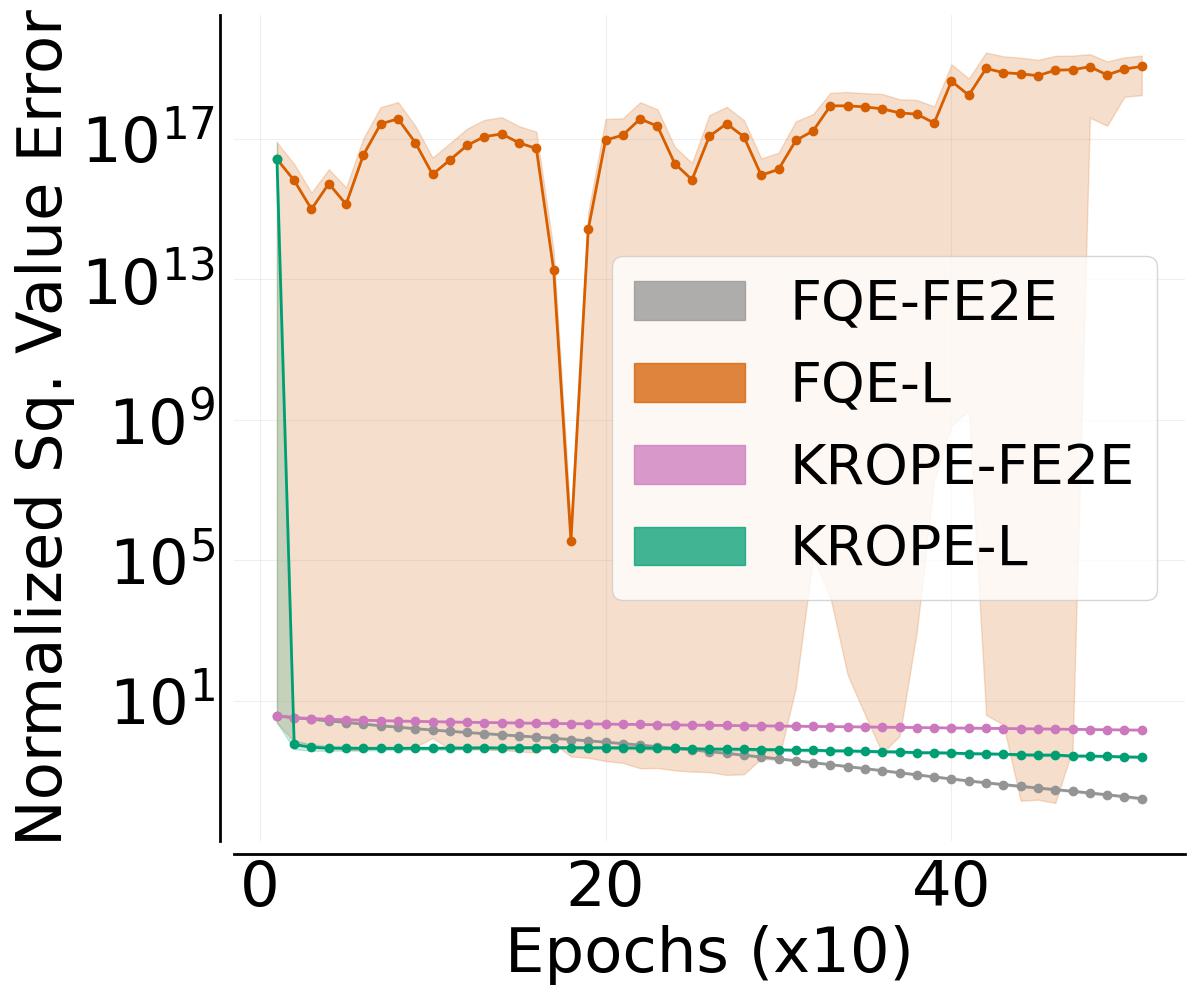}}
    \caption{\footnotesize Normalized squared value error achieved by \textsc{lspe} (\textsc{-l}) and \textsc{fqe} (\textsc{-e2e}) evaluated every $10$ epochs of training. Results are averaged over $20$ trials and the shaded region is the $95\%$ confidence interval. Lower and less erratic is better.}
    \label{fig:fqe_ope}
\end{figure*}

\begin{figure*}[hbtp]
    \centering
        \subfigure[CartPoleSwingUp]{\includegraphics[scale=0.14]{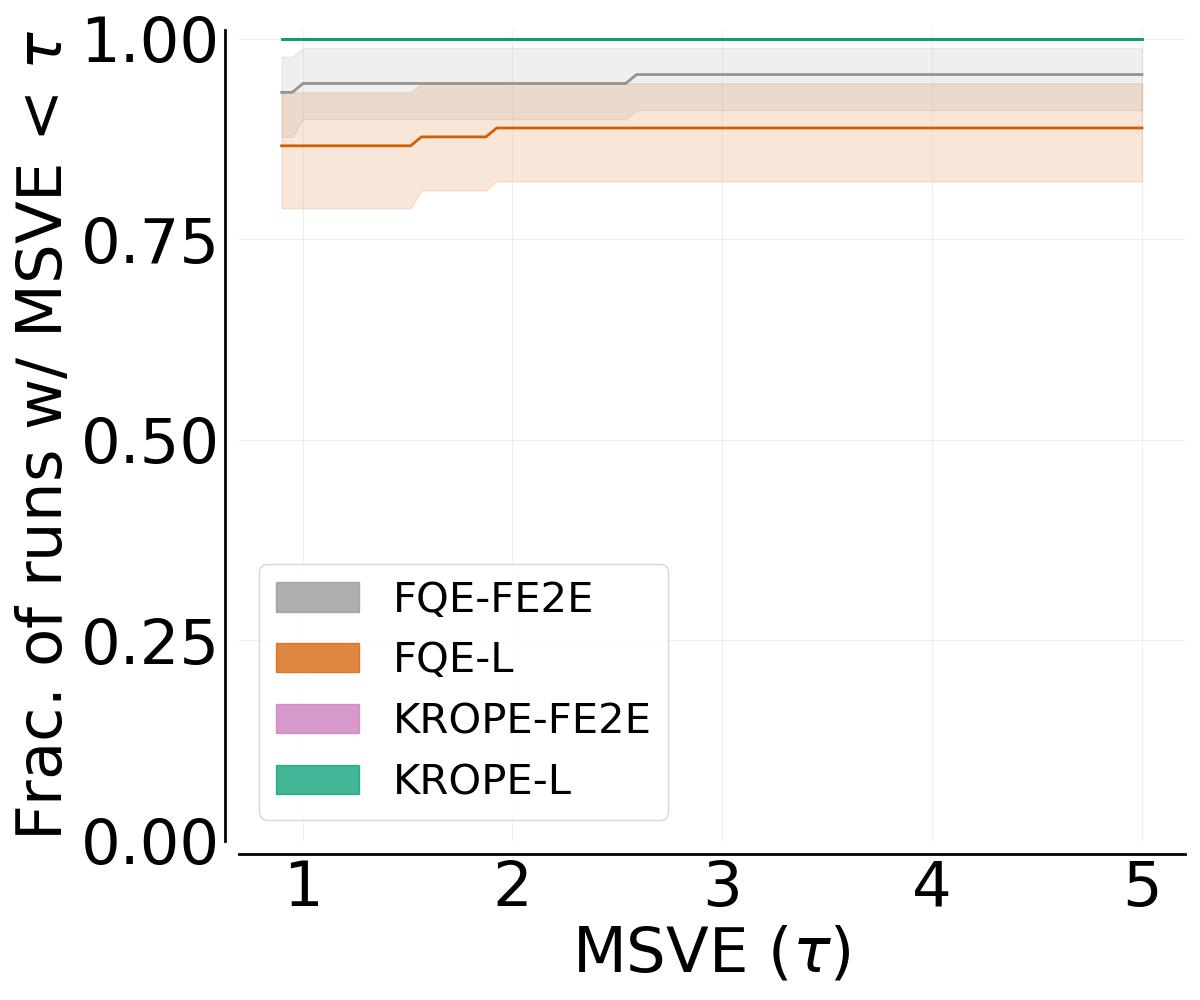}}
        \subfigure[FingerEasy]{\includegraphics[scale=0.14]{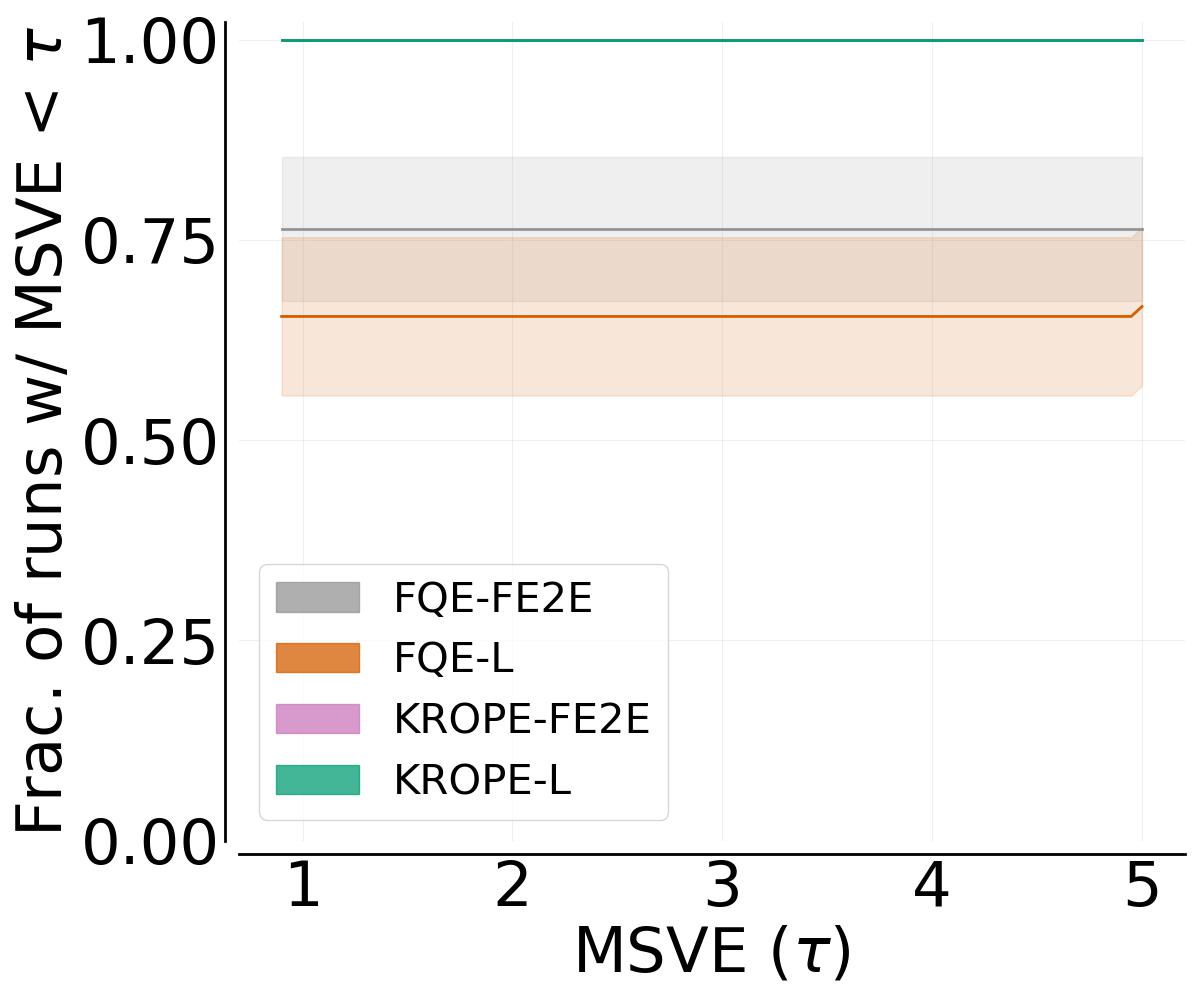}}\\
        \subfigure[CheetahRun]{\includegraphics[scale=0.14]{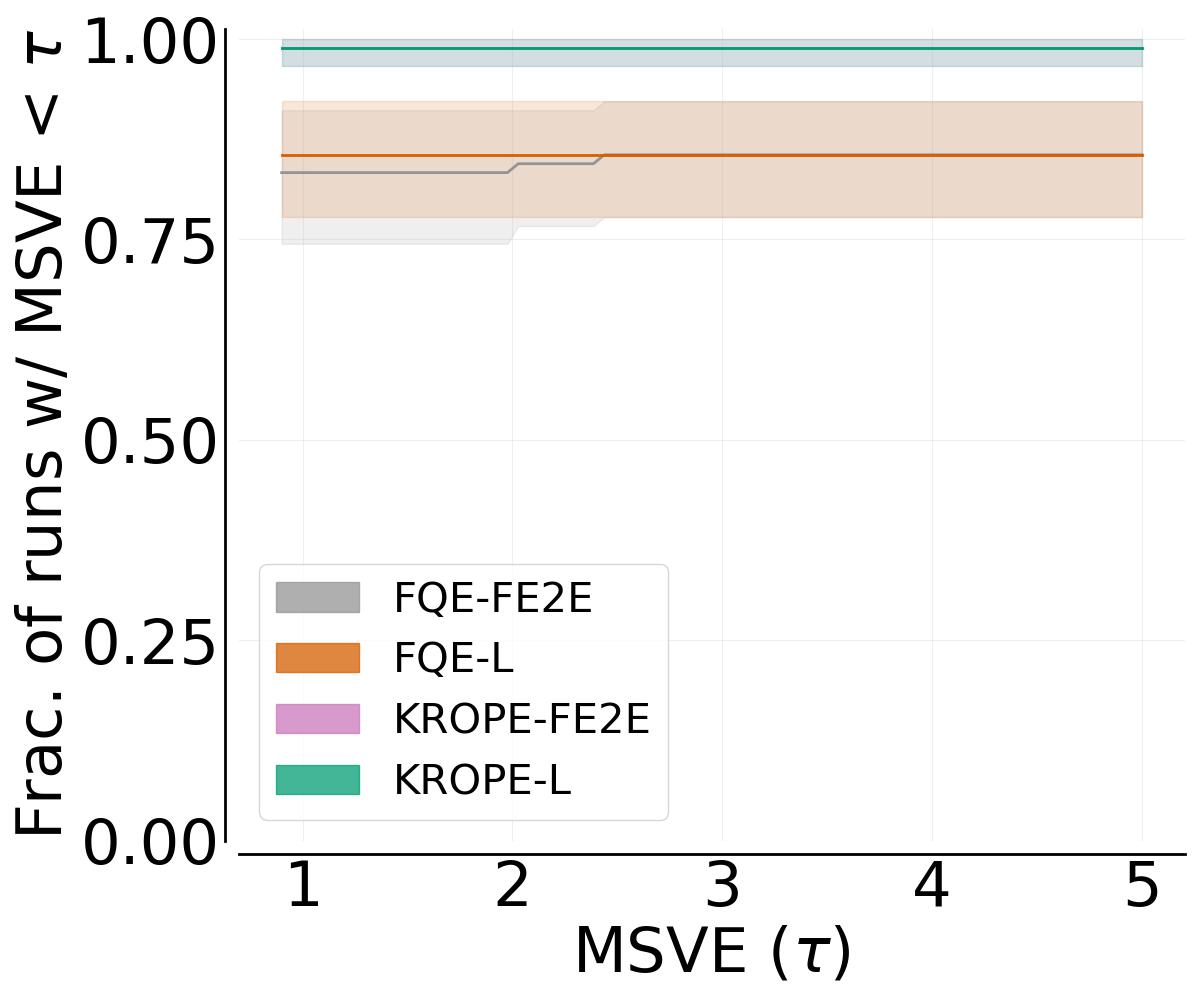}}
        \subfigure[WalkerStand]{\includegraphics[scale=0.14]{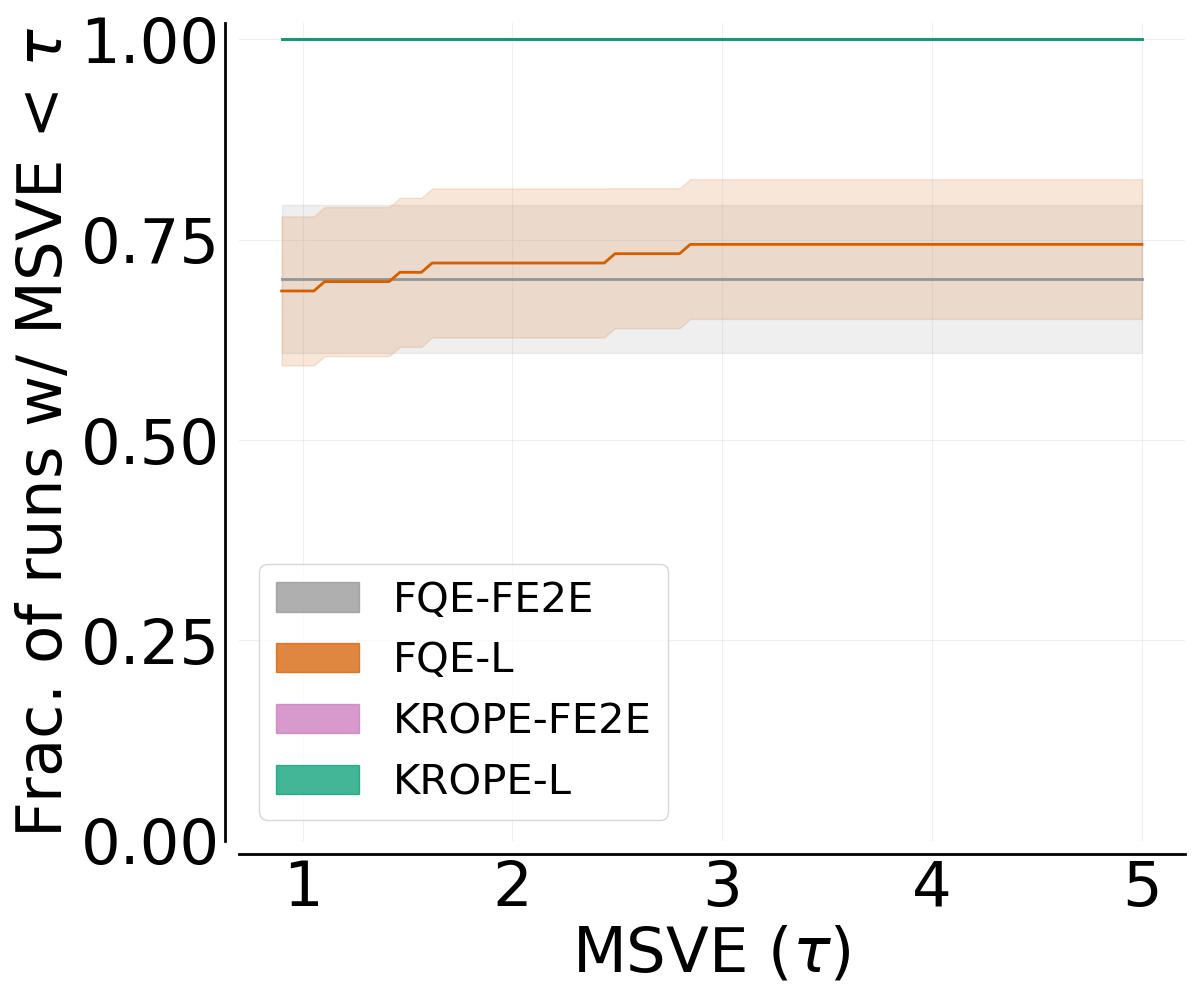}}
    \caption{\footnotesize Hyperparameter sensitivity on different environments as a function of training epochs; larger area under the curve is better. All results are averaged over $10$ trials for each hyperparameter configuration and shaded region is the $95\%$ confidence interval. We tuned the  hyperparameters discussed in Appendix~\ref{app:emp_setup}. \textsc{krope-fe2e} overlaps with \textsc{krope-l}.}
    \label{fig:fqe_ope_hparam}
\end{figure*}

\end{document}